\definecolor{Gold}{rgb}{1,0.84,0}
\definecolor{Bronze}{rgb}{0.69,0.55,0.34}
\newcommand{\cmark}{{\color{OliveGreen}\ding{51}}}
\newcommand{\xmark}{{\color{BrickRed}\ding{55}}}
\definecolor{chaptercolor}{HTML}{1A254B}
\definecolor{darkblue}{HTML}{1A254B}
\definecolor{linkcolor}{HTML}{2B50AA}
\definecolor{citecolor}{HTML}{2B50AA}
\definecolor{lightlinkcolor}{HTML}{9A8F97}
\definecolor{darklinkcolor}{HTML}{1A254B}
\definecolor{pink}{HTML}{E05F60}
\definecolor{lightblue}{HTML}{A7BED3}
\definecolor{red}{HTML}{F2545B}
\definecolor{blue}{HTML}{2b50aa}
\theoremstyle{plain}
\newtheorem{theorem}{Theorem}[section]
\newtheorem{lemma}[theorem]{Lemma}
\newtheorem{corollary}[theorem]{Corollary}
\theoremstyle{definition}
\newtheorem{definition}[theorem]{Definition}
\newtheorem{assumption}[theorem]{Assumption}
\theoremstyle{remark}
\Crefname{assumption}{Assumption}{Assumptions}
\crefname{assumption}{Assumption}{Assumptions}
\renewcommand{\paragraph}{%
  \@startsection{paragraph}{4}%
  {\z@}{0ex \@plus 0ex \@minus 0ex}{-1em}%
  {\normalfont\normalsize\bfseries}%\
}
\newlist{lemenum}{enumerate}{1} % should only occur inside lem env.
\setlist[lemenum]{label=(\roman*),ref=\thelemma\,(\roman*),topsep=0pt}
\Crefname{lemenumi}{Lemma}{Lemmas}
\newlist{corenum}{enumerate}{1} % should only occur inside cor env.
\setlist[corenum]{label=(\roman*),ref=\thecorollary\,(\roman*),topsep=0pt}
\Crefname{corenumi}{Corollary}{Corollaries}
\newlist{thmenum}{enumerate}{1} % should only occur inside thm env.
\setlist[thmenum]{label=(\roman*),ref=\thetheorem\,(\roman*),topsep=0pt}
\Crefname{thmenumi}{Theorem}{Theorems}
\newlist{propenum}{enumerate}{1} % should only occur inside prop env.
\setlist[propenum]{label=(\roman*),ref=\thedefinition\,(\roman*),topsep=0pt}
\Crefname{propenumi}{Property}{Properties}
\newlist{assenum}{enumerate}{1} % should only occur inside assumption env.
\setlist[assenum]{label=(\roman*),ref=\theassumption\,(\roman*),topsep=0pt}
\Crefname{assenumi}{Assumption}{Assumptions}
\pgfplotsset{compat=newest}
\NewDocumentCommand{\incfig}{mo}{
  \begin{center}
    \IfValueT{#2}{\def\svgwidth{#2}}{\def\svgwidth{\columnwidth}}
    \import{./figures/}{#1.pdf_tex}
  \end{center}
}
\NewDocumentCommand{\incplt}{O{\columnwidth}m}{%
  \begin{center}
    \adjustbox{width=#1}{\import{./plots/output/}{#2.pgf}}
  \end{center}
}
\newcommand{\neorl}{\textcolor{black}{\textsc{NeoRL}}\xspace}
\newcommand{\ombrl}{\textcolor{black}{\textsc{SOMBRL}}\xspace}
\DeclareFontFamily{U}{mathb}{\hyphenchar\font45}
\DeclareFontShape{U}{mathb}{m}{n}{
      <5> <6> <7> <8> <9> <10> gen * mathb
      <10.95> mathb10 <12> <14.4> <17.28> <20.74> <24.88> mathb12
      }{}
\DeclareSymbolFont{mathb}{U}{mathb}{m}{n}
\DeclareMathSymbol{\Asterisk}      {2}{mathb}{"06}
\newcommand*{\abs}[1]{| #1 |}
\NewDocumentCommand{\norm}{sm}{\IfBooleanTF{#1}{\|#2\|}{\left\| #2 \right\|}}
\newcommand{\setmath}[1]{\left\{#1\right\}}
\DeclareMathOperator*{\defeq}{\smash{\overset{\mathrm{def}}{=}}}
\DeclareMathOperator*{\argmax}{arg\,max}
\DeclareMathOperator*{\argmin}{arg\,min}
\DeclarePairedDelimiter\parentheses{(}{)}
\DeclarePairedDelimiter\brackets{[}{]}
\newcommand{\R}{\mathbb{R}}
\newcommand{\E}{\mathbb{E}}
\newcommand{\Rzero}{\mathbb{R}_{\geq 0}}
\renewcommand{\vec}[1]{{\bm{#1}}}
\newcommand{\mat}[1]{\bm{#1}}
\NewDocumentCommand{\fnPr}{}{\mathbb{P}}
\RenewDocumentCommand{\Pr}{om}{\fnPr\IfValueT{#1}{_{#1}}\parentheses*{#2}}
\NewDocumentCommand{\Prsm}{om}{\fnPr\IfValueT{#1}{_{#1}}\parentheses{#2}}
\RenewDocumentCommand{\H}{mo}{\mathrm{H}\IfValueTF{#2}{\!\left[#1\ \middle|\ #2\right]}{\brackets*{#1}}}
\NewDocumentCommand{\Hsm}{mo}{\mathrm{H}\IfValueTF{#2}{[#1 \mid #2]}{\brackets{#1}}}
\NewDocumentCommand{\I}{mmo}{\mathrm{I}\IfValueTF{#3}{\!\left(#1;#2\ \middle|\ #3\right)}{\parentheses*{#1; #2}}}
\NewDocumentCommand{\Ism}{mmo}{\mathrm{I}\IfValueTF{#3}{(#1;#2 \mid #3)}{\parentheses{#1; #2}}}
\NewDocumentCommand{\ExpVal}{somo}{\ensuremath{\mathbb{E}\IfValueT{#2}{_{#2}}{} \IfBooleanTF{#1}{#3}{\IfValueTF{#4}{\!\left[#3\ \middle|\ #4\right]}{\brackets*{#3}}}}}
\NewDocumentCommand{\Esm}{somo}{\ensuremath{\mathbb{E}\IfValueT{#2}{_{#2}}{} \IfBooleanTF{#1}{#3}{\IfValueTF{#4}{\!\left[#3\ \middle|\ #4\right]}{\brackets{#3}}}}}
\NewDocumentCommand{\Var}{somo}{\mathrm{Var}\IfValueT{#2}{_{#2}}{} \IfBooleanTF{#1}{#3}{\IfValueTF{#4}{\!\left[#3\ \middle|\ #4\right]}{\brackets*{#3}}}}
\NewDocumentCommand{\Varsm}{somo}{\mathrm{Var}\IfValueT{#2}{_{#2}}{} \IfBooleanTF{#1}{#3}{\IfValueTF{#4}{\left[#3\ \middle|\ #4\right]}{\brackets{#3}}}}
\NewDocumentCommand{\Cov}{som}{\mathrm{Cov}\IfValueT{#2}{_{#2}}{} \IfBooleanTF{#1}{#3}{\brackets*{#3}}}
\NewDocumentCommand{\Cor}{som}{\mathrm{Cor}\IfValueT{#2}{_{#2}}{} \IfBooleanTF{#1}{#3}{\brackets*{#3}}}
\NewDocumentCommand{\grad}{e_}{\bm{\nabla}\IfValueT{#1}{_{\!\!#1}\,}}
\RenewDocumentCommand{\det}{m}{\left| #1 \right|}
\NewDocumentCommand{\tr}{m}{\mathrm{tr}\;#1}
\NewDocumentCommand{\diag}{som}{\mathrm{diag}\IfValueT{#2}{_{#2}}{}\,#3}
\NewDocumentCommand{\N}{somm}{\mathcal{N}\IfBooleanTF{#1}{\left(}{(}\IfValueT{#2}{#2;}{} #3, #4\IfBooleanTF{#1}{\right)}{)}}
\NewDocumentCommand{\GP}{omm}{\mathcal{GP}(\IfValueT{#1}{#1;}{} #2, #3)}
\newcommand{\vzero}{\vec{0}}
\newcommand{\va}{\vec{a}}
\newcommand{\vc}{\vec{c}}
\newcommand{\vf}{\vec{f}}
\newcommand{\vk}{\vec{k}}
\newcommand{\vm}{\vec{m}}
\newcommand{\vu}{\vec{u}}
\newcommand{\vw}{\vec{w}}
\newcommand{\vx}{\vec{x}}
\newcommand{\vs}{\vec{s}}
\newcommand{\vy}{\vec{y}}
\newcommand{\vz}{\vec{z}}
\newcommand{\vmu}{\bm{\mu}}
\newcommand{\vpi}{\bm{\pi}}
\newcommand{\vsigma}{\bm{\sigma}}
\newcommand{\mI}{\mat{I}}
\newcommand{\mM}{\mat{M}}
\newcommand{\mK}{\mat{K}}
\newcommand{\mV}{\mat{V}}
\def\setA{{\mathcal{A}}}
\def\setD{{\mathcal{D}}}
\def\setH{{\mathcal{H}}}
\def\setK{{\mathcal{K}}}
\def\setM{{\mathcal{M}}}
\def\setN{{\mathcal{N}}}
\def\setO{{\mathcal{O}}}
\def\setU{{\mathcal{U}}}
\def\setW{{\mathcal{W}}}
\def\setX{{\mathcal{X}}}
\def\setZ{{\mathcal{Z}}}
\DeclareMathOperator{\determinant}{det}
\def\valpha{{\bm{\alpha}}}
\newcommand{\inner}[2]{\left\langle#1, #2\right\rangle}
\DeclarePairedDelimiter{\ceil}{\lceil}{\rceil}
\title{\ombrl: Scalable and Optimistic Model-Based RL}
\author{%
  Bhavya Sukhija \\
  Department of Computer Science\\
  ETH Zurich\\
  \texttt{sukhijab@ethz.ch} \\
    \And
Lenart Treven \\
  Department of Computer Science\\
  ETH Zurich\\
  \texttt{trevenl@ethz.ch} \\
  \And
  Carmelo Sferrazza \\
  Berkeley AI Research\\
  UC Berkeley\\
  \texttt{csferrazza@berkeley.edu} \\
  \And
  Florian Dörfler \\
  Department of Electrical Engineering\\
  ETH Zurich\\
  \texttt{dorfler@ethz.ch} \\
  \And
  Pieter Abbeel \\
  Berkeley AI Research\\
  UC Berkeley\\
  \texttt{pabbeel@berekeley.edu} \\
  \And
  Andreas Krause \\
  Department of Computer Science\\
  ETH Zurich\\
  \texttt{krausea@ethz.ch} \\
  }
\begin{document}
\maketitle
%\twocolumn[
%\icmltitle{Optimism via Intrinsic Rewards: \\ Scalable and Principled Exploration for Model-based Reinforcement Learning}

% It is OKAY to include author information, even for blind
% submissions: the style file will automatically remove it for you
% unless you've provided the [accepted] option to the icml2025
% package.

% List of affiliations: The first argument should be a (short)
% identifier you will use later to specify author affiliations
% Academic affiliations should list Department, University, City, Region, Country
% Industry affiliations should list Company, City, Region, Country

% You can specify symbols, otherwise they are numbered in order.
% Ideally, you should not use this facility. Affiliations will be numbered
% in order of appearance and this is the preferred way.

%\begin{icmlauthorlist}

% You may provide any keywords that you
% find helpful for describing your paper; these are used to populate
% the "keywords" metadata in the PDF but will not be shown in the 

% this must go after the closing bracket ] following \twocolumn[ ...

% This command actually creates the footnote in the first column
% listing the affiliations and the copyright notice.
% The command takes one argument, which is text to display at the start of the footnote.
% The \icmlEqualContribution command is standard text for equal contribution.
% Remove it (just {}) if you do not need this facility.
  % leave blank if no need to mention equal contribution
%\printAffiliationsAndNotice{\icmlEqualContribution} % otherwise use the standard text.

\begin{abstract}
\looseness=-1
We address the challenge of efficient exploration in model-based reinforcement learning (MBRL), where the system dynamics are unknown and the RL agent must learn directly from online interactions. 
We propose \textbf{S}calable and \textbf{O}ptimistic \textbf{MBRL} (\ombrl), 
an approach based on the principle of optimism in the face of uncertainty. \ombrl learns an uncertainty-aware dynamics model and \emph{greedily} maximizes a weighted sum of the extrinsic reward and the agent's epistemic uncertainty.  \ombrl is compatible with any policy optimizers or planners, and 
under common regularity assumptions on the system, we show that \ombrl has sublinear regret for nonlinear dynamics in the (\emph{i}) finite-horizon, (\emph{ii}) discounted infinite-horizon, and (\emph{iii}) non-episodic settings.
Additionally, \ombrl offers a flexible and scalable solution for principled exploration.  We evaluate \ombrl on state-based and visual-control environments, where it displays strong performance across %. Our results empirically show that \ombrl achieves the best performance across 
all tasks and baselines.  We also evaluate \ombrl on a dynamic RC car hardware and show \ombrl outperforms the state-of-the-art, illustrating the benefits of principled exploration for MBRL.
\end{abstract}

\section{Introduction}\label{sec: intro}
\begin{figure*}[ht]
    \centering
    \includegraphics[width=\linewidth]{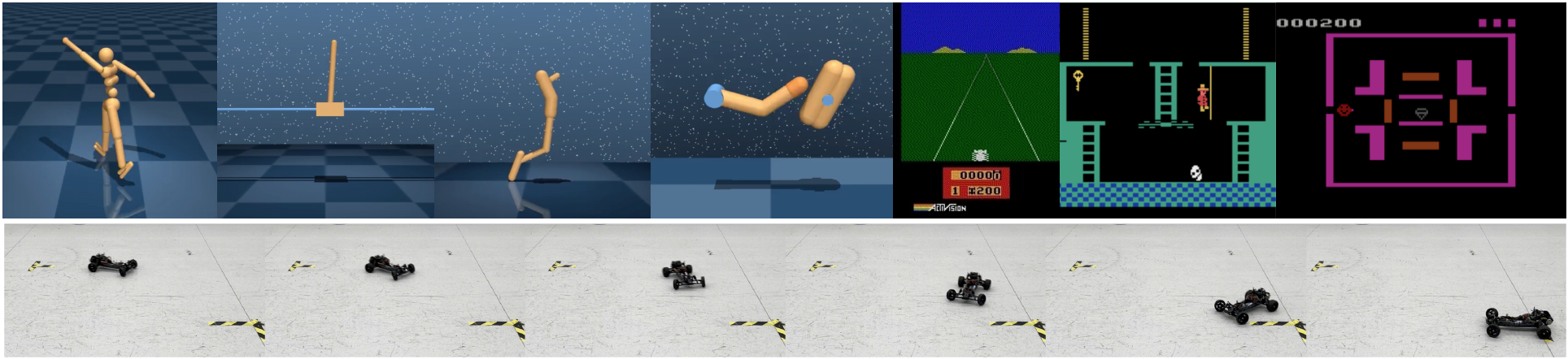}
    \caption{
    \emph{Top:} We showcase scalability of \ombrl on visual control tasks from DMC and Atari.
    \emph{Bottom:} We evaluate \ombrl on a highly dynamic RC car where we learn to perform a complex parking maneuver in only 20 real-world episodes.
    }
    \label{fig:rccar_teaser}
     \vspace{-0.5em}
\end{figure*}
\looseness -1 Reinforcement learning (RL) has been successfully applied to a variety of sequential-decision making problems like games~\citep{silver2017mastering}, robotics~\citep{kober2013reinforcement, tang2025deep}, mobile health interventions~\citep{yom2017encouraging, liao2020personalized}, 
and fine-tuning of large language models~\citep{ouyang2022training}. RL offers a flexible learning paradigm, enabling agents to learn directly by interacting with their environment. However, 
this potential is often not fully realized in practice, as most widely used RL methods~\citep{schulman2017proximal} are highly sample-inefficient. This 
%limits their use mostly to settings where accurate, efficient simulations are possible, but 
mostly rules out their direct application to real-world settings where data is scarce or expensive to acquire. 

Model-based RL approaches~\citep{moerland2023model} offer a more sample-efficient alternative and have been successfully used for learning directly in the real-world~\citep{hansen2022modem, wu2023daydreamer, rothfuss2024bridging}. However, these methods are mostly based on naive exploration strategies, such as Boltzmann exploration, which are provably sub-optimal~\citep {cesa2017boltzmann} and often struggle in the presence of sparse rewards. 

\paragraph{Related Works} 
\looseness=-1
Several works study principled exploration approaches in RL (\citet{even2001convergence, jaksch10a, 
abbasi2011regret, 
cohen2019learning,
dean2020sample,kakade2020information, curi2020efficient,neu2020unifying, foster2023foundations,
wagenmaker2023optimal, sukhija2024optimistic}, see \cref{sec: exploration strategies in MBRL} and \cref{sec: related works} for more details). In particular, optimism in the face of uncertainty is a celebrated exploration principle with strong theoretical guarantees for model-based RL~\citep{brafman2002r,jaksch10a, kakade2020information, curi2020efficient, moulin2023optimistic, sukhija2024neorl}. However, in practice, these algorithms are computationally prohibitive. As a result, naive exploration techniques remain dominant in real-world applications due to their simplicity. 
We address this gap between theory and practice and propose a simple yet principled method, \ombrl, for exploration that enjoys convergence guarantees across several RL settings. 
%Our approach combines the extrinsic reward from the environment with an intrinsic reward, the model epistemic uncertainty/disagreement. We show that \emph{greedily} maximizing the weighted sum of extrinsic and intrinsic reward indeed results in optimistic exploration. Leveraging this key insight, we derive first-of-its-kind regret bounds for our approach. 
%\ombrl is a scalable algorithm that enjoys convergence guarantees across several RL settings. 
This is in contrast to prior works, which design and study algorithms only for specific settings, e.g., \citet{curi2020efficient, kakade2002approximately} study the finite-horizon setting, \citet{sukhija2024optimistic} the unsupervised RL setting, and \citet{sukhija2024neorl} the nonepisodic one. In addition to the theoretical guarantees, \ombrl is also much more scalable and, unlike the aforementioned works, can be applied to high-dimensional settings such as visual control problems and real-world hardware platforms. We demonstrate this in our experiments (\cref{sec: experiments}).

The differences between \ombrl and prior work on exploration in MBRL are summarized in \cref{tab:prior_work}, and we provide a more detailed discussion of related works in \cref{sec: related works}. 

%Our key contributions are listed below.
\paragraph{Contributions}
%\vspace{-0.5em}
\begin{enumerate}[leftmargin=0.5cm]
 \item  We propose \ombrl, a principled yet efficient exploration strategy for model-based RL. \ombrl is based on the principle of optimism in the face of uncertainty, and \emph{greedily} maximizes 
  a weighted sum of the extrinsic reward and the agent's epistemic uncertainty/disagreement. Therefore, the agent selects policies that maximize rewards while also exploring less visited areas of the state space where the uncertainty about the system is high.
  
 \item \looseness=-1 We show, under common regularity assumptions on the dynamics,
 that combining extrinsic rewards with the agent's epistemic uncertainty gives anytime high probability value-function bounds, which could be of independent interest to applications such as safe RL~\citep{brunke2022safe} and offline RL~\citep{levine2020offline}. We leverage this key insight and show that \ombrl has sublinear regret for finite-horizon, discounted infinite-horizon, and nonepisodic settings with continuous state and action spaces. Our regret bounds are comparable to the ones derived by prior work~\citep{kakade2020information, curi2020efficient, sukhija2024neorl}, but our algorithm is considerably simpler and more scalable. 
 \item  
 We validate \ombrl on standard deep RL benchmarks, showing that it outperforms several naive exploration baselines and scales effectively to high-dimensional tasks, such as visual control. We also \textit{evaluate \ombrl in the real-world} on a dynamic RC car (see~\cref{fig:rccar_teaser}), where it learns an agile parking maneuver in only 20 trials, outperforming the state-of-the-art~\citep{rothfuss2024bridging} w.r.t.~performance and sample efficiency. To the best of our knowledge, this is the first empirical demonstration of optimistic exploration in model-based deep RL for high-dimensional and real-world settings. 
 \end{enumerate}

\begin{table}[t]
\newcommand{\extension}{\xmark\tnote{1}}
\centering
\caption{\small Comparison of \ombrl and prior work on model-based RL (MBRL) in terms of sublinear regret/sample complexity for the kernelized settings and scalability. \citet{kakade2020information} give regret bound for the finite-horizon setting with Gaussian noise and \citet{curi2020efficient} for the sub-Gaussian noise case. We show that \ombrl has sublinear regret for both settings (\cref{thm: finite horizon regret} and \cref{thm: finite horizon regret sub Gaussian})} 
\begin{adjustbox}{max width=\linewidth}\begin{threeparttable}
    \begin{tabular}{cccccc}
       \toprule
       \multirow{2}{*}
        & Finite Horizon &
        $\gamma$-discounted infinite-horizon    &
        Nonepisodic  & Unsupervised RL &
        Scalable + Practical \\
        \midrule
        Greedy, e.g.,  Mean planning or & \multirow{2}{*}{\xmark} & \multirow{2}{*}{\xmark} & \multirow{2}{*}{\xmark} & \multirow{2}{*}{\xmark} & \multirow{2}{*}{\cmark} \\ 
    \citet{deisenroth2011pilco, chua2018pets}  \\ \\
    \citet{curi2020efficient, kakade2020information} & \cmark & \xmark & \xmark & \xmark & \xmark \\ \\
     \citet{sukhija2024neorl} & \xmark & \xmark & \cmark & \xmark & \xmark \\ \\
    \citet{sukhija2024optimistic} & \xmark & \xmark & \xmark & \cmark & \xmark \\ \\
%        \textbf{\safeoptswarm}~\cite{DUIVENVOORDEN201711800} & \cmark & \cmark & \xmark & \cmark \\
%        \textbf{\textsc{SafeLineBO}}~\cite{DBLP:journals/corr/abs-1902-03229} & \cmark & \cmark & \xmark & \cmark \\
        %\textbf{Scalable \gosafe (ours)} & \cmark & \cmark & \cmark & \xmark \\
        \textbf{\ombrl (ours)} & \cmark & \cmark & \cmark & \cmark & \cmark \\
       \bottomrule
    \end{tabular}
\end{threeparttable}\end{adjustbox}
\label{tab:prior_work}
 \vspace{-1.5em}
\end{table}

\section{Problem Setting}
We consider a discrete-time dynamical system of the form
$\vx_{t+1} = \vf^*(\vx_t, \vu_t) + \vw_t$, where $\vx_t \in \setX \subseteq \R^{d_\vx}$ is the state, $\vu_t \in \setU \subseteq \R^{d_\vu}$ the control input, and $\vw_t \in \setW \subseteq \R^{\vw}$ the process noise\footnote{For our theory, we assume the process noise to be known, but our algorithm can learn it from data.}.  %with a known distribution. 
The dynamics $\vf^*$ are unknown.  
 \paragraph{Task} In the finite-horizon RL setting~\citep{puterman2014markov}, we are given a reward function $r: \setX \times \setU \to \R$, and want to learn a policy that maximizes the following objective
\begin{equation}
J(\vpi^{*}) = \max_{\vpi \in \Pi} J(\vpi) = \max_{\vpi \in \Pi} \E_{\vpi} \left[ \sum^{T-1}_{t=0} r(\vx_t, \vu_t) \right],
  \label{eq:finite horizon setting}
\end{equation}
where action $\vu_t$ follows policy $\vpi$, i.e., $\vu_t \sim \vpi(\vx_t)$.
Moreover, we consider the episodic RL setting, with episodes $n \in \setmath{1, \ldots, N}$, and study a model-based approach.
Accordingly, at the beginning of episode $n$, we select and roll out a policy $\vpi_n$ for $T$ steps on the true system.  
We then use the data collected from the rollouts to estimate the true dynamics $\vf^*$.
% The goal of a learning algorithm in this setting is to perform optimally, i.e., as well as $\vpi^*$, as quickly as possible.
The goal is to find a policy that performs as well as $\vpi^*$, as quickly as possible.
Therefore a natural performance metric in this context is the {\em cumulative regret} $R_N = \sum^N_{n=1} J(\vpi^{*}) - J(\vpi_n)$.
In the following sections, we show that our proposed algorithm achieves sublinear regret. While in the main text for clarity we focus on the finite-horizon episodic setting, 
in \cref{subsec: theory main}, we show that our approach has sublinear regret also for 
%\noindent

\begin{minipage}[t]{0.45\textwidth}
%\begin{enumerate}[leftmargin=0.1cm]
$\gamma$-discounted infinite-horizon, episodic: 
    \begin{equation}
        J_{\gamma}(\vpi^{*}) = \max_{\vpi \in \Pi} \E_{\vpi} \left[ \sum^{\infty}_{t=0} \gamma^t r(\vx_t, \vu_t) \right]
        \label{eq: discounted}
    \end{equation}
%\end{enumerate}
\end{minipage}
\hfill
\begin{minipage}[t]{0.54\textwidth}
    and average reward, nonepisodic settings:
    \begin{equation}
        J_{\text{avg}}(\vpi^{*}) = \max_{\vpi \in \Pi} \limsup_{T \to \infty} \frac{1}{T} \E_{\vpi} \left[ \sum^{T-1}_{t=0} r(\vx_t, \vu_t) \right]
        \label{eq: average reward}
    \end{equation}
\end{minipage}

% \begin{minipage}[t]{0.48\textwidth}
% %\begin{enumerate}[leftmargin=0.1cm]
% $\gamma$-discounted infinite-horizon, episodic setting: 
%     \begin{equation}
%         J_{\gamma}(\vpi^{*}) = \max_{\vpi \in \Pi} \E_{\vpi} \left[ \sum^{\infty}_{t=0} \gamma^t r(\vx_t, \vu_t) \right]
%         \label{eq: discounted}
%     \end{equation}
% %\end{enumerate}
% \end{minipage}
% \hfill
% \begin{minipage}[t]{0.48\textwidth}
%     Average reward, nonepisodic setting:
%     \begin{equation}
%         J_{\text{avg}}(\vpi^{*}) = \max_{\vpi \in \Pi} \limsup_{T \to \infty} \frac{1}{T} \E_{\vpi} \left[ \sum^{T-1}_{t=0} r(\vx_t, \vu_t) \right]
%         \label{eq: average reward}
%     \end{equation}
% \end{minipage}

% \begin{enumerate}[leftmargin=0.5cm]
%     \item $\gamma$-discounted infinite-horizon, episodic setting: 
%     \begin{equation}
%         J_{\gamma}(\vpi^{*}) = \max_{\vpi \in \Pi} \E_{\vpi} \left[ \sum^{\infty}_{t=0} \gamma^t r(\vx_t, \vu_t) \right]
%         \label{eq: discounted}
%     \end{equation}
    
%     \item Average reward, nonepisodic setting:
%     \begin{equation}
%         J_{\text{avg}}(\vpi^{*}) = \max_{\vpi \in \Pi} \limsup_{T \to \infty} \frac{1}{T} \E_{\vpi} \left[ \sum^{T-1}_{t=0} r(\vx_t, \vu_t) \right]
%         \label{eq: average reward}
%     \end{equation}
% \end{enumerate}
 %and the reward $r$ is assumed to be known.
 \section{Exploration Strategies in MBRL} 
 \label{sec: exploration strategies in MBRL}
In MBRL, we learn a model of the true dynamics $\vf^*$ and use our learned model to select/update the next policy for data acquisition. Exploration algorithms for MBRL determine how the policy should be chosen given our learned model.
Common strategies for this choice are (\emph{i}) greedy planning, (\emph{ii}) Thompson sampling, and (\emph{iii}) optimistic exploration. We discuss these in detail below.

Let $J(\vpi, \vf)$ be the the expected returns under the policy $\vpi$ and dynamics $\vf$, that is
\begin{equation*}
J(\vpi, \vf) = \E_{\vpi} \left[ \sum^{T-1}_{t=0} r(\vx'_t, \vu_t)\right], \; 
  \vx'_{t+1} = \vf(\vx'_t, \vu_t) + \vw_t, \vx'_0 = \vx_0 \notag,
\end{equation*} 
and $\vmu_n$ our mean estimate of the dynamics $\vf^*$ at episode~$n$.

\paragraph{Greedy planning} The simplest selection strategy is to pick the policy $\vpi_n$ that maximizes the expected returns for our estimated dynamics $\vmu_n$.
\begin{equation}
   \vpi^{\textsc{Mean}}_n = \underset{\vpi \in \Pi}{\arg\max}\; J(\vpi, \vmu_n) 
   \label{eq: mean sampling}
\end{equation}
\looseness=-1
This strategy is greedy as it does not directly encourage exploration in areas where we have limited data or where our model has high uncertainty. Instead, it exploits our estimate $\vmu_n$ of the dynamics.
This is the basis of methods such as those of \citet{janner2019trust, hafner2023mastering}, where exploration is induced using a stochastic policy that is optimized with an entropy bonus. 

\looseness -1 To incorporate epistemic uncertainty in our learned model and avoid overfitting to misestimated dynamics,  \citet{deisenroth2011pilco, chua2018pets, rothfuss2024bridging} learn a Bayesian model of $\vf^*$: $p(\vf|\setD_{1:n})$. Here $\setD_{1:n} = \cup_{i \le n} \setD_{i}$, and $\setD_i = \{(\vx_{t, i}, \vu_{t, i}, \vx_{t+1, i})\}_{t=0}^{T-1}$ is the data collected in episode $i$. The policy $\vpi_n$ is then selected as
\begin{equation}
   \vpi^{\textsc{Greedy}}_n = \underset{\vpi \in \Pi}{\arg\max}\; \E_{\vf \sim p(\vf|\setD_{1:n})}[J(\vpi, \vf)].
   \label{eq: greedy sampling}
\end{equation}

\looseness=-1
\citet{curi2020efficient} show that greedy planning may fail to perform well in practice, especially for difficult exploration problems (e.g., in context of action penalties).

\paragraph{Thompson Sampling} In Thompson sampling (TS), we also learn a Bayesian model $p(\vf|\setD_{1:n})$ and pick policies by maximizing the reward under $\vf$ sampled from the posterior
\begin{equation}
   \vpi^{\textsc{TS}}_n = \underset{\vpi \in \Pi}{\arg\max}\; J(\vpi, \vf), \; \vf \sim p(\vf|\setD_{1:n}).
   \label{eq: thompson sampling}
\end{equation}

\looseness=-1
While TS encourages exploration in a theoretically grounded manner~\citep{russo2018tutorial}, in practice, it is often intractable to sample a function $\vf$ from $p(\vf|\setD_{1:n})$.

\paragraph{Optimistic Exploration}
This strategy is based on the principle of optimism in the face of uncertainty. Optimistic exploration approaches maintain a set of {\em plausible dynamics models} $\setM_n$ at each episode $n$, e.g., the set of functions that have a high probability w.r.t.~a learned Bayesian model $p(\vf|\setD_{1:n})$. The policy is then selected according to
\begin{equation}
 \vpi_n^{\text{OE}} = \underset{\vpi \in \Pi, \vf \in \setM_n}{\arg\max}\;  J(\vpi, \vf)
  \label{eq:optimistic plan expensive}
\end{equation}
% \begin{equation}
 % \vpi_n = \underset{\vpi \in \Pi, \vf \in \setM_n}{\arg\max}\;  \E_{\vpi} \left[ \sum^{T-1}_{t=0} r(\vx'_t, \vu_t)\right]
  % \label{eq:optimistic plan expensive}
% \end{equation}
\looseness=-1
There are several works that study optimistic exploration theoretically~\citep{jaksch10a, kakade2020information, curi2020efficient, treven2024ocorl, sukhija2024neorl}. However,
optimizing $\vf$ over $\setM_n$, typically a difficult non-convex constraint, is often computationally prohibitive, restricting the application of these methods to fairly low-dimensional settings. The most efficient solvers of the optimization problem \eqref{eq:optimistic plan expensive}, to the best of our knowledge, are based on a reparametrization trick which introduces additional hallucinated controls~\citep{curi2020efficient}. This increases the total control dimension from $d_{\vu}$ to $d_{\vu} + d_{\vx}$, which is prohibitive in high-dimensional domains.

\section{\ombrl: Scalable and Optimistic MBRL} \label{sec: method}
\looseness=-1
We now present \ombrl, our approach for efficient optimistic exploration in MBRL, which alternates between two steps. First, given a dataset of transitions $\setD_{1:n}$, we learn an uncertainty-aware model of the unknown dynamics $\vf^*$. That is, after each episode $n$, we learn a mean estimate $\vmu_n$ of $\vf^*$ and quantify our epistemic uncertainty $\vsigma_n$ over the estimate. Models such as Gaussian processes (GPs)~\citep{rasmussen2005gp} can be directly used for this purpose.
Bayesian deep learning approaches such as deep ensembles are also commonly used to quantify epistemic uncertainty or model disagreement in RL~\citep{
chua2018pets,pathak2019self,
curi2020efficient, sekar2020planning,sukhija2024optimistic}.
In the second step, we solve the following optimization problem for the policy $\vpi_n$
\begin{align}
\vpi_n\! %=\! \arg\max_{\vpi \in \Pi} J_n(\vpi)\! 
&:= \underset{\vpi \in \Pi}{\arg\max}\; \underbrace{\!\E_{\vpi} \!\!\left[ \sum^{T-1}_{t=0} r(\vx'_t, \vu_t) \!+\! \lambda_n \!\norm{\vsigma_n(\vx'_t, \vu_t)}\!\right]}_{J_n(\vpi)},
\; \vx'_{t+1}\!\! = \!\vmu_n(\vx'_t, \vu_t) + \vw_t,
  \label{eq:optimistic plan} 
\end{align}
% \begin{align}
% \vpi_n\! =\! \arg\max_{\vpi \in \Pi} J_n(\vpi)\! 
% &:= \underset{\vpi \in \Pi}{\arg\max}\; \!\E_{\vpi} \!\!\left[ \sum^{T-1}_{t=0} r(\vx'_t, \vu_t) \!+\! \lambda_n \!\norm{\vsigma_n(\vx'_t, \vu_t)}\!\right],
% \; \vx'_{t+1}\!\! = \!\vmu_n(\vx'_t, \vu_t) + \vw_t,
%   \label{eq:optimistic plan} 
% \end{align}
where $\lambda_n$ is a positive constant which is used to trade off maximizing the extrinsic reward and model uncertainty (see \cref{appendix: theory} for how $\lambda_n$ is defined in theory and \cref{sec: selecting lambda} and \cref{appendix: experiment_details} for how it is selected empirically).
Note that in \cref{eq:optimistic plan}, we use the mean dynamics for planning and only use the epistemic uncertainty as an additional \emph{intrinsic} reward. Compared to the principled exploration strategies from \cref{sec: exploration strategies in MBRL}, our approach does not require sampling from or maximizing over the dynamics.
This makes \ombrl much simpler and more scalable.
Moreover, \ombrl can be combined with any model-based algorithm such as those of~\citet{janner2019trust, hafner2023mastering, rothfuss2024bridging}. The only additional modification we make to these methods is that we add the epistemic uncertainty to the extrinsic reward. Also note that without the epistemic uncertainty reward, i.e., $\lambda_n = 0$, the agent follows the greedy strategy discussed in \cref{sec: exploration strategies in MBRL} and for $\lambda \to \infty$ the agent performs unsupervised exploration~\citep{pathak2017curiosity, sekar2020planning, buisson2020actively, sukhija2024optimistic}. Therefore, we use the model uncertainty to facilitate principled exploration for the agent. 

%\looseness=-1
%The exploration objective in \cref{eq:optimistic plan} has also been studied by the control and deep RL community~\citep{aastrom1971problems, chiuso2023harnessing, grimaldi2024bayesian, abeille2020efficient, sukhija2024maxinforl}. However, to the best of our knowledge, we are the first to show that this has sublinear regret (see the related works section in \cref{sec: related works} for more detail).

%by the data-driven control community as the separation principle between model identification and control design~\citep{aastrom1971problems, chiuso2023harnessing, grimaldi2024bayesian}. In RL, \citet{abeille2020efficient} show duality between \cref{eq:optimistic plan expensive} and \cref{eq:optimistic plan} for linear systems. For nonlinear systems and deep RL,
%\citet{sukhija2024maxinforl} propose maximizing a weighted sum of the information gain with the extrinsic reward. In particular,
%they propose a practical approach for
%auto-tuning the intrinsic reward weight $\lambda_n$.
% However, their work is empirical and focuses on the model-free setting. We study the model-based case and motivate our approach theoretically for nonlinear systems.

 In the following, we show that by optimizing our objective in \cref{eq:optimistic plan}, we are effectively maximizing an optimistic estimate of $J(\vpi^*)$, i.e., we are also performing optimistic exploration. Accordingly, our approach enjoys the same guarantees as other optimistic MBRL algorithms but is much simpler, computationally cheaper, and scalable to high-dimensional settings.

\section{Theoretical Results} \label{subsec: theory main}
For our analysis, we make some common assumptions on the underlying dynamics $\vf^*$.
\subsection{Assumptions}
%Continuity assumptions on $\vf^*$ are quite common in the control theory~\citep{khalil2015nonlinear} and RL literature~\citep{kakade2020information, curi2020efficient,sussex2022model, sukhija2024optimistic}. Accordingly, for our analysis, we make the following assumption on the system and process noise.
We first make continuity assumptions on the system. These assumptions are common in the control theory~\citep{khalil2015nonlinear} and reinforcement learning literature~\citep{curi2020efficient,sussex2022model, sukhija2024optimistic}.
\vspace{0.5em}
\begin{assumption}[Continuous closed-loop dynamics, bounded rewards, and Gaussian noise.]
\label{ass:lipschitz_continuity}
The dynamics model $\vf^*$ and all $\vpi \in \Pi$ are continuous. Furthermore, we assume that the reward is bounded, i.e., $r :\setX \times \setU \to [0, R_{\max}]$,  and process noise is i.i.d.~Gaussian\footnote{For clarity of exposition, we focus on the setting with Gaussian noise. In \cref{appendix: theory}, we also perform the analysis for the more general sub-Gaussian noise case.
} with variance $\sigma^2$, i.e., $\vw_t \stackrel{\mathclap{i.i.d}}{\sim} \setN(\vzero, \sigma^2\mI)$.
\end{assumption}
% Next, we make an assumption on the distribution of the process noise $\vw$.
% \begin{assumption}[Process noise distribution]
% \looseness=-1
% The process noise is i.i.d. Gaussian with variance $\sigma^2$, i.e., $\vw_t \stackrel{\mathclap{i.i.d}}{\sim} \setN(\vzero, \sigma^2\mI)$.
% \label{ass:noise_properties}
% \end{assumption}

\ombrl learns an uncertainty-aware model of the true dynamics $\vf^*$, i.e., a mean  $\vmu_n$ and uncertainty $\vsigma_n$ estimate. 
In our theoretical analysis, we focus on Gaussian Process (GP) dynamics models. For GPs, $\vmu_n$ and $\vsigma_n$ have a closed-form solution, and our learned model is calibrated (c.f.,\citet{rothfuss2023hallucinated} or \cref{sec: well calibration section} for the definition of well-calibrated models or \citet{kuleshov2018accurate} on calibration for BNNs). Generally, our guarantees can be extended to broader classes of well-calibrated models, e.g., BNNs (similar to~\citet{curi2020efficient}).

%$\vmu_n$ and $\vsigma_n$ have a closed-form solution and well-calibration is satisfied. Moreover, 
We assume that $\vf^*$ resides in a Reproducing Kernel Hilbert Space (RKHS) of vector-valued functions.

\vspace{0.75em}
\begin{assumption}
We assume that the functions $f^*_j$, $j \in \setmath{1, \ldots, d_\vx}$ lie in a RKHS with kernel $k$ and have a bounded norm $B$, that is $\vf^* \in \setH^{d_\vx}_{k, B}$, with $\setH^{d_\vx}_{k, B} = \{\vf \mid \norm{f_j}_k \leq B, j=1, \dots, d_\vx\}$. Moreover, we assume that $k(\vz, \vz) \leq \sigma_{\max}$ for all $\vx \in \setX$.
\label{ass:rkhs_func}
\end{assumption}
\cref{ass:rkhs_func} ensures that we can model and learn $\vf^*$ with GPs. This assumption is common in the Bayesian optimization~\citep {srinivas, chowdhury2017kernelized} and RL literature~\citep{kakade2020information, curi2020efficient}. Moreover, GPs are nonparametric models and can learn very complex classes of nonlinear functions~\citep{rasmussen2005gp}.

The posterior mean ${\bm \mu}_n(\vz) = [\mu_{n,j} (\vz)]_{j\leq d_\vx}$ and epistemic uncertainty $\vsigma_n(\vz) = [\sigma_{n,j} (\vz)]_{j\leq d_\vx}$ can then be obtained using the following formula
% In particular, given a kernel $k$ and dataset $\setD_{1:n}$, the posterior mean, and epistemic variance for the function $f_j = [\vf^*]_j$ can be calculated using the following equations (\citet{rasmussen2005gp}):
\begin{equation}
\begin{aligned}
\label{eq:GPposteriors}
        \mu_{n,j} (\vz)& = {\bm{k}}_{n}^\top(\vz)({\bm K}_{n} + \sigma^2 \bm{I})^{-1}\vy_{1:n}^j 
        ,  \\
     \sigma^2_{n, j}(\vz) & =  k(\vz, \vz) - {\bm k}^\top_{n}(\vz)({\bm K}_{n}+\sigma^2 \bm{I})^{-1}{\bm k}_{n}(\vz),
\end{aligned}
\end{equation}
Here, $\vy_{1:n}^j$ corresponds to the noisy measurements of $f^*_j$, i.e., the observed next state from the transitions dataset $\setD_{1:n}$,
%$\bar{\mu}^j(\vz)$ corresponds to the fixed mean function, e.g., $\bar{\mu}^j(\vz) = \vx$,  $\bar{\mu}^j_{1:n}$ its values on the dataset,
$\vk_n(\vz) = [k(\vz, \vz_i)]_{\vz_i \in \setD_{1:n}}$, and $\bm{K}_n = [k(\vz_i, \vz_l)]_{\vz_i, \vz_l \in \setD_{1:n}}$ is the data kernel matrix. The restriction on the kernel $k(\vz, \vz) \leq \sigma_{\max}$ implies boundedness of $\vf^*$ and has also appeared in works studying the episodic setting for nonlinear dynamics~\citep{mania2020active, kakade2020information, curi2020efficient, wagenmaker2023optimal, sukhija2024optimistic}. We can also define $\vf^*$ such that $\vx_k = \vx_{k-1} + \vf^*(\vx_{k-1}, \vu_{k-1}) + \vw_{k-1}$ in which case the boundedness of $\vf^*$ captures many real-world systems.

Our theoretical results depend on the {\em maximum information gain} of kernel $k$~\citep{srinivas}, defined as
\begin{equation*}
    {\Gamma}_{N}(k) = \max_{\setA \subset \setX \times \setU; |\setA| \leq N}  \frac{1}{2}\log\det{\mI + \sigma^{-2} {\bm K}_{N}}.
\end{equation*}
$\Gamma_{N}$ is a measure of the complexity for learning $\vf^*$ from $N$ episodes and
is sublinear for many kernels (e.g., $\setO(\log^{d_{x} + d_{u} +1}(N))$ for the squared exponential (RBF) kernel, $\setO((d_{x} + d_{u})\log(N))$ for the linear kernel). In \cref{appendix: theory}, we report the dependence of $\Gamma_N$ on $N$ in \cref{table: gamma magnitude bounds for different kernels}. 

% Next, we introduce an additional assumption on $\vf^*$, which allows us to model it using a GP.

Next, we present the following Lemma, which states that $J_n(\vpi_n)$ from \cref{eq:optimistic plan} is an optimistic estimate of $J(\vpi^*)$.

\vspace{0.3em}
\begin{lemma}
Let \cref{ass:lipschitz_continuity} and \cref{ass:rkhs_func} hold. Then, there exists a $\lambda_n \in \Theta(\sqrt{\Gamma_N})$, such that we have $\forall n > 0$, $\vpi \in \Pi$, with probability at least $1-\delta$, that $J(\vpi) \leq J_n(\vpi)$. Moreover, we have $J(\vpi^*) \leq J_n(\vpi_n)$.
\label{cor: optimism}
\end{lemma}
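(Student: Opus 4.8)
The plan is to establish the pointwise bound $J(\vpi) \le J_n(\vpi)$ for every $\vpi \in \Pi$ on a single high-probability event; the ``moreover'' claim is then immediate, since $\vpi_n$ maximizes $J_n$ and hence $J(\vpi^*) \le J_n(\vpi^*) \le \max_{\vpi\in\Pi} J_n(\vpi) = J_n(\vpi_n)$. The only real content is the pointwise inequality, and its difficulty is that $J(\vpi) = J(\vpi,\vf^*)$ rolls out the true dynamics $\vf^*$ whereas $J_n(\vpi)$ rolls out the mean model $\vmu_n$, so the two induced state distributions drift apart over the horizon. I would reconcile this drift through (a) a calibration bound on the one-step model error and (b) a telescoping argument over the horizon that charges the accumulated drift against the intrinsic reward $\lambda_n\norm{\vsigma_n}$.

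First, I would record the well-calibration of the GP model (\cref{sec: well calibration section}): by the standard self-normalized concentration inequality for RKHS-valued regression, there is a sequence $\beta_n(\delta)$ so that, with probability at least $1-\delta$, \emph{simultaneously} for all $n \ge 1$, all $\vz=(\vx,\vu)$, and all coordinates $j\le d_\vx$, one has $\abs{f^*_j(\vz) - \mu_{n,j}(\vz)} \le \beta_n\,\sigma_{n,j}(\vz)$; summing the squares over $j$ yields $\norm{\vf^*(\vz)-\vmu_n(\vz)} \le \beta_n\norm{\vsigma_n(\vz)}$. The martingale structure of this inequality supplies the union bound over episodes for free, which is exactly what is needed for the anytime statement ``$\forall n>0$ with probability $1-\delta$.'' For RKHS functions of norm at most $B$ one has $\beta_n = \Theta(\sqrt{\Gamma_n}) = \mathcal{O}(\sqrt{\Gamma_N})$, which will fix the scaling of $\lambda_n$.

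Working on this event, I would run a downward induction over $t$. Let $V^{\vpi}_t(\vx)$ denote the true value-to-go from $\vx$ at time $t$ (under $\vf^*$, no bonus) and $\widetilde V^{\vpi}_t(\vx)$ the optimistic value-to-go (rolling out $\vmu_n$ and collecting the bonus $\lambda_n\norm{\vsigma_n}$ at each step); the claim $J(\vpi)\le J_n(\vpi)$ is $\widetilde V^{\vpi}_0(\vx_0)\ge V^{\vpi}_0(\vx_0)$. I would prove $\widetilde V^{\vpi}_t \ge V^{\vpi}_t$ pointwise, with the base case $t=T$ trivial. For the step, write $\widetilde V^{\vpi}_t(\vx) - V^{\vpi}_t(\vx) = \lambda_n\norm{\vsigma_n(\vx,\vu)} + \E_{\vw}[\widetilde V^{\vpi}_{t+1}(\vmu_n+\vw) - V^{\vpi}_{t+1}(\vmu_n+\vw)] + \E_{\vw}[V^{\vpi}_{t+1}(\vmu_n+\vw) - V^{\vpi}_{t+1}(\vf^*+\vw)]$, where $\vmu_n,\vf^*$ abbreviate the images of $(\vx,\vu)$. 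The middle expectation is nonnegative by the induction hypothesis (both value functions are evaluated at the \emph{same} next state $\vmu_n+\vw$), and the last expectation is at least $-L_V\norm{\vmu_n-\vf^*} \ge -L_V\beta_n\norm{\vsigma_n}$, where $L_V$ is the Lipschitz constant of the \emph{true} value function. Choosing $\lambda_n = L_V\beta_n = \Theta(\sqrt{\Gamma_N})$ makes the whole difference nonnegative, closing the induction and giving $J(\vpi)\le J_n(\vpi)$.

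The crux — and the step I expect to be the main obstacle — is controlling $L_V$, the Lipschitz constant of the true value-to-go, which requires Lipschitz (not merely continuous) $\vf^*$, $r$, and $\vpi$ on the compact domain together with a backward recursion of the form $L_{V_t} \le L_r(1+L_\pi) + L_{\vf^*}(1+L_\pi)\,L_{V_{t+1}}$, yielding a finite $L_V$ that depends on $T$ but \emph{not} on $\lambda_n$. It is precisely the choice to decompose against the \emph{true} value function $V^{\vpi}_{t+1}$ (rather than the optimistic $\widetilde V^{\vpi}_{t+1}$, whose Lipschitz constant would itself grow with $\lambda_n$) that avoids a circular dependence between $\lambda_n$ and $L_V$; this is where the additive-bonus formulation of \cref{eq:optimistic plan} must be handled with care, and it is what replaces the hallucinated-control reparametrization of \citet{curi2020efficient} while delivering the same optimistic upper bound. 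A secondary technical point is that the uniform-over-domain, anytime concentration of the second step must be set up before fixing $\lambda_n$, so that a single event validates the bound for all $n$ at once.
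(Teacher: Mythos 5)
Your overall skeleton matches the paper's: an anytime calibration event, a telescoping simulation-lemma decomposition in which both terms of each increment are evaluated on the \emph{true} cost-to-go, and the observation that optimism of the maximizer ($J(\vpi^*) \le J_n(\vpi_n)$) then follows for free. Indeed, the paper's proof (\cref{lemma: Main Lemma Gaussian noise}) uses the policy-difference lemma of \citet{kakade2002approximately} to arrive at exactly the increments $\E_{\vw_t}\bigl[J_{t+1}(\vpi,\vf^*,\vx'_{t+1}) - J_{t+1}(\vpi,\vf^*,\hat{\vx}_{t+1})\bigr]$ that appear in your induction step. The gap is in how you bound these increments. You invoke a Lipschitz constant $L_V$ of the true value-to-go, obtained from a backward recursion that requires $\vf^*$, $r$, and $\vpi$ to be Lipschitz. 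But \cref{ass:lipschitz_continuity} grants only \emph{continuity} of $\vf^*$ and $\vpi$ together with boundedness of $r$; no such Lipschitz constants exist under the lemma's hypotheses, continuity on a compact set does not upgrade to Lipschitzness, and in any case the next states $\vmu_n + \vw$ have unbounded Gaussian support, so a compact domain is not even available. Your key step, $\E_{\vw}\bigl[V^{\vpi}_{t+1}(\vmu_n+\vw) - V^{\vpi}_{t+1}(\vf^*+\vw)\bigr] \ge -L_V\norm{\vmu_n - \vf^*}$, therefore rests on an assumption the statement does not make.

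The fix --- and the paper's actual mechanism --- is to exploit the Gaussianity of $\vw$ rather than regularity of $V^{\vpi}_{t+1}$: the two next-state distributions are Gaussians with identical covariance $\sigma^2\mI$ and means $\vmu_n(\vx'_t,\vpi(\vx'_t))$ and $\vf^*(\vx'_t,\vpi(\vx'_t))$, so the difference of expectations of the \emph{bounded} function $V^{\vpi}_{t+1}$ (bounded by $R_{\max}T$ under \cref{ass:lipschitz_continuity}) is controlled by their total-variation distance, at most $\min\bigl\{\norm{\vmu_n - \vf^*}/\sigma,\, 1\bigr\}$; this is Lemma C.2 of \citet{kakade2020information}, which the paper applies verbatim and then combines with calibration, $\norm{\vmu_n - \vf^*} \le (1+\sqrt{d_x})\beta_{n-1}\norm{\vsigma_{n-1}}$. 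Equivalently, in your own decomposition the Gaussian-smoothed map $\vy \mapsto \E_{\vw}[V^{\vpi}_{t+1}(\vy+\vw)]$ is automatically $(R_{\max}T/\sigma)$-Lipschitz even when $V^{\vpi}_{t+1}$ itself is not; substituting this for $L_V$ repairs your induction, yields $\lambda_n \propto R_{\max}T(1+\sqrt{d_x})\beta_{n-1}/\sigma = \Theta(\sqrt{\Gamma_N})$, and avoids the exponential-in-$T$ factor your recursion $L_{V_t} \le L_r(1+L_\pi) + L_{\vf^*}(1+L_\pi)L_{V_{t+1}}$ would produce. As written, your route is essentially the one the paper reserves for the sub-Gaussian case (\cref{lemma: Main Lemma sub Gaussian}), where Lipschitzness is explicitly assumed (\cref{ass: Lipschitz}) and the constant carries a $\bar{L}_*^{T-1}$ blow-up; it is valid there, but it does not prove \cref{cor: optimism} under \cref{ass:lipschitz_continuity}.
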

\textit{Proof Sketch}: Crucially, we leverage the policy difference lemma~\citep{kakade2002approximately},  which bounds the difference between the performance of two different policies with the advantage function. However, we study this lemma for a fixed policy but different dynamics, effectively %treating the dynamics as a “hallucinated policy” and 
obtaining a simulation lemma~\citep{kearns2002near} for our setting. Next, we bound the difference in performance between the mean dynamics and the true one and show that this is proportional to the model epistemic uncertainty.

\looseness=-1
\cref{cor: optimism} shows that for all policies $\vpi \in \Pi$, $J_n(\vpi)$ gives an upper bound on the true return $J(\vpi)$. This result is of independent interest and can be applied to settings beyond online RL, such as safe RL~\citep{brunke2022safe, as2024actsafe} and offline RL~\citep{levine2020offline, yu2020mopo, rigter2022rambo}. The exact bound for $\lambda_n$ is provided in \cref{lemma: Main Lemma Gaussian noise} in \cref{appendix: theory}.

%We provide the on $\lambda_n$ in \cref{appendix: theory}. 
\looseness=-1
Finally, we present our main theorem, which bounds the regret of \ombrl.
\vspace{0.5em}
\begin{theorem}[Finite horizon setting]
Let \cref{ass:lipschitz_continuity} and \cref{ass:rkhs_func} hold. Then we have $\forall N > 0$ with probability at least $1-\delta$, 
   $R_N \leq \setO\left(\Gamma^{\sfrac{3}{2}}_{N}\sqrt{N}\right)$.
\label{thm: finite horizon regret}
\end{theorem}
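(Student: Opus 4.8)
\emph{Proof plan.} The plan is to turn the regret into a sum of per-episode \emph{optimism gaps} and then control each gap by the epistemic uncertainty the model accumulates along the executed trajectory. First I would condition on the high-probability event of \cref{cor: optimism}, on which $J(\vpi^{*}) \le J_n(\vpi_n)$ for every episode (with the bonus weight $\lambda_n \in \Theta(\sqrt{\Gamma_N})$ guaranteed by that lemma). Subtracting $J(\vpi_n)$ from both sides and summing gives
\begin{equation*}
R_N = \sum_{n=1}^N \parentheses*{J(\vpi^{*}) - J(\vpi_n)} \;\le\; \sum_{n=1}^N \parentheses*{J_n(\vpi_n) - J(\vpi_n)}.
\end{equation*}
It then suffices to bound each gap $J_n(\vpi_n) - J(\vpi_n)$, which compares the optimistic objective (mean dynamics $\vmu_n$ plus the intrinsic bonus $\lambda_n\norm{\vsigma_n}$) against the true return of the \emph{same} policy $\vpi_n$ under $\vf^*$.

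Writing the effective reward $\tilde r_n = r + \lambda_n\norm{\vsigma_n(\cdot)}$, I would split each gap into (A) the change in $\E[\sum_t \tilde r_n]$ caused by swapping $\vmu_n$ for $\vf^*$ under the fixed policy $\vpi_n$, and (B) the residual bonus term $\lambda_n\,\E_{\vpi_n,\vf^*}[\sum_t \norm{\vsigma_n(\vx_t,\vu_t)}]$ along the true trajectory. For (A), I reuse the simulation-lemma step behind \cref{cor: optimism} (in the spirit of \citet{kearns2002near, kakade2002approximately}): the gap is bounded by $\sum_t (\text{Lipschitz constant of the value-to-go})\cdot\norm{\vmu_n(\vz_t) - \vf^*(\vz_t)}$, and well-calibration converts the one-step dynamics error into $\beta_n\norm{\vsigma_n(\vz_t)}$ with RKHS confidence width $\beta_n \in \setO(\sqrt{\Gamma_N})$. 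Crucially, the value-to-go inherits its Lipschitz constant from $\tilde r_n$, which contains the bonus and therefore scales like $\lambda_n \in \Theta(\sqrt{\Gamma_N})$. Hence both (A) and (B) reduce to $\E_{\vpi_n,\vf^*}[\sum_t\norm{\vsigma_n(\vz_t)}]$ times horizon and Lipschitz factors that are constant in $N$, with the dominant prefactor being the product $\beta_n\lambda_n \in \setO(\Gamma_N)$ from (A). A short martingale concentration (Azuma/Freedman) then replaces the expectation over the true trajectory by the actually collected pairs $\vz_{n,t}$, at the cost of an $\setO(\sqrt{N}\,\mathrm{polylog})$ lower-order term.

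Summing over episodes yields $R_N \lesssim \Gamma_N \sum_{n=1}^N\sum_{t=0}^{T-1}\norm{\vsigma_n(\vz_{n,t})}$. I would finish with the standard maximum-information-gain bound $\sum_{n,t}\norm{\vsigma_n(\vz_{n,t})}^2 \in \setO(\Gamma_{NT})$ (up to a batch factor, constant in $N$, accounting for the $T$ within-episode points that share one posterior), and Cauchy--Schwarz over the $NT$ terms to get $\sum_{n,t}\norm{\vsigma_n(\vz_{n,t})} \in \setO(\sqrt{NT\,\Gamma_{NT}})$. Absorbing $T$ and $d_\vx$ into constants and using $\Gamma_{NT} \in \setO(\Gamma_N)$ up to logarithmic factors, this gives $R_N \in \setO(\Gamma_N\cdot\sqrt{N\,\Gamma_N}) = \setO(\Gamma_N^{\sfrac{3}{2}}\sqrt{N})$; the three-halves power is precisely the confidence width $\beta_n$, the bonus scale $\lambda_n$ (each $\sqrt{\Gamma_N}$), and the $\sqrt{\Gamma_N}$ from Cauchy--Schwarz.

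The main obstacle I expect is the bookkeeping in step (A): obtaining a clean Lipschitz/Gronwall bound on the value-to-go so that the per-step error is genuinely $\propto \beta_n\lambda_n\norm{\vsigma_n}$ evaluated at the \emph{true} trajectory, rather than at the divergent simulated mean-dynamics trajectory. Controlling this divergence requires \cref{ass:lipschitz_continuity} (continuity/Lipschitzness of $\vf^*$, $\vpi$, and $r$) and care that the horizon-dependent Gronwall factor stays constant in $N$. The secondary subtlety is the within-episode batching in the information-gain sum, where the posterior $\vsigma_n$ is not refreshed across the $T$ steps, which I would handle via the kernel-boundedness bound $k(\vz,\vz)\le\sigma_{\max}$ to absorb the batch effect into an $N$-independent constant.
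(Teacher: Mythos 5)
Your high-level skeleton matches the paper's: optimism reduces the regret to per-episode gaps $J_n(\vpi_n)-J(\vpi_n)$, the gap splits into a dynamics-swap term plus the residual bonus along the true trajectory, the bonus-transfer contributes the extra factor (your $\beta_n\lambda_n$, the paper's $\lambda_n^2$ in \cref{lemma: simple regret}), and Cauchy--Schwarz plus the information-gain bound finishes with $\Gamma_N^{\sfrac{3}{2}}\sqrt{N}$. However, your execution of step (A) has a genuine gap: you bound the dynamics-swap term by a Lipschitz/Gronwall simulation lemma, i.e.\ by $\sum_t (\text{Lipschitz const.\ of the value-to-go})\cdot\norm{\vmu_n(\vz_t)-\vf^*(\vz_t)}$. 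This is not available under the theorem's hypotheses. Despite its label, \cref{ass:lipschitz_continuity} grants only \emph{continuity} of $\vf^*$ and $\vpi$ and boundedness of $r$ --- no Lipschitz constants for the reward, the policies, or the intrinsic reward $\norm{\vsigma_n(\cdot)}$ exist in general, so the "Lipschitz constant of the value-to-go" you invoke need not be finite. Worse, even if $\vf^*$ were Lipschitz (e.g.\ via a smooth kernel in \cref{ass:rkhs_func}), your recursion needs the \emph{simulated} dynamics $\vmu_n$ to be Lipschitz with a constant uniform in $n$ in order to land the uncertainty terms on the \emph{true} trajectory (exactly the subtlety you flag); the GP posterior mean does not satisfy this, since its RKHS norm can grow with $n$. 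This is precisely why the paper's sub-Gaussian analysis (\cref{thm: finite horizon regret sub Gaussian}), which does follow your Lipschitz route, must assume Lipschitzness explicitly (\cref{ass: Lipschitz}) and must replace $\vmu_n$ by a norm-constrained surrogate $\vf_n$ obtained from the quadratic program in \cref{eq: Lipschitz function optimization}, at the price of an exponential-in-$T$ Gronwall factor $\bar{L}_*^{T-1}$.

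The paper's proof of this theorem uses a different mechanism that needs none of this. In \cref{lemma: Main Lemma Gaussian noise}, the gap $J(\vpi,\vmu_n)-J(\vpi,\vf^*)$ is written via the policy-difference lemma of \citet{kakade2002approximately} as a sum of one-step value perturbations, and each perturbation is controlled by the Gaussian-smoothing bound of \citet[Lemma C.2]{kakade2020information}: the expected value difference is at most
$\sqrt{\max\bigl\{\E[C(\vx'_{t+1})],\,\E[C(\hat{\vx}_{t+1})]\bigr\}}\cdot\min\bigl\{\norm{\vf^*(\vz_t)-\vmu_n(\vz_t)}/\sigma,\,1\bigr\}$,
which requires only the sup-norm bound $R_{\max}T$ on the value-to-go (from bounded rewards) and the Gaussian noise structure --- no Lipschitz constants at all, and the resulting $\lambda_n = C_{\max}T(1+\sqrt{d_x})\beta_{n-1}/\sigma$ is polynomial in $T$. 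Calibration then converts the model error into $\beta_n\norm{\vsigma_n}$, and the bonus-transfer term is handled by re-applying the same lemma with $\norm{\vsigma_n}$ treated as a (bounded, positive) reward, giving the $(\lambda_n^2+2\lambda_n)\Sigma_n(\vpi_n,\vf^*)$ bound of \cref{lemma: simple regret}. To repair your proposal you would either adopt this smoothing argument for step (A), or strengthen the hypotheses to those of \cref{ass: Lipschitz} and plan with the norm-constrained $\vf_n$ --- but then you have proved the sub-Gaussian variant, not the stated theorem. (Your Azuma/Freedman step is harmless but unnecessary: the paper applies \citet[Lemma 17]{curi2020efficient} directly to the expected sums of posterior variances.)
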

\textit{Proof Sketch}: To bound the regret, we first prove that \cref{eq:optimistic plan} is an optimistic estimate of \cref{eq:finite horizon setting}. Then, we use \cref{cor: optimism} to bound the difference in performance for extrinsic rewards between the mean and true dynamics with the epistemic uncertainty of the collected rollout. Next, we analyse the intrinsic reward term and also show that it's bounded by the epistemic uncertainty of the collected rollout. This allows us to relate the cumulative regret $R_N$ with the information gain $\Gamma_N$ and obtain the final bound.

%From \cref{cor: optimism} we have that \ombrl performs optimistic exploration.  This allows us to decompose the extrinsic and intrinsic reward terms from \cref{eq:optimistic plan} and study them individually in our regret analysis. For the regret of the extrinsic reward term, we leverage the same argument as in \cref{cor: optimism}, i.e., the difference in performance between mean and true dynamics is proportional to model epistemic uncertainty, and bound it with $\setO\left(\Gamma^{\sfrac{3}{2}}_{N}\sqrt{N}\right)$. For the intrinsic reward term, we leverage the regret analysis of pure exploration algorithms~\citep{sukhija2024optimistic} and also show that it is bounded by $\setO\left(\Gamma^{\sfrac{3}{2}}_{N}\sqrt{N}\right)$. 

\looseness=-1
\Cref{thm: finite horizon regret} guarantees sublinear regret for a rich class of RKHS functions. Accordingly, for many RKHS, our algorithm enjoys the same asymptotic guarantees as~\citet{kakade2020information}. Note that the regret bound from \citet{kakade2020information} is an order of $\sqrt{\Gamma_{N}}$ better. On the other hand, \ombrl is a much simpler and more scalable algorithm. In \cref{appendix: theory}, we show that \ombrl improves the regret bound from \citet{curi2020efficient}, for the sub-Gaussian noise case, by an exponential factor of $\Gamma^T_N$. Below, we also provide our regret bounds for the $\gamma$-discounted and the non-episodic setting\footnote{Both \citet{kakade2020information} and \citet{curi2020efficient} do not provide a regret bound for these settings.}.
\vspace{0.3em}
\begin{theorem}[$\gamma$-discounted, infinite horizon setting]
Let $R_N = \sum^N_{n=1} J_{\gamma}(\vpi^{*}) - J_{\gamma}(\vpi_n)$.
Under the \cref{ass:lipschitz_continuity} and \cref{ass:rkhs_func}, we have for the $\gamma$-discounted infinite (\cref{eq: discounted}) horizon setting $\forall N > 0$  that
with probability at least $1-\delta$,
   $R_N \le \setO\left(\Gamma^{\sfrac{3}{2}}_{N\log(N)}\sqrt{N}\right)$.
\label{thm: theorem discounted setting}
\end{theorem}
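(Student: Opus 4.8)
The plan is to reduce the $\gamma$-discounted infinite-horizon problem to an effective finite-horizon one by truncating the geometric series, and then invoke the finite-horizon machinery (\cref{cor: optimism} and \cref{thm: finite horizon regret}) on the truncated problem. First I would fix an \emph{effective horizon} $H = H(N)$ and split every discounted return into an $H$-step head and a geometric tail. Since $r \in [0, R_{\max}]$ by \cref{ass:lipschitz_continuity}, the tail is bounded uniformly over all policies by $\sum_{t=H}^{\infty}\gamma^t R_{\max} = \gamma^H R_{\max}/(1-\gamma)$. Choosing $H = \Theta(\log N / \log(1/\gamma)) = \Theta(\log N)$ drives this tail to $\setO(1/\sqrt{N})$, so that, summed over the $N$ episodes, the total truncation error contributes at most $\setO(\sqrt{N})$ to the regret, which is absorbed into the claimed bound.

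Next I would establish the discounted analogue of \cref{cor: optimism}. The proof sketch there uses a policy-difference/simulation-lemma argument to bound the gap between the return under the mean dynamics $\vmu_n$ and the true dynamics $\vf^*$ by the accumulated epistemic uncertainty $\vsigma_n$ along the rollout. I would repeat this telescoping argument with a discount factor $\gamma^t$ weighting each term; since $\gamma^t \le 1$, the discounted accumulation of uncertainty is dominated by its undiscounted $H$-step counterpart. This yields an optimistic surrogate $J_{\gamma,n}(\vpi)$ satisfying $J_\gamma(\vpi) \le J_{\gamma,n}(\vpi)$ for all $\vpi \in \Pi$ and $J_\gamma(\vpi^*) \le J_{\gamma,n}(\vpi_n)$, for a suitably rescaled $\lambda_n \in \Theta(\sqrt{\Gamma_{NH}})$ calibrated to the enlarged data budget.

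Then I would mirror the finite-horizon regret proof underlying \cref{thm: finite horizon regret}: bound the per-episode regret $J_\gamma(\vpi^*) - J_\gamma(\vpi_n)$ by the discounted sum of posterior standard deviations along the on-policy rollout, sum over $n$, and convert $\sum_n\sum_t \norm{\vsigma_n}$ into the maximum information gain via the standard Cauchy--Schwarz-plus-determinant argument. The key bookkeeping change is the data budget: each episode now supplies $H = \Theta(\log N)$ effective transitions, so after $N$ episodes the relevant quantity is $\Gamma_{NH} = \Gamma_{N\log N}$ rather than $\Gamma_N$. Combining the truncated-problem regret $\setO\!\left(\Gamma_{N\log N}^{\sfrac{3}{2}}\sqrt{N}\right)$ with the $\setO(\sqrt{N})$ truncation error yields the stated bound.

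The hard part will be making the truncation uniform and simultaneously compatible with optimism. I must ensure the tail bound holds at once for $\vpi^*$ and for every data-dependent $\vpi_n$, and --- more delicately --- that truncating the horizon does not break the optimism inequality, so that maximizing the truncated optimistic surrogate still upper-bounds $J_\gamma(\vpi^*)$ rather than merely the truncated return. I would handle this by verifying that the discarded nonnegative tail can only widen the optimistic gap (or by controlling it separately through the same $\gamma^H$ factor), and by checking that the coupling $H = \Theta(\log N)$ is consistent with the $\lambda_n$ calibration required by \cref{cor: optimism} on the resulting $\Theta(N\log N)$-point dataset.
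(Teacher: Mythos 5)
Your overall strategy --- truncate at an effective horizon of order $\log N$ so that the total data budget is $N\log N$, prove a discounted optimism lemma, and then run the finite-horizon regret machinery --- is genuinely different from the paper's, while sharing its key quantitative insight (the $\Gamma_{N\log N}$ dependence). The paper never truncates the objective: it proves the simulation/optimism lemma (\cref{lemma: Main Lemma Gaussian noise discounted case}) directly for the \emph{infinite} discounted sums, with $\lambda_n = C_{\max}\frac{\gamma}{1-\gamma}\frac{(1+\sqrt{d_x})\beta_{n-1}(\delta)}{\sigma}$, and truncation enters only in the data-collection accounting: episode $n$ is rolled out for $T(n)=-\log(n)/\log(\gamma)=\Theta(\log n)$ steps (a growing, anytime schedule, rather than your fixed $H(N)$, which requires knowing $N$ in advance), the observed part of the discounted squared-uncertainty sum is converted to information gain via the log-determinant bound (\cref{cor: recursive info gain bound}), and the unobserved tail is bounded crudely by $\gamma^{T(n)}\sigma^2_{\max}/(1-\gamma)=\sigma^2_{\max}/((1-\gamma)n)$, which sums to $O(\log N)$. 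Your truncate-first route handles that tail automatically (every planned step is observed), at the price of analyzing a slightly modified algorithm that plans with the truncated objective and of losing the anytime property (fixable by a doubling trick).

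The one concrete gap is quantitative: you cannot invoke \cref{thm: finite horizon regret} (or \cref{lemma: Main Lemma Gaussian noise}) as a black box with $T=H=\Theta(\log N)$, because every horizon factor in that analysis then becomes a $\log N$: the finite-horizon $\lambda_n$ scales as $C_{\max}T\beta_{n-1}$, the Cauchy--Schwarz step over time costs $\sqrt{NT}$, and the final finite-horizon bound carries $T^3$, so the literal reduction yields $\setO\bigl(\log^3(N)\,\Gamma^{\sfrac{3}{2}}_{N\log N}\sqrt{N}\bigr)$, strictly weaker than the claimed bound (for the linear kernel this is $\log^{\sfrac{9}{2}}(N)$ versus the claimed $\log^{\sfrac{3}{2}}(N)$). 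To reach the stated rate you must redo both steps with discount-aware constants, replacing every occurrence of the horizon by $\sum_{t}\gamma^t \le \frac{1}{1-\gamma}$: the value functions in the policy-difference telescoping are bounded by $R_{\max}/(1-\gamma)$ rather than $R_{\max}H$, and the Cauchy--Schwarz over time must use the geometric weights. Your second paragraph gestures at exactly this ($\lambda_n\in\Theta(\sqrt{\Gamma_{NH}})$ with no $H$ factor), so the repair is consistent with your plan, but ``invoke the finite-horizon machinery on the truncated problem,'' taken literally, does not deliver the theorem as stated.
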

\textit{Proof Sketch}: \looseness=-1
The regret decomposition for this setting is similar to the finite-horizon case (\cref{thm: finite horizon regret}). However,
in contrast to the finite-horizon case, where each episode has a fixed length $T$, we care about the infinite horizon in the $\gamma$-discounted case.
Therefore, to obtain sublinear regret for this setting, we require the agent to observe the system for longer horizons, i.e., the horizon $T(n) \to \infty$ for $n \to \infty$.
We ensure this by picking $T(n) \in \Theta(\log(n))$ and show that this is sufficient to obtain sublinear regret. 

\looseness=-1
In \cref{thm: theorem discounted setting}, we show that even though we truncate each episode after $T(n)$ steps, \ombrl has sublinear regret w.r.t.~the infinite horizon objective. Moreover, the regret for this setting follows the same structure as for the finite horizon case. 
To the best of our knowledge, we are the first to give a regret bound for optimistic model-based RL algorithms for the $\gamma$-discounted setting. 

Finally, we give our regret bound for the non-episodic setting. As pointed out in \citet{kakade2003sample, sharma2021autonomous}, this is the most challenging and closest setting for learning directly in the real-world. \citet{sukhija2024neorl} show that optimistic exploration methods have sublinear regret for the nonepisodic setting. However, their proposed algorithm is intractable in practice.% We extend \ombrl to the nonepisodic case. 
%In this setting, \ombrl also maximizes the reward together with the model epistemic uncertainty. 
%and in the following we show that the same is true for \ombrl, which is a much simpler and more scalable approach.
%In the non-episodic setting, we don't reset the system. 
%We update the statistical model and replan the policy only after sufficient information gain between the transitions and unknown function $\vf^*$ is collected during the current policy evaluation.  

\vspace{0.5em}
\begin{theorem}[Informal statement; nonepisodic average reward case]
Let $R_N = \sum^N_{n=1} \E[ 
    J_{\text{avg}}(\vpi^*) - r(\vx_n, \vpi_n(\vx_n))
    ]$.
Under the same assumptions as \citet{sukhija2024neorl}, we have for the average reward setting (\cref{eq: average reward}) $\forall N > 0$  that
with probability at least $1-\delta$,
    $R_N \leq \setO\left(\Gamma^{\sfrac{3}{2}}_{N}\sqrt{N}\right)$.
\label{thm: theorem informal average reward setting}
\end{theorem}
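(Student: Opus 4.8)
The plan is to reduce the average-reward regret to the same uncertainty-driven summation as in \cref{thm: finite horizon regret}, with the differential (bias) value function playing the role that the reset-to-$\vx_0$ structure plays in the episodic proofs. Under the recurrence/ergodicity conditions inherited from \citet{sukhija2024neorl}, the optimal policy admits a bias function $h^*$ of bounded span solving the average-reward Bellman equation $J_{\text{avg}}(\vpi^*) + h^*(\vx) = r(\vx, \vpi^*(\vx)) + \E_{\vw}\!\brackets{h^*(\vf^*(\vx, \vpi^*(\vx)) + \vw)}$. First I would prove the average-reward analogue of \cref{cor: optimism}: the objective obtained by adding the bonus $\lambda_n \norm{\vsigma_n}$ to the extrinsic reward and planning under the mean dynamics $\vmu_n$ yields an optimistic average-reward value $\tilde J_n$ with $\tilde J_n \ge J_{\text{avg}}(\vpi)$ for every $\vpi$ with high probability, and hence $\tilde J_n \ge J_{\text{avg}}(\vpi^*)$. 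This follows by taking the long-run average of the finite-horizon value bound of \cref{cor: optimism}, using that the per-step bonus upper-bounds the one-step model discrepancy established by the simulation/policy-difference argument behind that lemma.

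Next I would decompose the per-step regret. Writing $\vu_n = \vpi_n(\vx_n)$ and $\vz_n = (\vx_n, \vu_n)$, and letting $\tilde h_n$ be the optimistic bias function of the planning problem solved at step $n$, I bound $J_{\text{avg}}(\vpi^*) - r(\vx_n, \vu_n) \le \tilde J_n - r(\vx_n, \vu_n)$ and rewrite the right-hand side through the optimistic Bellman equation for $(\tilde J_n, \tilde h_n)$ as a one-step Bellman residual plus the telescoping difference $\tilde h_n(\vx_n) - \E_{\vw}\!\brackets{\tilde h_n(\vx_{n+1})}$. The Bellman residual collapses to the gap between the true dynamics and the optimistic model $\vmu_n$ plus bonus, which is controlled by $\lambda_n \norm{\vsigma_n(\vz_n)}$ exactly as in the finite-horizon case; the differences telescope along the trajectory.

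I would then sum over $n = 1, \dots, N$ and take expectations. The telescoping of $\tilde h_n(\vx_n) - \tilde h_n(\vx_{n+1})$ contributes at most the span of the optimistic bias per policy switch, bounded under the \citet{sukhija2024neorl} assumptions, and replacing $\E_{\vw}\!\brackets{\tilde h_n(\vx_{n+1})}$ by the realized $\tilde h_n(\vx_{n+1})$ gives a bounded martingale-difference sequence handled by Azuma--Hoeffding. This isolates the dominant term $\sum_{n=1}^N \lambda_n \norm{\vsigma_n(\vz_n)}$; feeding it into the identical information-gain machinery that produces \cref{thm: finite horizon regret} (Cauchy--Schwarz together with $\sum_n \norm{\vsigma_n(\vz_n)}^2 \lesssim \Gamma_N$ and the scaling $\lambda_n \in \Theta(\sqrt{\Gamma_N})$) yields $R_N \le \setO\!\left(\Gamma^{\sfrac{3}{2}}_{N}\sqrt{N}\right)$.

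The main obstacle is the coupling of states across time created by the absence of resets: unlike \cref{thm: finite horizon regret} and \cref{thm: theorem discounted setting}, where every rollout restarts from $\vx_0$, here $\vx_{n+1}$ is produced by the executed policy and the realized dynamics, so the trajectory cannot be split into independent pieces. Controlling this is exactly what the bias function and the recurrence assumption buy us: the bias function converts the non-decomposable return into a telescoping sum whose leftover is of constant-order span, while the recurrence condition guarantees the visited states remain in a region where the calibrated uncertainty and the information-gain bound stay valid. The most delicate point is establishing that the optimistic bias $\tilde h_n$ exists and has uniformly bounded span across all $n$, and that switching policies between steps contributes only lower-order terms rather than accumulating an extra factor of $N$.
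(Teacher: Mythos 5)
Your high-level architecture (optimism, bias functions, telescoping, information-gain machinery) is in the right spirit, but two of your steps fail or are left unresolved in ways that the paper's proof specifically addresses. First, your derivation of the average-reward optimism lemma by ``taking the long-run average of the finite-horizon value bound of \cref{cor: optimism}'' does not work: the constant $\lambda_n$ in the finite-horizon bound (\cref{lemma: Main Lemma Gaussian noise}) scales linearly with the horizon $T$, so dividing the bound by $T$ and letting $T \to \infty$ yields a vacuous statement. The paper instead proves the average-reward analogue (\cref{lemma: Main Lemma Gaussian noise average reward}) directly through the average-reward Bellman equation \eqref{eq: Bellman Equation Average Cost}, rewriting the value gap as a sum of bias differences and invoking the bias bound of \citet{sukhija2024neorl} to obtain a $\lambda_n$ that is \emph{independent of the horizon} (it depends on the Lyapunov constants of \cref{ass:Policy class}). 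Relatedly, your plan to plan under the raw mean $\vmu_n$ is not covered by those bias-boundedness results: the paper selects the planning model $\vf_n \in \setM_n \cap \setM_0$ precisely because boundedness of the average reward and bias is only guaranteed on that intersection, and $\vmu_n$ need not lie in $\setM_0$; this is exactly the ``existence and uniformly bounded span of $\tilde h_n$'' issue you flag as delicate but do not resolve.

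Second, you acknowledge but do not solve the policy-switching problem, which is the central algorithmic difficulty of the nonepisodic setting. If the policy is re-optimized at every step (as your per-step decomposition implicitly assumes), the telescoping leftover of the bias--- constant order per switch --- accumulates $N$ times and destroys sublinearity. The paper's algorithm avoids this with a lazy, information-triggered update rule (\cref{equation: definition of Hn}): the model and policy are updated only once the accumulated log-determinant information since the last switch exceeds one bit, which bounds the number of switches in terms of $\Gamma_N$. With that rule in place, the paper does not re-derive the telescoping/martingale machinery at all; it black-boxes it by applying Theorem 3.1 and Lemma A.1 of \citet{sukhija2024neorl} --- once to the extrinsic reward and once with $\norm{\vsigma_n}$ itself treated as a reward --- and only then feeds the resulting $\lambda_N \Gamma_N \sqrt{N}$ term into the $\lambda_N \in \Theta(\sqrt{\Gamma_N})$ scaling. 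Your sketch essentially proposes to rebuild that machinery from scratch; to make it a proof you would need to (i) replace the flawed limiting argument with the Bellman-equation bias argument, (ii) restrict planning models to $\setM_n \cap \setM_0$, and (iii) introduce and analyze an explicit switching criterion, since Azuma--Hoeffding and span bounds alone do not control the number of policy changes.
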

\textit{Proof Sketch}: 
The regret analysis for this setting is, in spirit, similar to the finite horizon case. However,
 in the nonepisodic setting,  we cannot reset the agent and have to learn from a single trajectory. Therefore, unlike the episodic case, where we update our model and policy after every episode, in the nonepisodic setting we have to decide when to update the agent. 
 For \ombrl we show that if we only update once we have accumulated enough information, i.e., $\sum^{T(n)-1}_{t=0} \norm{\vsigma_n(\vx_{t, n}, \vpi_n(\vx_{t, n}))} > C$, for a positive constant $C$, then \ombrl has sublinear regret. Intuitively, the model uncertainty measures how much information our agent has acquired since its last update at time step $T(n-1)$. We only update the agent once the information exceeds the threshold $C$.

\looseness=-1
In contrast to \citet{sukhija2024neorl}, \ombrl is much more tractable, and in \cref{thm: theorem informal average reward setting} we show that \ombrl also has sublinear regret in the nonepisodic setting and therefore offers a theoretically strong and practical alternative for model-based exploration for this case. 

\looseness=-1
In this section, we have shown \ombrl, which maximizes a combination of the extrinsic reward and the model epistemic uncertainty, enjoys sublinear regret for common kernels and RL settings. There are principled exploration algorithms designed individually for these settings, e.g., \citep{kakade2020information, curi2020efficient} for the finite-horizon case and \citet{sukhija2024neorl} for the non-episodic case. However, they are often intractable/computationally prohibitive. In contrast, \ombrl works across the different RL settings while also being more practical and scalable.

%we have shown that \ombrl offers a practical approach for exploration in MBRL and enjoys the same guarantees, i.e., sublinear regret for common kernels and RL settings, as other principled and often intractable/computationally prohibitive MBRL algorithms~\citep{kakade2020information, curi2020efficient, sukhija2024neorl}. 
\looseness=-1
We present additional theoretical results, for example, a sample complexity bound for unsupervised RL algorithms such as~\citet{sekar2020planning, buisson2020actively, sukhija2024optimistic} and a regret bound for the sub-Gaussian noise setting in \cref{appendix: theory}. Our detailed proofs are also provided in \cref{appendix: theory}.

\subsection{Selecting $\lambda_n$ in practice}
\label{sec: selecting lambda}
\looseness=-1
The parameter $\lambda_n$ controls the exploration-exploitation trade-off for \ombrl. 
In \cref{appendix: theory} we provide the theoretical bound for $\lambda_n$, however in practice, $\lambda_n$ is treated as a hyperparameter. This is similar to other optimistic exploration and intrinsic exploration algorithms~\citep{burda2018exploration, kakade2020information, curi2020efficient}, which also heuristically select the amount of exploration. \citet{sukhija2024maxinforl} 
empirically study combining extrinsic and intrinsic rewards for model-free algorithms and propose an approach for automatically tuning the intrinsic reward coefficient, i.e., $\lambda_n$. 
We find their approach works well for our state-based and visual control tasks. Moreover, we describe their approach and how we choose $\lambda_n$ for our experiments in~\cref{appendix: experiment_details}.

\begin{figure}[t]
  \centering
  \begin{overpic}[width=\textwidth]{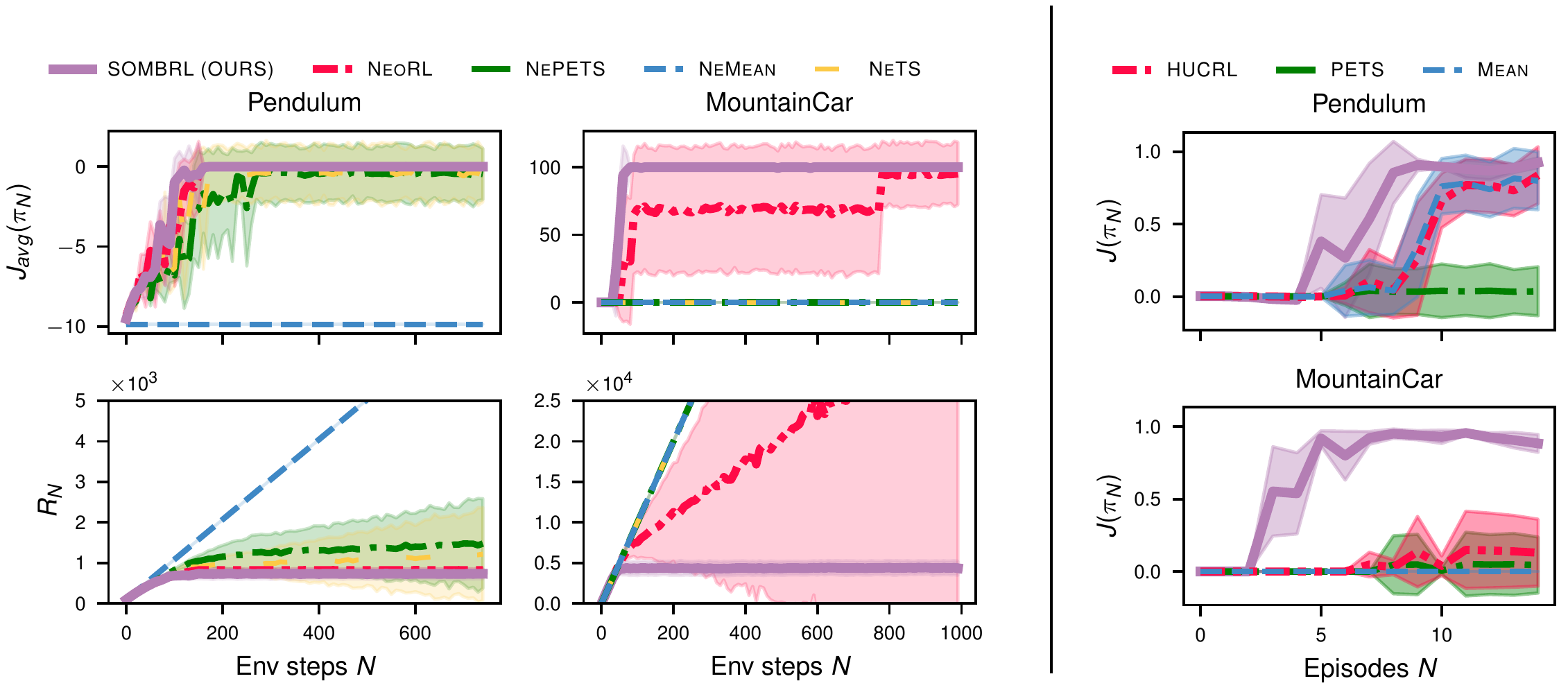}
    \put(23,41.5){Nonepisodic Setting}
    \put(78,41.5){Episodic Setting}
  \end{overpic}
  \caption{
  \emph{Left:} Learning curves for the nonepisodic setting with GP dynamics. We report the average reward $J_{avg}(\vpi_N)$ and regret $R_N$. The curves are reported with 5 seeds, and we plot the median return with its standard deviation.
  \emph{Right:} Learning curves for the episodic setting with GP dynamics. We report the median episode reward $J(\vpi_N)$ over an episode with 5 seeds and its standard deviation.}
  \label{fig:overlay}
\end{figure}

\subsection{Application of \ombrl with GP dynamics}
Finally, we empirically validate our theoretical findings for the GP case in \cref{fig:overlay}, where we
compare \ombrl to HUCRL~\citep{curi2020efficient}, PETS~\citep{chua2018pets}, and greedy (mean) planning in the episodic setting. For the nonepisodic setting, we consider their nonepisodic counterparts as proposed in \citet{sukhija2024neorl}. We evaluate the algorithms on the Pendulum and MountainCar tasks from the OpenAI Gym benchmark~\citep{brockman2016openai}. From the experiments, we conclude that \ombrl performs the best across all baselines for both the episodic and the nonepisodic setting. Moreover, while HUCRL and \neorl, which explore according to \cref{eq:optimistic plan expensive}, perform better than other baselines, they are worse than \ombrl. We believe this is because of the practical challenges associated with solving the optimization problem in \cref{eq:optimistic plan expensive}. Moreover, solving \cref{eq:optimistic plan expensive} is also computationally more expensive. For instance, in our experiments, HUCRL requires roughly $3\times$ more compute time than OMBRL (see \cref{subsec:compute cost}).

\section{Experiments} \label{sec: experiments}
%We highlight in \cref{sec: method} that \ombrl is a simple, flexible, and scalable approach for exploration in model-based RL. To illustrate this empirically, 
In our experiments, we showcase the flexibility and scalability of \ombrl by
combining it with three different model-based RL algorithms; (\emph{i}) MBPO~\citep{janner2019trust} for state-based tasks, \textsc{Dreamer}~\citep{hafner2023mastering} for visual control tasks, and \textsc{SimFSVGD}~\citep{rothfuss2024bridging} for our hardware experiment on the RC car. 
Note that principled exploration methods such as \citet{kakade2020information, curi2020efficient} do not scale to the settings, such as visual control tasks, considered in this work.
We consider the DeepMind control (DMC) benchmark~\citep{tassa2018deepmind} for the state-based and visual control tasks and test on environments with varying dimensionality\footnote{including the humanoid from DMC: $d_\vx = 67$, $d_\vu = 21$}. We also evaluate on several environments from the Atari benchmark~\citep{bellemare2013arcade} for the visual control tasks. In all our experiments, we report the episodic returns using the median over 5 seeds along with its standard deviation. We provide additional experiment details in \cref{appendix: experiment_details}.

%\vspace{-0.5em}
\paragraph{State-based experiments}
\looseness=-1
We refer to the MBPO version of \ombrl as \textsc{MBPO-Optimistic}. The resulting algorithm operates similarly to~\citet{janner2019trust} and trains a policy from real and model-generated rollouts to maximize the extrinsic and intrinsic rewards. For the policy training, we use the soft actor-critic (SAC) algorithm~\citep{haarnoja2018soft},
and for the intrinsic reward coefficient, $\lambda_n$, we use the auto-tuning approach from~\citet{sukhija2024maxinforl}. 
We train an ensemble of dynamics models and use their disagreement to quantify the epistemic uncertainty. 
As baselines, we consider (\emph{i}) \textsc{MBPO-Mean}, which maximizes only the extrinsic reward, i.e., $\lambda_n = 0$, and (\emph{ii}) \textsc{MBPO-PETS}, which is based on the PETS algorithm~\citep{chua2018pets} maximizing the extrinsic rewards in expectation over the ensemble dynamics (see~\cref{eq: greedy sampling}).
We report the results on the left side of~\cref{fig:panel1}. We conclude that across all tasks, \textsc{MBPO-Optimistic} performs the best. Particularly, in sparse reward tasks such as the Mountaincar and CartPole, \textsc{MBPO-Optimistic} successfully solves the task whereas the greedy baselines fail. \textsc{MBPO-Optimistic} also successfully scales to high dimensional problems such as the Quadruped and Humanoid environments. We provide additional experiments with \textsc{MBPO-Optimistic} in \cref{appendix: additional exps}, where we evaluate it on more environments and compare it with pure off-policy algorithms SAC and MaxInfoRL~\citep{sukhija2024maxinforl}.

%\vspace{-0.5em}
\paragraph{Visual control experiments}
We investigate the scalability of \ombrl to challenging and high-dimensional problems by evaluating it on visual control tasks. We combine \ombrl with \textsc{Dreamer}~\citep{hafner2023mastering}, an MBRL algorithm for visual control problems,
and call the resulting algorithm \textsc{Dreamer-Optimistic}. We use the same approach as \citet{sekar2020planning} for quantifying the epistemic uncertainty and for selecting the intrinsic reward coefficient, $\lambda_n$, we use the auto-tuning approach from~\citet{sukhija2024maxinforl}. 
We report the results on the right side of~\cref{fig:panel1}. Overall, \textsc{Dreamer-Optimistic} 
performs on-par with \textsc{Dreamer} on most tasks and
outperforms it on the Finger-spin task from DMC and the Venture task from the Atari benchmark. Particularly, for Venture, a sparse reward task, Dreamer fails to achieve any reward. 
\begin{figure*}[t]
    \centering
      \begin{overpic}[width=\textwidth]{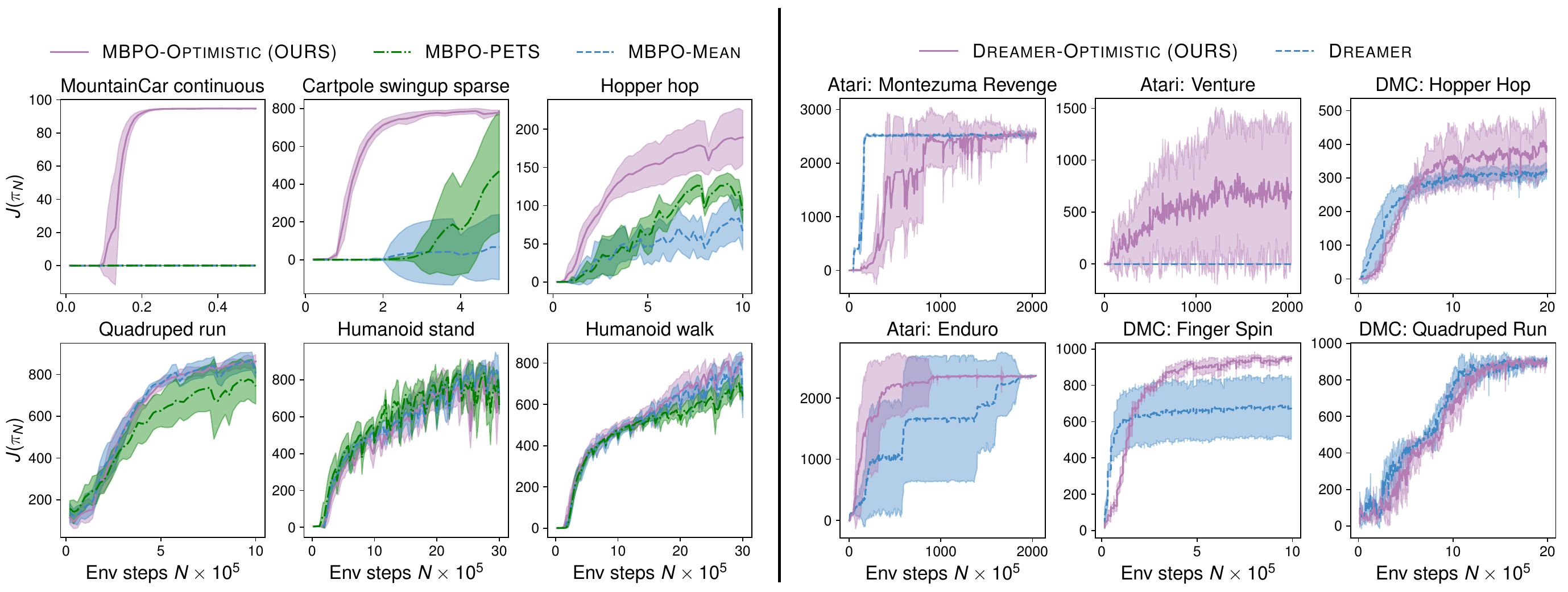}
    \put(12,36){\small Model based policy optimization}
    \put(65,36){\small Visual control tasks}
  \end{overpic}
    \caption{\emph{Left:} Learning curves for the state-based tasks from DMC using MBPO as the base algorithm. Across all experiments, \textsc{MBPO-Optimistic} obtains the best performance compared to its greedy variants. \textsc{MBPO-Optimistic} also scales to high-dimensional tasks, specifically the humanoid environments from DMC. 
    \emph{Right:} 
    Learning curves for the visual control tasks from DMC and Atari using \textsc{Dreamer} as the base algorithm. \textsc{Dreamer-Optimistic} either performs on-par or better than \textsc{Dreamer} in all our experiments. Particularly, in the Venture task from the Atari benchmark, where \textsc{Dreamer} fails to obtain any rewards.}
    %\vspace{-4mm}
            \label{fig:panel1}
            % \vspace{-1.5em}
\end{figure*}
\begin{figure*}[th]
    \centering
          \begin{overpic}[width=\textwidth]{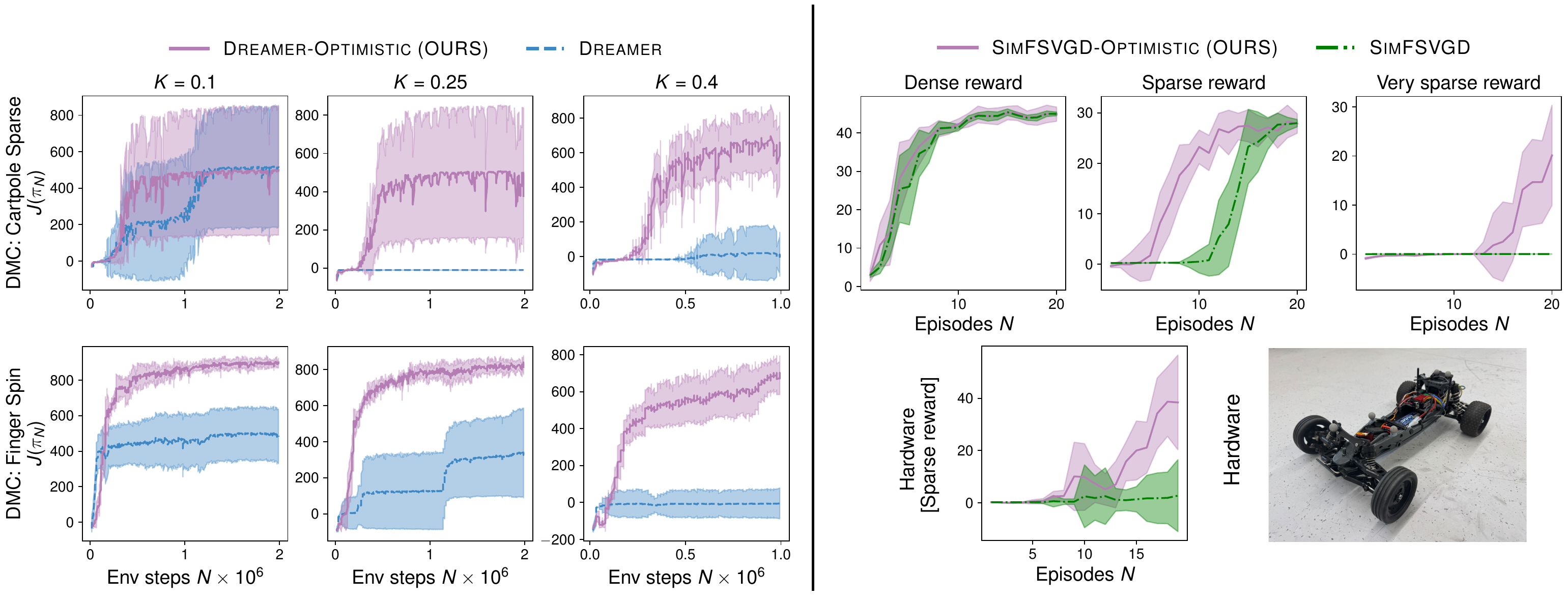}
    \put(10,37){\small Visual control learning with action cost}
    \put(67,37){\small Online RL on RC Car}
  \end{overpic}
    \caption{\emph{Left:} Learning curves with action costs, where we compare \textsc{Dreamer} with \textsc{Dreamer-Optimistic}. \textsc{Dreamer} fails to explore sufficiently with  action costs, whereas \textsc{Dreamer-Optimistic} is able to explore and obtain much higher performance. 
    \emph{Right:}
    Learning curves for our experiments with \textsc{SimFSVGD}. \emph{Top row}: We change the parameters of the reward function from \citet{rothfuss2024bridging}, and make it sparse, starting from their dense reward. We observe that, as the reward gets sparser, \textsc{SimFSVGD} drops in performance and \textsc{SimFSVGD-Optimistic} outperforms it. \emph{Bottom row}: We run the sparse reward configuration on hardware (depicted on the right side at the bottom), where we obtain similar results. As opposed to \textsc{SimFSVGD-Optimistic}, \textsc{SimFSVGD} fails to solve the task.}
            \label{fig:panel2}
            %\vspace{-1.5em}
\end{figure*}
\looseness=-1
\citet{curi2020efficient} study the sensitivity of greedy exploration algorithms w.r.t.~the action penalties in the reward. Inspired by their experiments, we modify the reward for the CartPole and Finger spin environments by adding an action cost, $r_{\text{action}}(\va) = -K \norm{\va}_2$, where $K$ controls the penalty for large actions. \citet{curi2020efficient} show that even for small action costs, greedy exploration methods fail, converging to the sub-optimal solution of applying small actions. We observe a similar outcome in \cref{fig:panel2} (left side), where \textsc{Dreamer} fails to solve the tasks for both the Finger spin and CartPole environments. On the other hand, \textsc{Dreamer-Optimistic} achieves much higher returns due to its optimistic exploration. 

    % \begin{minipage}[t]{0.2\textwidth}
    %     \centering
    %     \includegraphics[width=\linewidth]{figures/sim_fsvgd/hardware/Online RL on RC Car Hardware.pdf} % Replace with your figure
    %     \caption{Figure 3 caption}
    % \end{minipage}
% \begin{figure}[ht]
%     \centering
%     \includegraphics[width=0.75\linewidth]{figures/dreamer/vision_action_costs.pdf}
%     \caption{Caption}
%     \label{fig:dreamer-visual-ac}
% \end{figure}
We provide additional experiments with \textsc{Dreamer-Optimistic}, including more environments and proprioceptive tasks in \cref{appendix: additional exps}. 
%Even though \textsc{Dreamer-Optimistic} either performs on-par or better than \textsc{Dreamer}, in some cases it also spends more interactions exploring. This is particularly the case for the Finger turn hard and reacher hard environments (see \cref{fig:dreamer-visual-appendix} in \cref{appendix: additional exps}). In essence, 
%instead of auto-tuning $\lambda_n$ as proposed by \citet{sukhija2024maxinforl},
%we can select a smaller value for it to reduce the level of exploration. Overall, we believe investigating an instance-dependent schedule for $\lambda_n$ is an intresting direction for future work.

%\vspace{-0.5em}
\paragraph{Hardware experiments}    \looseness=-1 \citet{rothfuss2024bridging} propose a novel approach for training deep Bayesian models that incorporates low-fidelity physical priors. Their approach significantly improves sample efficiency, which they illustrate in their hardware experiments on an RC car. Inspired by their experimental setup, we conduct a similar experiment on a highly dynamic RC car. The task is to perform a complex parking maneuver with drifting as depicted in \cref{fig:rccar_teaser}. 
We use the same experimental setup as \citet{rothfuss2024bridging}.

%Similar to \citet{rothfuss2024bridging}, we use a simple bicycle model as the low-fidelity prior for \textsc{SimFSVGD}. We use the same reward function structure and hyperparameters as \citet{rothfuss2024bridging}. 
First, we evaluate our algorithm \textsc{SimFSVGD-Optimistic}, a combination of \ombrl and \textsc{SimFSVGD}, in simulation. The simulation is based on a realistic race car simulation from~\citet{kabzan2020amz}. For the simulation experiments, we ablate different choices for the reward parameters, starting with the dense reward configuration from \citet{rothfuss2024bridging} and adapt parameters to obtain sparser rewards (see \cref{appendix: experiment_details} for more detail). We report the results in the top row of~\cref{fig:panel2}. For the dense reward setting, \textsc{SimFSVGD} and \textsc{SimFSVGD-Optimistic} perform similarly, but for sparser rewards, \textsc{SimFSVGD-Optimistic} outperforms \textsc{SimFSVGD}. In particular, for the setting with very sparse rewards, \textsc{SimFSVGD}  completely fails to solve the task. We conduct our hardware experiments using the sparse reward configuration, and report the learning curve in the bottom row of~\cref{fig:panel2}. In line with the simulation experiments, we also observe similar behavior on hardware. \textsc{SimFSVGD-Optimistic} learns to solve the task, whereas \textsc{SimFSVGD} completely fails. In fact, out of 5 attempts, \textsc{SimFSVGD} worked only once, and otherwise converged to a local optimum of not moving from the starting position\footnote{Video is available at \url{https://sukhijab.github.io/projects/sombrl/}}.

% \begin{figure}[ht]
%     \centering
%     \includegraphics[width=\linewidth]{figures/sim_fsvgd/simulation/Online RL on RC Car Simulation.pdf}
%     \caption{Caption}
%     \label{fig:sim_rccar}
% \end{figure}

% \begin{figure}[ht]
%     \centering
%     \includegraphics[width=0.6\linewidth]{figures/sim_fsvgd/hardware/Online RL on RC Car Hardware.pdf}
%     \caption{Caption}
%     \label{fig:hw}
% \end{figure}

% \begin{figure}[ht]
%     \centering
%     \includegraphics[width=\linewidth]{figures/gps/pendulum_gps.pdf}
%     \caption{Caption}
%     \label{fig:dreamer-visual-ac}
% \end{figure}

\section{Conclusion}
In this work, we propose \ombrl, which maximizes a weighted sum of the extrinsic reward and the agent's epistemic uncertainty.
We show that \ombrl effectively performs optimistic exploration and provide regret bounds for it in a variety of settings, in particular for continuous state-action spaces and many common classes of RL problems, namely, finite-horizon, infinite-horizon, episodic, and non-episodic RL. To the best of our knowledge, we are the first to provide these theoretical guarantees, yielding a more flexible, scalable, and principled algorithm for exploration.
%Compared to prior optimistic exploration methods, \ombrl is much simpler, more flexible, and scalable. 
We illustrate the strengths of \ombrl in our experiments, where we combine \ombrl with different model-based RL algorithms, evaluate it on tasks of varying dimensionality, including visual control problems, and also illustrate the benefits of optimistic exploration on hardware. In all cases, \ombrl achieves the best performance, being significantly more scalable and computationally cheaper than prior optimistic exploration methods and stronger at exploration than SOTA deep RL baselines. 

\looseness=-1
%A limitation of \ombrl is its tendency to over-explore in certain scenarios, which can lead to reduced sample efficiency compared to greedy approaches. This challenge could potentially be addressed by tuning the schedule for the parameter $\lambda_n$.  In this work, we use the auto-tuning approach proposed in~\citet{sukhija2024maxinforl}. However, we believe investigating other approaches for selecting $\lambda_n$ is a promising direction for future work. Another 
A limitation of \ombrl is that it requires training a probabilistic model, e.g., deep ensembles, for quantifying epistemic uncertainty. We report the computational cost of our method in \cref{subsec:compute cost} and show that the computational overhead is small compared to the training cost for the actor and critic. 

%A promising direction for future work is to study lower bounds on the regret. These bounds are not yet available for many model-based algorithms, including those by \citet{kakade2020information, curi2020efficient, sukhija2024neorl}, and could provide valuable theoretical insights into the performance of these methods.

\subsubsection*{Acknowledgments}
We thank Scott Sussex for the insightful discussion and feedback on this work. 
This project has received funding from the Swiss National Science Foundation under NCCR Automation, grant agreement 51NF40 180545, the Microsoft Swiss Joint Research Center, and the SNSF Postdoc Mobility Fellowship 211086. Bhavya Sukhija was gratefully supported by ELSA (European Lighthouse on Secure and Safe AI) funded by the European Union under grant agreement No. 101070617. 
\bibliography{references}
\bibliographystyle{icml2025}

\clearpage

\appendix
\section{Related Work}
\label{sec: related works}
\paragraph{Deep model-based RL}
Model-based RL algorithms offer a sample-efficient solution for learning directly in the real world~\cite{hansen2022modem, wu2023daydreamer, rothfuss2024bridging}. Most widely applied algorithms~\citep{chua2018pets,
janner2019trust,
hafner2023mastering, hansen2023td} commonly rely on naive exploration techniques such as Boltzmann exploration and differ primarily in the type of dynamics modeling and policy planners. 
\citet{cesa2017boltzmann} show that Boltzmann exploration is suboptimal even in the simplified setting of stochastic bandits. \ombrl is agnostic to the choice of modeling and planners, as we demonstrate in \cref{subsec: theory main} and \ref{sec: experiments}. Moreover, we focus on the problem of exploration for MBRL and propose a principled exploration approach. We derive regret bounds for 
our approach, showing that it is theoretically grounded. Furthermore, we illustrate the benefits of principled exploration in our hardware experiment, where the naive exploration baseline fails to obtain any meaningful exploration. To the best of our knowledge, we are the first to propose a simple, flexible, scalable, and theoretically grounded approach for principled exploration and show its benefits directly in the real world.

\vspace{-0.5em}
\paragraph{Theoretical results for Model-based RL} 
There are numerous works that study MBRL for linear dynamical systems theoretically~\citep{abbasi2011regret, cohen2019learning, simchowitz2020naive, dean2020sample, faradonbeh2020optimism, abeille2020efficient,
treven2021learning}, focusing primarily on the challenges of nonepisodic learning. In the nonlinear case, \citet{kakade2020information, 
curi2020efficient, mania2020active,wagenmaker2023optimal,
treven2024ocorl} analyze the finite-horizon episodic setting and provide regret bounds that are sublinear for many RKHS. Recently, \citet{sukhija2024neorl} extended these results to the nonepisodic setting. Crucially, most of these algorithms are based on the principle of optimism in the face of uncertainty and require solving the problem in \cref{eq:optimistic plan expensive}. As highlighted in \cref{sec: intro} and \ref{sec: exploration strategies in MBRL}, solving these problems is often intractable or computationally expensive. Therefore, naive exploration techniques, such as Boltzmann exploration are more widely used. \ombrl addresses this drawback and proposes an alternative optimistic exploration method, which is much simpler and more scalable. Furthermore, it enjoys the same asymptotic guarantees as these methods and hence is also theoretically grounded.

\vspace{-0.5em}
\paragraph{Intrinsic exploration in RL}
\looseness=-1
Intrinsic rewards are often used as a surrogate objective for principled exploration in challenging tasks \citep[see][for a comprehensive survey]{aubret2019survey}. 
 %Intrinsic rewards depend on the RL agent's model of the system and encourage exploration by suggesting policies that visit transitions where the learned model has less ``information''. 
 Common choices of intrinsic rewards are model prediction error or ``Curiosity''~\citep{schmidhuber1991possibility, pathak2017curiosity, burda2018exploration}, novelty of transitions/state-visitation counts~\citep{stadie2015incentivizing,bellemare2016unifying}, diversity of skills/goals~\citep{eysenbach2018diversity, DADS, nair2018visual, skewfit}, empowerment~\citep{klyubin2005empowerment, salge2014empowerment},
    and information gain of the dynamics~\citep{sekar2020planning, mendonca2021discovering, sukhija2024optimistic}. 
    However, these rewards are mostly used for pure exploration and rarely considered in combination with the extrinsic reward. We show that combining the model epistemic uncertainty, an intrinsic reward, with the extrinsic one, effectively performs optimistic exploration, thus, providing a theoretical grounding for our approach.
    There are a few works from bandits~\citep{auer2002finite, srinivas}, data-driven control~\citep{aastrom1971problems, 
chiuso2023harnessing, grimaldi2024bayesian}, and RL~\citep{abeille2020efficient, sukhija2024maxinforl} that have also proposed maximizing extrinsic rewards jointly with epistemic uncertainty. 
The data-driven control community refers to this as the separation principle between model identification and control design~\citep{aastrom1971problems, chiuso2023harnessing, grimaldi2024bayesian}. In RL, \citet{abeille2020efficient} show duality between \cref{eq:optimistic plan expensive} and \cref{eq:optimistic plan} for linear systems. For nonlinear systems and deep RL,
\citet{sukhija2024maxinforl} empirically study combining extrinsic and intrinsic rewards. %maximizing a weighted sum of the information gain with the extrinsic reward. In particular,
%they propose a practical approach for
%auto-tuning the intrinsic reward weight $\lambda_n$.
% However, their work is empirical and focuses on the model-free setting. We study the model-based case and motivate our approach theoretically for nonlinear systems.
However, compared to these works, we 
demonstrate \ombrl's scalability to high-dimensional settings such as visual control tasks, and
additionally, ground this approach theoretically, providing regret bounds for nonlinear systems and common RL settings. 
%In particular,  \citet{sukhija2024maxinforl} propose a unified approach to automatically tune $\lambda_n$ for several off-policy algorithms. However, while the method performs well empirically, it is much less understood theoretically. \ombrl on the other hand, provides firs-of-its-kind regret bounds for a very broad class of RL problems while also achieving strong empirical results.
\clearpage
\section{Proofs} \label{appendix: theory}
\subsection{Definition of Well-calibrated Model} \label{sec: well calibration section}
 We define a well-calibrated statistical model of $\vf^*$, which captures both the mean prediction $\vmu_n$ and the uncertainty $\vsigma_n$ of our learned model.
%that allows us to learn an uncertainty-aware model of $\vf^*$ from data. More formally, we assume we learn a well-calibrated statistical model of $\vf^*$ as defined in the following.
\begin{definition}[Well-calibrated statistical model of $\vf^*$, \cite{rothfuss2023hallucinated}]
\label{definition: well-calibrated model}
    Let $\setZ \defeq \setX \times \setU$.
    A sequence of sets $\{\setM_{n}(\delta)\}_{n \ge 0}$, where
    \begin{align*}
        \setM_n(\delta) \defeq &\left\{\vf: \setZ \to \R^{d_\vx} \mid \forall \vz \in \setZ, \forall j \in \setmath{1, \ldots, d_\vx}: \abs{\mu_{n, j}(\vz) - f_j(\vz)} \le \beta_n(\delta) \sigma_{n, j}(\vz)\right\},
    \end{align*}
    is an all-time well-calibrated statistical model of the function $\vf^*$,
    if, with probability at least $1-\delta$, we have $\vf^* \in \bigcap_{n \ge 0}\setM_n(\delta)$.
    Here, $f_{j}$, $\mu_{n, j}$ and $\sigma_{n, j}$ denote the $j$-th element in the vector-valued functions $\vf$, $\vmu_n$ and $\vsigma_n$ respectively, and $\beta_n(\delta) \in \Rzero$ is a scalar function that depends on the confidence level $\delta \in (0, 1]$ and which is monotonically increasing in $n$. 
% Let $\setS_n$ refer to the model set, where
  %  $\setS_0 = \setM_0$ and $\setS_{n} = \setM_{n-1} \cap \setM_n$.
\end{definition}

In the following, we show that Gaussian processes are well-callibrated models if \cref{ass:rkhs_func} holds. 
\begin{lemma}[Well calibrated confidence intervals for RKHS, \citet{rothfuss2023hallucinated}]
    Let $\vf^* \in \setH_{k,B}^{d_\vx}$.
Suppose ${\vmu}_n$ and $\vsigma_n$ are the posterior mean and variance of a GP with kernel $k$, \Cref{eq:GPposteriors}.
There exists $\beta_n(\delta)$, for which the tuple $(\vmu_n, \vsigma_n, \beta_n(\delta))$ is a well-calibrated statistical model of $\vf^*$.
\label{lem:rkhs_confidence_interval}
\end{lemma}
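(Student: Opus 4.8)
The plan is to reduce the vector-valued statement to a scalar concentration bound applied coordinate-wise and then combine the coordinates by a union bound. Fix an output index $j \in \setmath{1, \ldots, d_\vx}$. Under \cref{ass:rkhs_func} the scalar target $f^*_j$ lies in the RKHS $\setH_k$ with $\norm{f^*_j}_k \le B$, and under \cref{ass:lipschitz_continuity} the observations $\vy_{1:n}^j$ are $f^*_j$ evaluated at the visited inputs corrupted by i.i.d.\ $\setN(0, \sigma^2)$ noise. The posterior mean $\mu_{n,j}$ and variance $\sigma^2_{n,j}$ in \cref{eq:GPposteriors} are then exactly the kernel ridge regression predictor and its posterior width for this one-dimensional regression problem, so the coordinate-wise claim is precisely the setting covered by the self-normalized RKHS concentration inequalities from the kernelized-bandit literature.

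Second, I would invoke that concentration result (\citet{abbasi2011regret, chowdhury2017kernelized}) to obtain, with probability at least $1 - \delta'$ and simultaneously for all $n \ge 0$ and all $\vz \in \setZ$,
\[
\abs{\mu_{n,j}(\vz) - f^*_j(\vz)} \le \beta_n(\delta')\, \sigma_{n,j}(\vz), \qquad \beta_n(\delta') = B + \sigma \sqrt{2\left(\Gamma_n(k) + 1 + \ln(1/\delta')\right)}.
\]
Two features of this bound are essential. The supremum over $\vz$ is free, because both sides are RKHS evaluation functionals of the single fitted function, so no discretization or covering of $\setZ$ is needed. The supremum over $n$ is also free: the self-normalized martingale (method-of-mixtures) construction underlying the inequality delivers a confidence sequence that is valid uniformly in $n$, so we do not pay an extra union bound over episodes. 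Moreover $\beta_n(\delta')$ is monotonically increasing in $n$ because the information gain $\Gamma_n(k) = \tfrac{1}{2}\log\det{\mI + \sigma^{-2}\mK_n}$ is nondecreasing as data accumulate, which matches the monotonicity required in \cref{definition: well-calibrated model}.

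Third, I would take a union bound over the $d_\vx$ coordinates, setting $\delta' = \delta / d_\vx$. On the intersection of the $d_\vx$ good events, which holds with probability at least $1 - \delta$, the displayed inequality is true for every $j$, every $\vz$, and every $n$ at once; relabeling $\beta_n(\delta) := \beta_n(\delta/d_\vx)$ yields $\vf^* \in \bigcap_{n \ge 0} \setM_n(\delta)$, which is the definition of a well-calibrated statistical model.

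The main obstacle is establishing the anytime, in-probability validity of the confidence band, i.e.\ the uniform-in-$n$ guarantee. A per-episode Chernoff bound followed by a union bound over $n$ would not close, so the argument must rely on the supermartingale structure of the self-normalized estimator together with Ville's inequality. Verifying that this structure applies requires that the input at step $t$ is predictable with respect to the noise filtration (actions are selected from past data) and that the Gaussian, or more generally sub-Gaussian, noise supplies the moment-generating-function control the martingale argument needs; this is exactly where the noise assumption in \cref{ass:lipschitz_continuity} enters. The remaining steps---identifying the GP posterior with kernel ridge regression and the union bound over coordinates---are routine.
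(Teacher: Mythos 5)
Your proof is correct and takes essentially the same route the paper relies on: the paper gives no in-line proof of this lemma but imports it from \citet{rothfuss2023hallucinated}, whose argument is exactly your construction---apply the anytime self-normalized RKHS confidence bound of \citet{chowdhury2017kernelized} (valid uniformly in $n$ and $\vz$ via the method-of-mixtures supermartingale) coordinate-wise to each $f^*_j$, then union-bound over the $d_\vx$ output dimensions with $\delta' = \delta/d_\vx$, noting that monotonicity of $\Gamma_n(k)$ gives the required monotonicity of $\beta_n(\delta)$. The only detail left implicit is the bookkeeping of the regularization convention ($\sigma^2$ versus the scaling used in \citet{chowdhury2017kernelized}) inside the posterior variance, which affects constants but not the statement.
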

\looseness=-1
In summary, in the RKHS setting, a GP is a well-calibrated model. 
\subsection{Analysis for the finite horizon case}
\begin{lemma}
Let \cref{ass:lipschitz_continuity} and \cref{ass:rkhs_func} hold. 
Consider the following definitions
\begin{align*}
    &J(\vpi, \vf^*) = \E_{\vf^*}\left[\sum_{t=0}^{T-1} r(\vx_t, \vpi(\vx_{t}))\right], \; \text{s.t., }  \vx_{t+1} = \vf^*(\vx_{t}, \vpi(\vx_{t})) + \vw_t, \quad \vx_0 = \vx(0). \\
    &J(\vpi, \vmu_n) = \E_{\vmu_n}\left[\sum_{t=0}^{T-1} r(\vx'_t, \vpi(\vx'_{t}))\right], \; \text{s.t., } \vx'_{t+1} = \vmu_n(\vx'_{t}, \vpi(\vx'_{t})) + \vw_t, \quad \vx'_0 = \vx(0). \\
    &\Sigma_n(\vpi, \vf^*) = \E_{\vf^*}\left[\sum_{t=0}^{T-1} \norm{\vsigma_n(\vx_t, \vpi(\vx_{t}))}\right], \; \text{s.t., }  \vx_{t+1} = \vf^*(\vx_{t}, \vpi(\vx_{t})) + \vw_t, \quad \vx_0 = \vx(0). \\
    &\Sigma_n(\vpi, \vmu_n) = \E_{\vmu_n}\left[\sum_{t=0}^{T-1} \norm{\vsigma_n(\vx'_t, \vpi(\vx'_{t}))}\right]. \; \text{s.t., } \vx'_{t+1} = \vmu_n(\vx'_{t}, \vpi(\vx'_{t})) + \vw_t, \quad \vx'_0 = \vx(0). \\
    &\lambda_n = C_{\max} T \frac{(1 + \sqrt{d_x}) \beta_{n-1}(\delta)}{\sigma},
%    \lambda_n(L_r, \vmu_n) &= (1 + d_x)L_r (1 + L_{\vpi})\bar{L}^{T-1}_{\vf} T \beta_n.
    \end{align*}
    where $C_{\max} = \max\{R_{\max}, \sigma_{\max}\}$.
Then we have for all $n \geq 0$, $\vpi \in \Pi$ with probability at least $1-\delta$
\begin{align*}
     |J(\vpi, \vf^*) - J(\vpi, \vmu_n)| &\leq \lambda_n \Sigma_n(\vpi, \vmu_n) \\
    |J(\vpi, \vf^*) - J(\vpi, \vmu_n)| &\leq \lambda_n \Sigma_n(\vpi, \vf^*) 
\end{align*}
\label{lemma: Main Lemma Gaussian noise}
\end{lemma}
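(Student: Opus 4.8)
The plan is to establish a \emph{simulation lemma} for a fixed policy evaluated under two different transition models and then to control the resulting per-step model mismatch by the calibrated uncertainty. Fix $\vpi \in \Pi$ and $n$, and for a dynamics model $\vf$ let $V^{\vpi}_{\vf, h}(\vx) = \E_{\vf}[\sum_{t=h}^{T-1} r(\vx_t, \vpi(\vx_t)) \mid \vx_h = \vx]$ denote the horizon-$h$ value function under the transition kernel $\vx_{t+1} \sim \setN(\vf(\vx_t, \vpi(\vx_t)), \sigma^2 \mI)$, so that $J(\vpi, \vf) = V^{\vpi}_{\vf,0}(\vx_0)$. Applying the value-difference (policy-difference) identity of \citet{kakade2002approximately} with a \emph{fixed} policy but the two kernels induced by $\vf^*$ and $\vmu_n$ --- which is exactly the simulation lemma of \citet{kearns2002near} in our setting --- gives the telescoping decomposition
\begin{equation*}
J(\vpi, \vf^*) - J(\vpi, \vmu_n) = \sum_{h=0}^{T-1} \E\!\left[\,\E_{\vy \sim \setN(\vf^*(\vz_h), \sigma^2 \mI)}\!\big[V^{\vpi}_{\vmu_n, h+1}(\vy)\big] - \E_{\vy \sim \setN(\vmu_n(\vz_h), \sigma^2 \mI)}\!\big[V^{\vpi}_{\vmu_n, h+1}(\vy)\big]\right],
\end{equation*}
where $\vz_h = (\vx_h, \vpi(\vx_h))$ and the outer expectation is over the trajectory generated by the \emph{true} dynamics $\vf^*$. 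Evaluating instead $V^{\vpi}_{\vf^*, h+1}$ along the trajectory of the \emph{mean} dynamics yields the symmetric decomposition with the outer expectation over $\vmu_n$; these two forms will produce the two claimed inequalities.

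The crux is to bound each summand without any Lipschitz assumption on the value function, since \cref{ass:lipschitz_continuity} only grants continuity. Writing $\va = \vf^*(\vz_h)$ and $\vb = \vmu_n(\vz_h)$, each summand equals $\int V^{\vpi}_{\vmu_n,h+1}(\vy)\,\big(p_{\va}(\vy) - p_{\vb}(\vy)\big)\,d\vy$ for the two shifted Gaussian densities $p_{\va}, p_{\vb}$. Because the reward lies in $[0, R_{\max}]$, the value function satisfies $0 \le V^{\vpi}_{\vmu_n, h+1} \le T R_{\max}$, so this integral is at most its oscillation times the total-variation distance between $\setN(\va, \sigma^2 \mI)$ and $\setN(\vb, \sigma^2 \mI)$. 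Here the Gaussian-noise assumption is essential: the noise convolution smooths $V$ so that the difference is governed purely by how far apart the two noise distributions are, irrespective of the (non-Lipschitz) regularity of $V$. I would bound this total variation either via Pinsker's inequality together with the closed form $\KL{\setN(\va, \sigma^2\mI)}{\setN(\vb, \sigma^2\mI)} = \norm{\va - \vb}^2/(2\sigma^2)$, or by summing the per-coordinate one-dimensional total-variation bounds; either route produces a factor proportional to $\norm{\va - \vb}/\sigma$, and the coordinatewise route is what introduces the dimensional factor of order $\sqrt{d_\vx}$ (yielding the stated $1 + \sqrt{d_\vx}$) when converting $\norm{\cdot}_1$ to $\norm{\cdot}_2$.

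It then remains to convert the model mismatch $\norm{\vf^*(\vz_h) - \vmu_n(\vz_h)}$ into the calibrated uncertainty. By \cref{lem:rkhs_confidence_interval}, with probability at least $1-\delta$ the event $\vf^* \in \bigcap_{n} \setM_n(\delta)$ holds, on which $\abs{f^*_j(\vz) - \mu_{n,j}(\vz)} \le \beta_{n-1}(\delta)\,\sigma_{n,j}(\vz)$ for every $n$, $\vz$, and coordinate $j$ simultaneously; squaring and summing over $j$ gives $\norm{\vf^*(\vz) - \vmu_n(\vz)} \le \beta_{n-1}(\delta)\,\norm{\vsigma_n(\vz)}$. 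Substituting this into each summand, bounding all value functions by $C_{\max} T$ with $C_{\max} = \max\{R_{\max}, \sigma_{\max}\}$ (chosen conservatively so that the same $\lambda_n$ also controls the analogous uncertainty-valued quantities used later), and summing the per-step terms $\lambda_n \norm{\vsigma_n(\vz_h)}$ over $h = 0, \dots, T-1$ recovers exactly $\lambda_n \Sigma_n(\vpi, \vf^*)$ (respectively $\lambda_n \Sigma_n(\vpi, \vmu_n)$ from the mean-trajectory decomposition). Since the calibration event holds uniformly over $n$ and $\vpi$, both inequalities hold simultaneously for all $n \ge 0$ and $\vpi \in \Pi$ with probability at least $1-\delta$. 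The main obstacle is precisely the step that avoids Lipschitz continuity of $V$: the telescoping reduces everything to the difference of two Gaussian-smoothed value functions, and recognizing that this difference is controlled by a total-variation/KL bound between the shifted Gaussians --- rather than by a gradient of $V$ --- is the key idea that makes the argument go through under mere continuity.
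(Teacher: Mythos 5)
Your proposal is correct and takes essentially the same route as the paper's proof: the same fixed-policy telescoping decomposition via the policy-difference/simulation lemma, the same key step of exploiting Gaussian smoothing to bound each per-step summand by the model mismatch over $\sigma$ (the paper cites Lemma C.2 of \citet{kakade2020information}, which is precisely the total-variation-type inequality you derive via Pinsker), and the same calibration argument converting $\norm{\vf^*(\vz) - \vmu_n(\vz)}$ into $\beta_{n-1}(\delta)\norm{\vsigma_n(\vz)}$ before summing over the horizon. The only cosmetic difference is that you make the smoothing inequality self-contained (sup-norm times TV, bounded via Pinsker's inequality) rather than invoking the cited lemma, which yields the same $\min\{\norm{\va-\vb}/\sigma, 1\}$ factor.
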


\begin{proof}
    We give the proof for $|J(\vpi, \vf^*) - J(\vpi, \vmu_n)| \leq \lambda_n(L_r, \vmu_n) \Sigma_n(\vpi, \vmu_n)$. The same argument holds for the second inequality.  
    Let $J_{t+1}(\vpi, \vf^*, \vx)$ denote the cost-to-go from state $\vx$, step $t+1$ onwards under the dynamics $\vf^*$.
    Following the Policy difference Lemma from~\citep{kakade2002approximately} and~\citet[Corollary 2.]{sukhija2024optimistic}
    \begin{align*}
        &J(\vpi, \vmu_n) - J(\vpi, \vf^*) =  \E_{\vmu_n}\left[\sum^{T-1}_{t=0} J_{t+1}(\vpi, \vf^*, \vx'_{t+1}) -  J_{t+1}(\vpi, \vf^*, \hat{\vx}_{t+1})\right], \\
  &\text{with } \hat{\vx}_{t+1} = \vf^*(\vx'_{t}, \vpi(\vx'_t)) + \vw_t, \; \text{and } \vx'_{t+1} = \vmu_n(\vx'_{t}, \vpi(\vx'_t))+ \vw_t.
    \end{align*}
    Therefore,
    \begin{align*}
        &|J(\vpi, \vmu_n) - J(\vpi, \vf^*)| =  \left|\E\left[\sum^{T-1}_{t=0} J_{t+1}(\vpi, \vf^*, \vx'_{t+1}) -  J_{t+1}(\vpi, \vf^*, \hat{\vx}_{t+1})\right]\right| \\
 &\leq \sum^{T-1}_{t=0} \E\left[\left|\E_{\vw_t}\left[J_{t+1}(\vpi, \vf^*, \vx'_{t+1}) -  J_{t+1}(\vpi, \vf^*, \hat{\vx}_{t+1})\right]\right|\right]
% &\leq \sum^{T-1}_{t=0} \E_{\vw_{1:t-1}}\left[\sqrt{\max\left\{\E_{\vw_t}[V^2_{t+1}(\vpi, \vf^*, \vx'_{t+1})],  \E_{\vw_t}[V^2_{t+1}(\vpi, \vf^*, \hat{\vx}_{t+1})]\right\}}\min\left\{\frac{\norm{\vf^*(\vx'_t, \vpi(\vx'_t)) - \vmu_n(\vx'_t, \vpi(\vx'_t))}}{\sigma}, 1\right\}\right] \tag*{ \citep[Lemma C.2.]{kakade2020information}} \\
% &\leq R_{\max} T \sum^{T-1}_{t=0} \E_{\vw_{1:t-1}}\left[\min\left\{\frac{(1 + \sqrt{d_x}) \beta_{n-1}(\delta) \norm{\vsigma_{n-1}(\vx'_t, \vpi(\vx'_t))}}{\sigma}, 1\right\}\right] \tag*{\citep[Corollary 3]{sukhija2024optimistic}} \\
% &\leq R_{\max} T \frac{(1 + \sqrt{d_x}) \beta_{n-1}(\delta)}{\sigma} \sum^{T-1}_{t=0} \E_{\vw_{1:t-1}}\left[\norm{\vsigma_{n-1}(\vx'_t, \vpi(\vx'_t))}\right]
    \end{align*}
Next, we bound the last term using the derivation from \citet{kakade2020information}. Let $C(\vx) = J^2_{t+1}(\vpi, \vf^*, \vx)$.
\begin{align*}
    &\left|\E_{\vw_t}\left[J_{t+1}(\vpi, \vf^*, \vx'_{t+1}) -  J_{t+1}(\vpi, \vf^*, \hat{\vx}_{t+1})\right]\right| \\
    &\leq \sqrt{\max\left\{\E_{\vw_t}[C(\vx'_{t+1})],  \E_{\vw_t}[C(\hat{\vx}_{t+1})]\right\}} \\
    &\times \min\left\{\frac{\norm{\vf^*(\vx'_t, \vpi(\vx'_t)) - \vmu_n(\vx'_t, \vpi(\vx'_t))}}{\sigma}, 1\right\} \tag*{ \citep[Lemma C.2.]{kakade2020information}}\\
    &\le R_{\max} T \frac{(1 + \sqrt{d_x}) \beta_{n-1}(\delta)}{\sigma} \norm{\vsigma_{n-1}(\vx'_t, \vpi(\vx'_t))}
\end{align*}

Therefore, we have
\begin{align*}
    &|J(\vpi, \vmu_n) - J(\vpi, \vf^*)| \\
 &\leq \sum^{T-1}_{t=0} \E\left[\left|\E_{\vw_t}\left[J_{t+1}(\vpi, \vf^*, \vx'_{t+1}) -  J_{t+1}(\vpi, \vf^*, \hat{\vx}_{t+1})\right]\right|\right] \\
 &\leq \lambda_n \sum^{T-1}_{t=0} \E\left[\norm{\vsigma_{n-1}(\vx'_t, \vpi(\vx'_t))}\right].
\end{align*}
\end{proof}

Note that \cref{cor: optimism} follows directly from \cref{lemma: Main Lemma Gaussian noise}.

\begin{lemma}
    Let \cref{ass:lipschitz_continuity} and \cref{ass:rkhs_func} hold and consider the simple regret at episode $n$, $r_n = J(\vpi^*, \vf^*) - J(\vpi_n, \vf^*)$. The following holds for all $n > 0$ with probability at least $1-\delta$
    \begin{equation*}
        r_n \leq  (2\lambda_n + \lambda^2_n)\Sigma_n(\vpi_n, \vf^*)
    \end{equation*}
    \label{lemma: simple regret}
\end{lemma}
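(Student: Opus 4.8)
The plan is to combine the optimism guarantee with the two estimates of \cref{lemma: Main Lemma Gaussian noise} and reduce every term to the true-trajectory uncertainty $\Sigma_n(\vpi_n, \vf^*)$. First I recall that $\vpi_n$ maximizes the surrogate $J_n$ of \cref{eq:optimistic plan}, which decomposes as $J_n(\vpi) = J(\vpi, \vmu_n) + \lambda_n \Sigma_n(\vpi, \vmu_n)$ because the planning rollout uses the mean dynamics $\vmu_n$. Since \cref{cor: optimism} gives $J(\vpi^*, \vf^*) \le J_n(\vpi_n)$, I obtain
\begin{equation*}
r_n = J(\vpi^*, \vf^*) - J(\vpi_n, \vf^*) \le J_n(\vpi_n) - J(\vpi_n, \vf^*) = \big(J(\vpi_n, \vmu_n) - J(\vpi_n, \vf^*)\big) + \lambda_n \Sigma_n(\vpi_n, \vmu_n).
\end{equation*}
This splits $r_n$ into a reward-value gap and a mean-trajectory uncertainty term.

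The reward-value gap is immediate from the second inequality of \cref{lemma: Main Lemma Gaussian noise}, namely $J(\vpi_n, \vmu_n) - J(\vpi_n, \vf^*) \le \lambda_n \Sigma_n(\vpi_n, \vf^*)$, which is already written along the true trajectory. The main obstacle is the leftover term $\lambda_n \Sigma_n(\vpi_n, \vmu_n)$, whose uncertainty is accumulated along the \emph{mean} rollout rather than the true one; to match the target bound I must re-express it through $\Sigma_n(\vpi_n, \vf^*)$, even though the two rollouts drift apart.

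The resolution of this obstacle is the crux of the argument: I observe that $\Sigma_n(\vpi, \cdot)$ is itself the value of $\vpi$ under the intrinsic reward $\norm{\vsigma_n(\cdot)}$ in place of $r$. Because this intrinsic reward is bounded in terms of $\sigma_{\max}$ — precisely the reason $\lambda_n$ carries the constant $C_{\max} = \max\{R_{\max}, \sigma_{\max}\}$ — the entire derivation of \cref{lemma: Main Lemma Gaussian noise} transfers verbatim with $r$ replaced by $\norm{\vsigma_n}$ and the same $\lambda_n$, giving $|\Sigma_n(\vpi_n, \vmu_n) - \Sigma_n(\vpi_n, \vf^*)| \le \lambda_n \Sigma_n(\vpi_n, \vf^*)$ and hence $\Sigma_n(\vpi_n, \vmu_n) \le (1 + \lambda_n)\Sigma_n(\vpi_n, \vf^*)$. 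Substituting both bounds yields
\begin{equation*}
r_n \le \lambda_n \Sigma_n(\vpi_n, \vf^*) + \lambda_n (1 + \lambda_n)\Sigma_n(\vpi_n, \vf^*) = (2\lambda_n + \lambda_n^2)\Sigma_n(\vpi_n, \vf^*),
\end{equation*}
which is the claim. All steps hold on the single well-calibrated event $\vf^* \in \bigcap_{n}\setM_n(\delta)$ of probability at least $1-\delta$, so no additional union bound is required.
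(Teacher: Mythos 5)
Your proof is correct and follows essentially the same route as the paper's: optimism through the mean-dynamics surrogate $J_n$, the second inequality of \cref{lemma: Main Lemma Gaussian noise} for the reward gap, and then bounding $\Sigma_n(\vpi_n, \vmu_n)$ by reapplying \cref{lemma: Main Lemma Gaussian noise} with $\norm{\vsigma_n}$ treated as an intrinsic reward, exactly as the paper does. Your explicit remark that the constant $C_{\max} = \max\{R_{\max}, \sigma_{\max}\}$ in $\lambda_n$ is what licenses reusing the same $\lambda_n$ for the intrinsic-reward application is a nice clarification of a point the paper leaves implicit, but it is the same argument, not a different one.
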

\begin{proof}
    \begin{align*}
        r_n &= J(\vpi^*, \vf^*) - J(\vpi_n, \vf^*) \\
        &\le J(\vpi^*, \vmu_n) + \lambda_n \Sigma_n(\vpi^*, \vmu_n) - J(\vpi_n, \vf^*) \tag{\cref{lemma: Main Lemma Gaussian noise}} \\
        &\leq J(\vpi_n, \vmu_n) + \lambda_n \Sigma_n(\vpi_n, \vmu_n) - J(\vpi_n, \vf^*) \tag{\cref{eq:optimistic plan}} \\
        &= J(\vpi_n, \vmu_n)  - J(\vpi_n, \vf^*)  + \lambda_n \Sigma_n(\vpi_n, \vmu_n) \\
        &\leq \lambda_n \Sigma_n(\vpi_n, \vf^*) + \lambda_n \Sigma_n(\vpi_n, \vmu_n) \tag{\cref{lemma: Main Lemma Gaussian noise}} \\
        &= 2\lambda_n \Sigma_n(\vpi_n, \vf^*) + \lambda_n (\Sigma_n(\vpi_n, \vmu_n) - \Sigma_n(\vpi_n, \vf^*)) \\
        &\leq (\lambda^2_n + 2\lambda_n) \Sigma_n(\vpi_n, \vf^*).
    \end{align*}
    Here in the last inequality, we used the fact that $\norm{\vsigma(\cdot, \cdot)}$ is bounded and positive, therefore, we can treat it similar to the reward (it is in fact an intrinsic reward) and use \cref{lemma: Main Lemma Gaussian noise}.
\end{proof}

\begin{proof}[Proof of \cref{thm: finite horizon regret}]
    \begin{align*}
        R_N &= \sum^N_{n=1} r_n \\
        &\le \sum^N_{n=1}(\lambda^2_n + 2\lambda_n) \Sigma_n(\vpi_n, \vf^*) \\
        &\leq (\lambda^2_N + \lambda_N) \sum^N_{n=1}\Sigma_n(\vpi_n, \vf^*) \\
        &= (\lambda^2_N + 2\lambda_N) \sum^N_{n=1}\E_{\vf^*}\left[\sum_{t=0}^{T-1} \norm{\vsigma_n(\vx_t, \vpi(\vx_{t}))}\right] \\
        &\leq (\lambda^2_N + 2\lambda_N) \sqrt{NT}\sum^N_{n=1}\E_{\vf^*}\left[\sum_{t=0}^{T-1} \norm{\vsigma^2_n(\vx_t, \vpi(\vx_{t}))}\right] \\
        &\leq C(\lambda^2_N + 2\lambda_N)T \sqrt{N\Gamma_{NT}} \tag{\citet[Lemma 17]{curi2020efficient}}
    \end{align*}
    Finally, note that from \cref{lemma: Main Lemma Gaussian noise} we have $\lambda_N \propto T\beta_n$ and $\beta_n \propto \sqrt{\Gamma_n}$~\citep{chowdhury2017kernelized}. Therefore, $R_N \leq \setO(T^3\Gamma^{\sfrac{3}{2}}_N \sqrt{N})$ 
\end{proof}
 \begin{table*}[th]
\begin{center}
\caption{Maximum information gain bounds for common choice of kernels.}
\label{table: gamma magnitude bounds for different kernels}
\begin{tabular}{@{}lll@{}}
\toprule
Kernel&$k(\vx, \vx')$ & $\Gamma_N$ \\ \midrule
    Linear &$\vx^\top \vx'$   & $\mathcal{O}\left(d \log(N)\right)$                    \\
    RBF &$e^{-\frac{\norm{\vx - \vx'}^2}{2l^2}}$& $\mathcal{O}\left( \log^{d+1}(N)\right)$                    \\
    Matèrn &$\frac{1}{\Gamma(\nu)2^{\nu - 1}}\left(\frac{\sqrt{2\nu}\norm{\vx-\vx'}}{l}\right)^{\nu}B_{\nu}\left(\frac{\sqrt{2\nu}\norm{\vx-\vx'}}{l}\right)$  & $\mathcal{O}\left(N^{\frac{d}{2\nu + d}}\log^{\frac{2\nu}{2\nu+d}}(N)\right)$                     \\ \bottomrule
\end{tabular}
\end{center}
\end{table*}
In \cref{table: gamma magnitude bounds for different kernels} we list rates of $\Gamma_N$ for the most common choice of kernels. 

\subsection{Analysis for the discounted infinite horizon case}
For the infinite horizon case, we first study the posterior variance $\vsigma_n$ in the feature space. Moreover, let $\vz = (\vx, \vu)$ and $\setZ = \setX \times \setU$. 

For the ease of notation we denote $\vz_{k, n} = (\vx^{n}_k, \vpi_n(\vx^n_k)$). For $\vz$ we define the kernel embedding $k_{\vz} = k(\vz, \cdot)$. The covariance matrix $\mV_t: \setH \to \setH$ in the feature form is:
\begin{align}
    \mV_{t} = \mI + \frac{1}{\sigma^2}\sum_{i=1}^tk_{\vz_i}k_{\vz_i}^\top.
\end{align}
Note that we have $\vx_{t+1} = \inner{k_{\vz_t}}{\vf^*}_\setH + \vw_t$. With the design matrix $\mM_t: \setH \to \R^{t}$
\begin{align}
    \mM_t = 
    \begin{pmatrix}
     k_{\vz_1} & k_{\vz_2} & \cdots & k_{\vz_t}   
    \end{pmatrix}
\end{align}
we have $\mV_t = \mI + \frac{1}{\sigma^2}\mM_t\mM_t^\top$ and since $\mK_t = \mM_t^\top\mM_t$ we have
\begin{align}
    \determinant(\mV_t) = \determinant\left(\mI + \frac{1}{\sigma^2}\mK_{t}\right)
\end{align}
%We first show that for our choice of $T_n$ the ratio between $\frac{\det{\mV_n}}{\det{\mV_{n-1}}}$ is bounded.

\begin{corollary}[Lower bound on the posterior log determinant]
\begin{align}
\log\left(\det{\mV_{n}}\right) &\geq \log\left(\det{\mV_{n-1}}\right) + \log\left(1 + \sigma^{-2}\sum^{\widehat{T}_n}_{k=1} \norm{\vsigma_{n-1}(\vz_{k, n})}^2\right)
\end{align}
In particular, we have
\begin{equation}
   \log\left(\frac{\det{\mV_{N}}}{\det{\mV_{0}}}\right) \geq \sum^N_{n=1}\log\left(1 + \sigma^{-2}\sum^{\widehat{T}_n}_{k=1} \norm{\vsigma_{n-1}(\vz_{k, n})}^2\right)
\end{equation}
\label{cor: recursive info gain bound}
\end{corollary}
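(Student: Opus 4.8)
The plan is to compare $\mV_n$ with $\mV_{n-1}$ through a single rank-$\widehat{T}_n$ determinant update and to recognise the resulting correction as a sum of posterior variances. First I would record that episode $n$ appends the $\widehat{T}_n$ design points $\vz_{1,n}, \dots, \vz_{\widehat{T}_n, n}$, so that in feature form $\mV_n = \mV_{n-1} + \sigma^{-2}\sum_{k=1}^{\widehat{T}_n} k_{\vz_{k,n}} k_{\vz_{k,n}}^\top$. Collecting the new embeddings into $\mM' = (k_{\vz_{1,n}} \; \cdots \; k_{\vz_{\widehat{T}_n,n}})$ and applying the matrix determinant lemma with $\mV_{n-1}$ in place of the base operator gives $\det{\mV_n} = \det{\mV_{n-1}} \, \det{\mI + \sigma^{-2}\widetilde{\mK}}$, where $\widetilde{\mK} = \mM'^\top \mV_{n-1}^{-1}\mM'$ is the $\widehat{T}_n \times \widehat{T}_n$ matrix with entries $\widetilde{\mK}_{k\ell} = \inner{k_{\vz_{k,n}}}{\mV_{n-1}^{-1} k_{\vz_{\ell,n}}}$.

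The crux is identifying the diagonal of $\widetilde{\mK}$. Using $\mV_{n-1} = \mI + \sigma^{-2}\mM_{n-1}\mM_{n-1}^\top$ together with the push-through identity $\mM_{n-1}(\mM_{n-1}^\top\mM_{n-1} + \sigma^2\mI)^{-1}\mM_{n-1}^\top = \mI - \mV_{n-1}^{-1}$, I would show that $\inner{k_{\vz}}{\mV_{n-1}^{-1}k_{\vz}} = k(\vz, \vz) - \vk_{n-1}(\vz)^\top (\mK_{n-1} + \sigma^2\mI)^{-1}\vk_{n-1}(\vz)$, which is exactly the GP posterior variance at $\vz$ from \cref{eq:GPposteriors}. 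Hence each diagonal entry $\widetilde{\mK}_{kk}$ equals the posterior variance at $\vz_{k,n}$ under the episode-$(n-1)$ model, which, since all $d_\vx$ coordinates share the kernel $k$, coincides with $\norm{\vsigma_{n-1}(\vz_{k,n})}^2$ up to the convention used to stack the outputs, affecting only a multiplicative constant.

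It then remains to lower bound the correction factor by its diagonal. Since $\widetilde{\mK} \succeq 0$, writing its eigenvalues as $\mu_i \ge 0$ gives $\det{\mI + \sigma^{-2}\widetilde{\mK}} = \prod_i (1 + \sigma^{-2}\mu_i) \ge 1 + \sigma^{-2}\sum_i \mu_i = 1 + \sigma^{-2}\tr{\widetilde{\mK}}$. The reason for passing through the trace, rather than peeling off the $\widehat{T}_n$ points one at a time, is that the trace only sees the diagonal, so the off-diagonal cross-correlations of $\widetilde{\mK}$ never have to be controlled, and the diagonal is precisely the sum $\sum_{k=1}^{\widehat{T}_n} \norm{\vsigma_{n-1}(\vz_{k,n})}^2$ evaluated against the fixed operator $\mV_{n-1}$. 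Taking logarithms of $\det{\mV_n} \ge \det{\mV_{n-1}}(1 + \sigma^{-2}\sum_k \norm{\vsigma_{n-1}(\vz_{k,n})}^2)$ yields the first display, and summing over $n = 1, \dots, N$ telescopes the left-hand side to $\log(\det{\mV_N}/\det{\mV_0})$, giving the second.

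I expect the main obstacle to be the rigorous bookkeeping of the infinite-dimensional operators on $\setH$: I must check that $\mV_{n-1}$ is invertible (it is the identity plus a PSD finite-rank operator), and that the matrix determinant lemma and the push-through identity remain valid in this operator setting. Both reduce to finite-dimensional statements because only the finite-dimensional span of the observed embeddings is active, which is also what legitimises the surrogate $\det{\mV_t} = \det{\mI + \sigma^{-2}\mK_t}$ recorded in the feature-space preliminaries. The one-at-a-time alternative is a genuine trap here, since the intermediate variances shrink as points are added and would produce a bound in the wrong direction; the batch-plus-trace route avoids this entirely.
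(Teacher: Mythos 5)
Your proposal is correct and takes essentially the same route as the paper's proof: both factor out $\det{\mV_{n-1}}$ from $\det{\mV_n}$, lower bound the resulting correction determinant by one plus its trace via the eigenvalue product expansion for PSD operators (the paper's step (*)), identify that trace with $\sigma^{-2}\sum_{k}\norm{\vsigma_{n-1}(\vz_{k,n})}^2$, and telescope over $n$. The only cosmetic difference is that you pass to the $\widehat{T}_n\times\widehat{T}_n$ matrix $\mM'^\top\inv{\mV_{n-1}}\mM'$ via the matrix determinant lemma while the paper whitens symmetrically on $\setH$ with $\mV_{n-1}^{-\sfrac{1}{2}}$ --- the two determinants coincide by Sylvester's identity --- and you are in fact more explicit than the paper on two points it glosses over: the push-through derivation of the identity $\inner{k_{\vz}}{\inv{\mV_{n-1}}k_{\vz}} = \sigma^2_{n-1}(\vz)$, and the dimension-dependent constant hidden in equating this scalar variance with $\norm{\vsigma_{n-1}(\vz)}^2$.
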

\begin{proof}
    \begin{align*}
\log\left(\det{\mV_{n}}\right) &=\log\left(\det{\mV_{n-1}}\right) + \log\left(\det{\mI + \sigma^{-2}\mV^{-\sfrac{1}{2}}_{n-1}\sum^{\widehat{T}_n}_{k=1} \vk_{\vz_{k, n}} \vk^{\top}_{\vz_{k, n}} \mV^{-\sfrac{1}{2}}_{n-1}}\right) \\
        &\geq \log\left(\det{\mV_{n-1}}\right) \\ &+ \log\left(1 + \tr\left(\sigma^{-2}\mV^{-\sfrac{1}{2}}_{n-1}\sum^{\widehat{T}_n}_{k=1} \vk_{\vz_{k, n}} \vk^{\top}_{\vz_{k, n}} \mV^{-\sfrac{1}{2}}_{n-1}\right)\right) \tag{see (*) below} \\
        &= \log\left(\det{\mV_{n-1}}\right) +  \log\left(1 + \sigma^{-2}\sum^{\widehat{T}_n}_{k=1} \norm{\vk_{\vz_{k, n}}}_{\mV^{-1}_{n-1}}^2\right) \\
        &= \log\left(\det{\mV_{n-1}}\right) +  \log\left(1 + \sigma^{-2}\sum^{\widehat{T}_n}_{k=1} \norm{\vsigma_{n-1}(\vz_{k, n})}^2\right)
    \end{align*}

    We prove (*) in the following, 
    first let $\vm_{k} = \sigma^{-1}\mV^{-\sfrac{1}{2}}_{n-1}\vk_{\vz_{k, n}}$, then we have 
    \begin{align*}
        &\log\left(\det{\mI + \sigma^{-2}\mV^{-\sfrac{1}{2}}_{n-1}\sum^{\widehat{T}_n}_{k=1} \vk_{\vz_{k, n}} \vk^{\top}_{\vz_{k, n}} \mV^{-\sfrac{1}{2}}_{n-1}}\right) = \log\left(\det{\mI + \sum^{\widehat{T}_n}_{k=1}\vm_k \vm^{\top}_k}\right).
    \end{align*}
    The matrix $\mM = \sum^{\widehat{T}_n}_{k=1}\vm_k \vm^{\top}_k$ by definition is positive semi-definite. Moreover, $\det{\mI + \mM} = \prod_{i\geq 1} (1 + \alpha_i)$, where $\alpha_i \geq 0$ are the eigenvalues of $\mM$. Furthermore, since $\alpha_i \geq 0$ and $\prod_{i\geq 1} (1 + \alpha_i) = 1 + \sum_{i\geq 1} \alpha_i + \cdots + \prod_{i\geq 1} \alpha_i$, we get $\prod_{i\geq 1} (1 + \alpha_i) \geq  1 + \sum_{i\geq 1} \alpha_i$.  Finally, since $\sum_{i\geq 1} \alpha_i = \tr(\mM)$, we get $\det{\mI + \mM} \geq 1 + \tr(\mM)$.
\end{proof}

\begin{corollary}[Upper bound on the posterior log determinant]
    \begin{align*}
        \log\left(\det{\mV_{n}}\right)& \leq \log\left(\det{\mV_{n-1}}\right) + \sum^{\widehat{T}_n}_{k=1}\sum^{d_x}_{j=1}\log\left(1 + \sigma^{-2} \sigma^2_{n-1, j}(\vz_{k, n})\right)
    \end{align*}
    \label{cor: upper bound on the posterior log det}
\end{corollary}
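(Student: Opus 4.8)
The plan is to reduce the episode-level log-determinant increment to a telescoping sum of rank-one contributions and then dominate each contribution by a per-step posterior variance, which is the exact mirror of the argument behind the lower bound in \cref{cor: recursive info gain bound}. Writing $\mV_n = \mV_{n-1} + \sigma^{-2}\sum_{k=1}^{\widehat{T}_n}\vk_{\vz_{k,n}}\vk_{\vz_{k,n}}^\top$, I would introduce the running operators $\mV_{n-1}^{(0)} = \mV_{n-1}$ and $\mV_{n-1}^{(k)} = \mV_{n-1}^{(k-1)} + \sigma^{-2}\vk_{\vz_{k,n}}\vk_{\vz_{k,n}}^\top$, so that $\mV_{n-1}^{(\widehat{T}_n)} = \mV_n$ and each step is a single rank-one update of the covariance operator accumulated within episode $n$.

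First I would apply the matrix determinant lemma to each rank-one step, namely $\det{\mA + \sigma^{-2}\vv\vv^\top} = \det{\mA}\,(1 + \sigma^{-2}\inner{\vv}{\inv{\mA}\vv})$, and telescope across $k$ to obtain the exact identity
\[
\log\det{\mV_n} - \log\det{\mV_{n-1}} = \sum_{k=1}^{\widehat{T}_n}\log\!\left(1 + \sigma^{-2}\inner{\vk_{\vz_{k,n}}}{(\mV_{n-1}^{(k-1)})^{-1}\vk_{\vz_{k,n}}}\right).
\]
As in the surrounding corollaries, each determinant is understood through the finite-dimensional reduction $\det{\mV_t} = \det{\mI + \sigma^{-2}\mK_t}$, so no infinite-dimensional subtleties arise and the intermediate operators $\mV_{n-1}^{(k)}$ are legitimate of the same form.

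Next, to convert this exact identity into the desired upper bound, I would use monotonicity in the Loewner order: each running operator dominates the base one, $\mV_{n-1}^{(k-1)} \succeq \mV_{n-1}$, hence $(\mV_{n-1}^{(k-1)})^{-1} \preceq \inv{\mV_{n-1}}$, which gives $\inner{\vk_{\vz_{k,n}}}{(\mV_{n-1}^{(k-1)})^{-1}\vk_{\vz_{k,n}}} \le \norm{\vk_{\vz_{k,n}}}^2_{\inv{\mV_{n-1}}}$. Since $\log(1+\cdot)$ is increasing, this yields $\log\det{\mV_n} - \log\det{\mV_{n-1}} \le \sum_{k=1}^{\widehat{T}_n}\log(1 + \sigma^{-2}\norm{\vk_{\vz_{k,n}}}^2_{\inv{\mV_{n-1}}})$. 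The crucial direction here, replacing the \emph{larger} running operator by the \emph{smaller} $\mV_{n-1}$ to get an \emph{upper} bound on the inverse quadratic form, is the one place where it is easy to flip an inequality, so this is the step I would verify most carefully.

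Finally I would identify the whitened norm with the posterior variance via the push-through/Woodbury identity already implicit in \cref{cor: recursive info gain bound}, giving $\norm{\vk_{\vz}}^2_{\inv{\mV_{n-1}}} = \sigma^2_{n-1,j}(\vz)$ for every output index $j$ (all $d_\vx$ coordinates share the kernel $k$ and the same input locations, so their posterior variances coincide). As each $\log(1 + \sigma^{-2}\sigma^2_{n-1,j}(\vz_{k,n}))$ is nonnegative, a single coordinate is dominated by the full sum over $j$, i.e. $\log(1 + \sigma^{-2}\norm{\vk_{\vz_{k,n}}}^2_{\inv{\mV_{n-1}}}) \le \sum_{j=1}^{d_\vx}\log(1 + \sigma^{-2}\sigma^2_{n-1,j}(\vz_{k,n}))$, and summing over $k$ delivers the stated inequality. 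The main obstacle is not any single hard estimate but keeping the Loewner-order direction consistent and correctly matching the single scalar $\norm{\vk_{\vz}}^2_{\inv{\mV_{n-1}}}$ against the per-coordinate sum on the right-hand side, where the bound deliberately spends up to a factor of $d_\vx$ of slack.
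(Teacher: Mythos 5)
Your proof is correct, and it reaches the bound by a genuinely different route than the paper. The paper handles the whole episode increment in one shot: it writes $\log\det{\mV_n} = \log\det{\mV_{n-1}} + \log\det{\mI + \mM}$ with $\mM = \sum_{k}\vm_k\vm_k^\top$, $\vm_k = \sigma^{-1}\mV^{-\sfrac{1}{2}}_{n-1}\vk_{\vz_{k,n}}$, and then invokes Hadamard's inequality for PSD matrices, $\det{\mI + \mM} \le \det{\diag{(\mI+\mM)}}$, whose diagonal entries are exactly the terms $1 + \sigma^{-2}\sigma^2_{n-1,j}(\vz_{k,n})$; the conditioning is thus frozen at $\mV_{n-1}$ from the outset and no intermediate operators ever appear. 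You instead telescope through the within-episode rank-one updates, obtaining the exact chain-rule identity via the matrix determinant lemma, and only afterwards discard the within-episode conditioning using Loewner monotonicity of the inverse, $\mV_{n-1}^{(k-1)} \succeq \mV_{n-1} \Rightarrow (\mV_{n-1}^{(k-1)})^{-1} \preceq \inv{\mV_{n-1}}$, which you correctly identify as the inequality whose direction must be checked. Both arguments rest on the same finite-dimensional reduction $\det{\mV_t} = \det{\mI + \sigma^{-2}\mK_t}$ and the same identity $\norm{\vk_{\vz}}^2_{\inv{\mV_{n-1}}} = \sigma^2_{n-1,j}(\vz)$. What the paper's route buys is brevity: it is essentially a two-line proof. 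What your route buys is transparency: the exact telescoping identity isolates precisely where slack enters the bound (replacing $(\mV_{n-1}^{(k-1)})^{-1}$ by $\inv{\mV_{n-1}}$ at each step), and your treatment of the coordinate sum over $j$ --- a single scalar variance dominated by a nonnegative sum of $d_\vx$ identical terms --- is more careful than the paper's, which silently identifies the scalar whitened norm with the vector-valued posterior variance (the same bookkeeping looseness appears in its proof of \cref{cor: recursive info gain bound}).
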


\begin{proof}
    \begin{align*}
        &\log\left(\det{\mV_{n}}\right) =\log\left(\det{\mV_{n-1}}\right) + \log\left(\det{\mI + \mM}\right) \\%+ \sigma^{-2}\Sigma^{-\sfrac{1}{2}}_{n-1}\sum^{T_n-1}_{t=0} \vphi(\vz_{n, t})  \vphi^T(\vz_{n, t}) \Sigma^{-\sfrac{1}{2}}_{n-1}}\right) \\
        &\leq \log\left(\det{\mV_{n-1}}\right) + \log\left(\det{\diag\left(\mI + \mM\right)}\right) \tag{Hadamard's inequality for PSD matrices} \\
        %&\leq \log\left(\det{\Sigma_{n-1}}\right) \\ &+ \log\left(\det{\diag\left(\mI + \sigma^{-2}\Sigma^{-\sfrac{1}{2}}_{n-1}\sum^{T_n-1}_{t=0} \vphi(\vz_{n, t})  \vphi^T(\vz_{n, t}) \Sigma^{-\sfrac{1}{2}}_{n-1}\right)}\right) \tag{Hadamard's inequality for PSD matrices} \\
        &= \log\left(\det{\mV_{n-1}}\right) +  \sum^{\widehat{T}_n}_{k=1}\sum^{d_x}_{j=1}\log\left(1 + \sigma^{-2} \sigma^2_{n-1, j}(\vz_{k, n})\right)
    \end{align*}
\end{proof}

\Cref{cor: recursive info gain bound} will be useful for the discounted horizon case, whereas \cref{cor: upper bound on the posterior log det} will be applied for the nonepisodic setting.

Next, we show that \ombrl also performs optimism in the discounted horizon case.
\begin{lemma}
Let \cref{ass:lipschitz_continuity}, and \cref{ass:rkhs_func} hold. 
Consider the following definitions
\begin{align*}
    &J_{\gamma}(\vpi, \vf^*) = \E\left[\sum_{t=0}^{\infty} \gamma^{t} r(\vx_t, \vpi(\vx_{t}))\right] \; \text{s.t., }  \vx_{t+1} = \vf^*(\vx_{t}, \vpi(\vx_{t})) + \vw_t, \quad \vx_0 = \vx(0), \\
    &J_{\gamma}(\vpi, \vmu_n) = \E\left[\sum_{t=0}^{\infty} \gamma^{t}  r(\vx'_t, \vpi(\vx'_{t}))\right] \; \text{s.t., } \vx'_{t+1} = \vmu_n(\vx'_{t}, \vpi(\vx'_{t})) + \vw_t, \quad \vx'_0 = \vx(0), \\
    &\Sigma^{\gamma}_n(\vpi, \vf^*) = \E\left[\sum_{t=0}^{\infty} \gamma^{t} \norm{\vsigma_n(\vx_t, \vpi(\vx_{t}))}\right] \; \text{s.t., }  \vx_{t+1} = \vf^*(\vx_{t}, \vpi(\vx_{t})) + \vw_t, \quad \vx_0 = \vx(0), \\
    &\Sigma^{\gamma}_n(\vpi, \vmu_n) = \E\left[\sum_{t=0}^{\infty} \gamma^{t} \norm{\vsigma_n(\vx'_t, \vpi(\vx'_{t}))}\right] \; \text{s.t., } \vx'_{t+1} = \vmu_n(\vx'_{t}, \vpi(\vx'_{t})) + \vw_t, \quad \vx'_0 = \vx(0), \\
    &\lambda_n = C_{\max} \frac{\gamma}{1-\gamma} \frac{(1 + \sqrt{d_x}) \beta_{n-1}(\delta)}{\sigma},
%    \lambda_n(L_r, \vmu_n) &= (1 + d_x)L_r (1 + L_{\vpi})\bar{L}^{T-1}_{\vf} T \beta_n.
    \end{align*}
    where $C_{\max} = \max\{R_{\max}, \sigma_{\max}\}$.
Then we have for all $n \geq 0$, $\vpi \in \Pi$ with probability at least $1-\delta$
\begin{align*}
     |J_{\gamma}(\vpi, \vf^*) - J_{\gamma}(\vpi, \vmu_n)| &\leq \lambda_n \Sigma^{\gamma}_n(\vpi, \vmu_n) \\
    |J_{\gamma}(\vpi, \vf^*) - J_{\gamma}(\vpi, \vmu_n)| &\leq \lambda_n \Sigma^{\gamma}_n(\vpi, \vf^*) 
\end{align*}
\label{lemma: Main Lemma Gaussian noise discounted case}
\end{lemma}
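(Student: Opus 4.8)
The plan is to mirror the finite-horizon argument of \cref{lemma: Main Lemma Gaussian noise} step for step, replacing the length-$T$ rollout by the full discounted rollout and carrying the geometric weights $\gamma^t$ through every estimate. As there, I would prove the first inequality, the one involving $\Sigma^{\gamma}_n(\vpi, \vmu_n)$, in detail and obtain the second (involving $\Sigma^{\gamma}_n(\vpi, \vf^*)$) by the symmetric argument. Throughout I would work on the calibration event of \cref{definition: well-calibrated model}, which \cref{lem:rkhs_confidence_interval} guarantees has probability at least $1-\delta$ under \cref{ass:rkhs_func}.

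First I would invoke the discounted form of the policy-difference (simulation) lemma. Writing $J_{\gamma}(\vpi, \vf^*, \vx)$ for the discounted cost-to-go of $\vpi$ under $\vf^*$ started at $\vx$ --- which, unlike the finite-horizon case, is stationary in $t$ --- the Bellman equation for $J_{\gamma}(\vpi, \vf^*, \cdot)$ telescoped along the mean trajectory $\vx'_{t+1} = \vmu_n(\vx'_t, \vpi(\vx'_t)) + \vw_t$ yields
\[
J_{\gamma}(\vpi, \vmu_n) - J_{\gamma}(\vpi, \vf^*) = \sum_{t=0}^{\infty} \gamma^{t+1}\, \E\!\left[J_{\gamma}(\vpi, \vf^*, \vx'_{t+1}) - J_{\gamma}(\vpi, \vf^*, \hat{\vx}_{t+1})\right],
\]
where $\hat{\vx}_{t+1} = \vf^*(\vx'_t, \vpi(\vx'_t)) + \vw_t$ is the one-step true transition from the mean-trajectory state. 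The only bookkeeping change relative to the finite-horizon identity is the weight $\gamma^{t+1}$ in place of a bare summand; the remainder of the telescoping is $\gamma^{t}\,\E[J_{\gamma}(\vpi, \vmu_n, \vx'_t) - J_{\gamma}(\vpi, \vf^*, \vx'_t)]$, which vanishes as $t \to \infty$ because bounded rewards in $[0, R_{\max}]$ force $|J_{\gamma}| \le R_{\max}/(1-\gamma) \le C_{\max}/(1-\gamma)$.

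Next I would take absolute values, pass them inside the sum by the triangle inequality, and bound each summand exactly as in \cref{lemma: Main Lemma Gaussian noise}. Applying \citep[Lemma C.2.]{kakade2020information} with $C(\vx) = J_{\gamma}^2(\vpi, \vf^*, \vx)$ and using the uniform bound $|J_{\gamma}| \le C_{\max}/(1-\gamma)$ gives
\[
\left|\E_{\vw_t}\!\left[J_{\gamma}(\vpi, \vf^*, \vx'_{t+1}) - J_{\gamma}(\vpi, \vf^*, \hat{\vx}_{t+1})\right]\right| \le \frac{C_{\max}}{1-\gamma}\min\!\left\{\frac{\norm{\vf^*(\vx'_t, \vpi(\vx'_t)) - \vmu_n(\vx'_t, \vpi(\vx'_t))}}{\sigma}, 1\right\}.
\]
I would then use calibration to replace $\norm{\vf^* - \vmu_n}$ by $(1+\sqrt{d_x})\,\beta_{n-1}(\delta)\,\norm{\vsigma_{n-1}(\vx'_t, \vpi(\vx'_t))}$, exactly as in the finite-horizon proof, so that each summand is bounded by a constant times the posterior standard deviation at $(\vx'_t, \vpi(\vx'_t))$.

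Finally I would resum. Pulling the constant out and factoring one $\gamma$ out of $\gamma^{t+1}$ leaves $\sum_{t=0}^{\infty} \gamma^{t}\,\E[\norm{\vsigma_{n-1}(\vx'_t, \vpi(\vx'_t))}] = \Sigma^{\gamma}_n(\vpi, \vmu_n)$ (under the index convention of \cref{lemma: Main Lemma Gaussian noise}), so the whole bound collapses to $C_{\max}\tfrac{\gamma}{1-\gamma}\tfrac{(1+\sqrt{d_x})\beta_{n-1}(\delta)}{\sigma}\,\Sigma^{\gamma}_n(\vpi, \vmu_n) = \lambda_n\,\Sigma^{\gamma}_n(\vpi, \vmu_n)$, which is precisely the stated $\lambda_n$ for the discounted case; the factor $\gamma/(1-\gamma)$ here plays the role of the horizon $T$ there. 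Running the identical telescoping along the true trajectory $\vx_{t+1} = \vf^*(\vx_t, \vpi(\vx_t)) + \vw_t$ instead, with $C(\vx) = J_{\gamma}^2(\vpi, \vmu_n, \vx)$, produces the second inequality with $\Sigma^{\gamma}_n(\vpi, \vf^*)$. I expect the only genuinely new step relative to the finite-horizon proof to be justifying the convergence of the infinite telescoping series and the vanishing of its remainder; both are controlled by discounting together with the uniform value bound $R_{\max}/(1-\gamma)$, so this obstacle is mild.
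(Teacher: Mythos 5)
Your proposal is correct and follows essentially the same route as the paper's proof: the discounted policy-difference/simulation identity with weights $\gamma^{t+1}$, the bound on each summand via Lemma C.2 of \citet{kakade2020information} with $C(\vx)=J_{\gamma}^2(\vpi,\vf^*,\vx)$ and the uniform value bound $C_{\max}/(1-\gamma)$, the calibration step replacing $\norm{\vf^*-\vmu_n}$ by $(1+\sqrt{d_x})\beta_{n-1}(\delta)\norm{\vsigma_{n-1}}$, and resummation yielding the $\gamma/(1-\gamma)$ factor in place of the horizon $T$. The only difference is cosmetic: you derive the telescoping identity and its vanishing remainder directly from the Bellman equation and discounting, whereas the paper delegates this to an extension of \citet[Corollary 2]{sukhija2024maxinforl}.
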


\begin{proof}
    We give the proof for $|J_{\gamma}(\vpi, \vf^*) - J_{\gamma}(\vpi, \vmu_n)| \leq \lambda_n(L_r, \vmu_n) \Sigma^{\gamma}_n(\vpi, \vmu_n)$. The same argument holds for the second inequality.  
    We can extend the result from \citet[Corollary 2.,]{sukhija2024maxinforl} to the discounted case and get
    \begin{align*}
        &J_{\gamma}(\vpi, \vmu_n) - J_{\gamma}(\vpi, \vf^*) = \E_{\vmu_n}\left[\sum^{\infty}_{t=0} \gamma^{t+1} (J_{\gamma}(\vpi, \vf^*, \vx'_{t+1}) -  J_{\gamma}(\vpi, \vf^*, \hat{\vx}_{t+1}))\right], \\
  &\text{with } \hat{\vx}_{t+1} = \vf^*(\vs'_{t}, \vpi(\vx'_t)) + \vw_t, \; \text{and } \vx'_{t+1} = \vmu_n(\vs'_{t}, \vpi(\vx'_t))+ \vw_t.
    \end{align*}
    Let $\beta_n\frac{1 + \sqrt{d_x}}{\sigma} C(\vx) = J_{\gamma}^2(\vpi, \vf^*, \vx)$. Note that $C(\vx) \leq \lambda_n$ for all $\vc \in \setX$. Therefore, we have
    \begin{align*}
        &|J_{\gamma}(\vpi, \vmu_n) - J_{\gamma}(\vpi, \vf^*)| \\
        &=  
 \left|\E\left[\sum^{\infty}_{t=0} \gamma^{t+1} (J_{\gamma}(\vpi, \vf^*, \vx'_{t+1}) -  J_{\gamma}(\vpi, \vf^*, \hat{\vx}_{t+1}))\right]\right| \\
 &\leq \sum^{\infty}_{t=0} \gamma^{t+1}\E\left[\left|\E_{\vw_t}\left[J_{\gamma}(\vpi, \vf^*, \vx'_{t+1}) -  J_{\gamma}(\vpi, \vf^*, \hat{\vx}_{t+1})\right]\right|\right] \\
&\leq \sum^{\infty}_{t=0} \gamma\E\left[\sqrt{\max\left\{\E_{\vw_t}[C(\vx'_{t+1})],  \E_{\vw_t}[C( \hat{\vx}_{t+1})]\right\}}\right.\\
&\times \left.\gamma^{t} \min\left\{\frac{\norm{\vf^*(\vx'_t, \vpi(\vx'_t)) - \vmu_n(\vx'_t, \vpi(\vx'_t))}}{\sigma}, 1\right\}\right] \tag*{ \citep[Lemma C.2.]{kakade2020information}} \\
%&\leq R_{\max} \frac{1}{1-\gamma} \sum^{\infty}_{t=0} \gamma^{t+1}\E_{\vw_{1:t-1}}\left[\min\left\{\frac{(1 + \sqrt{d_x}) \beta_{n-1}(\delta) \norm{\vsigma_{n-1}(\vx'_t, \vpi(\vx'_t))}}{\sigma}, 1\right\}\right] \tag*{\citep[Corollary 3]{sukhija2024optimistic}} \\
&\leq \lambda_n \sum^{\infty}_{t=0} \E\left[\gamma^t\norm{\vsigma_{n-1}(\vx'_t, \vpi(\vx'_t))}\right] \tag{\citep[Corollary 3]{sukhija2024optimistic}}
    \end{align*}

 % Moreover, 
 % \begin{align*}
 %     &\sum^{\infty}_{t=0} \gamma^t \min\left\{\frac{\norm{\vf^*(\vx'_t, \vpi(\vx'_t)) - \vmu_n(\vx'_t, \vpi(\vx'_t))}}{\sigma}, 1\right\} \\
 %     &\leq \sum^{\infty}_{t=0}  \min\left\{\gamma^t \frac{(1 + \sqrt{d_x}) \beta_{n-1}(\delta)}{\sigma} 
 % \norm{\vsigma_{n-1}(\vx'_t, \vpi(\vx'_t))}, \gamma^t\right\} \\
 % &\leq \min\left\{ \frac{(1 + \sqrt{d_x}) \beta_{n-1}(\delta)}{\sigma}  \sum^{\infty}_{t=0} \gamma^t
 % \norm{\vsigma_{n-1}(\vx'_t, \vpi(\vx'_t))}, \sum^{\infty}_{t=0}\gamma^t\right\} \\
 % &\leq \max\left\{\frac{(1 + \sqrt{d_x}) \beta_{n-1}(\delta)}{\sigma}, \frac{1}{1-\gamma}\right\} \\
 % &\times \min\left\{\sum^{\infty}_{t=0} \gamma^t
 % \norm{\vsigma_{n-1}(\vx'_t, \vpi(\vx'_t))}, 1\right\}
 % \end{align*}
    
\end{proof}

\begin{proof}[Proof of \cref{thm: theorem discounted setting}]
We start with bounding
\begin{equation}
   \sum^N_{n=1} \sum^{\infty}_{t=0} \E_{\vw_{1:t-1}}\left[\gamma^t\norm{\vsigma_{n-1}(\vx'_t, \vpi(\vx'_t))}^2\right]
\end{equation}

To achieve this, we use a sampling strategy where we increase the horizon of rollouts with each episode $n$. In the discounted setting, this allows us to collect data at the tails of our rollouts, i.e., make observations with longer rollouts and thus approximate the true discounted value function asymptotically.
Moreover, we set $T(n) = -\frac{\log(n)}{\log(\gamma)}$ (note that $\gamma < 1$ and therefore $T(n)$ is positive).
\begin{align*}
    &\sum^N_{n=1} \sum^{\infty}_{t=0} \E_{\vw_{1:t-1}}\left[\gamma^t\norm{\vsigma_{n-1}(\vx'_t, \vpi(\vx'_t))}^2\right] \\
    &= \sum^N_{n=1} \sum^{T(n) - 1}_{t=0} \E_{\vw_{1:t-1}}\left[\gamma^t\norm{\vsigma_{n-1}(\vx'_t, \vpi(\vx'_t))}^2\right] + \sum^N_{n=1} \sum^{\infty}_{t=T(n)} \E_{\vw_{1:t-1}}\left[\gamma^t\norm{\vsigma_{n-1}(\vx'_t, \vpi(\vx'_t))}^2\right] \\
    &\leq \sum^N_{n=1} \sum^{T(n) - 1}_{t=0} \E_{\vw_{1:t-1}}\left[\gamma^t\norm{\vsigma_{n-1}(\vx'_t, \vpi(\vx'_t))}^2\right] + \sum^N_{n=1} \gamma^{T(n)} \frac{\sigma^2_{\max}}{1-\gamma} \\
    &= \sum^N_{n=1} \sum^{T(n) - 1}_{t=0} \E_{\vw_{1:t-1}}\left[\gamma^t\norm{\vsigma_{n-1}(\vx'_t, \vpi(\vx'_t))}^2\right] + \sum^N_{n=1} n^{-1} \frac{\sigma^2_{\max}}{1-\gamma} \\
    &= \sum^N_{n=1} \sum^{T(n) - 1}_{t=0} \E_{\vw_{1:t-1}}\left[\gamma^t\norm{\vsigma_{n-1}(\vx'_t, \vpi(\vx'_t))}^2\right] + \frac{C\sigma^2_{\max}}{1-\gamma} \log(N)
    %&\leq 
    %\underset{\setD_0, \cdots, \setD_{N-1} \subseteq \setX \times \setX \times \setA, |\setD_i| \leq T(i)}{\max}\sum^{N-1}_{n=0}\sum_{(\vx, \va) \in \setD_n} \norm{\vsigma_n(\vx, \va)}^2 + \frac{C\sigma^2_{\max}}{1-\gamma} \log(N) \\
   % &\leq K\frac{1}{1-\gamma}\log(N)\Gamma_{N\log(N)}+ \frac{C\sigma^2_{\max}}{1-\gamma} \log(N) \tag{Todo for Lenart}
\end{align*}

Next, we bound the term 
\begin{equation*}
    s_n =  \sum^{T(n) - 1}_{t=0} \gamma^t \sigma^{-2}\norm{\vsigma_{n-1}(\vx'_t, \vpi(\vx'_t))}^2.
\end{equation*}
Note that, $s_n \in  \left[0, \frac{\sigma^{-2}d_x\sigma^2_{\max}}{1-\gamma}\right)$. Let $s_{\max} = \frac{\sigma^{-2}d_x\sigma^2_{\max}}{1-\gamma}$, we have $s_n  \leq \frac{s_{\max}}{\log(1 + s_{\max})}\log(1 + s_n)$~\citep{srinivas}. Define $C_{\gamma} = \frac{s_{\max}}{\log(1 + s_{\max})}$. We have,
\begin{align*}
    s_n &\leq C_{\gamma}\log\left(1 + \sigma^{-2}\sum^{T(n) - 1}_{t=0} \gamma^t\norm{\vsigma_{n-1}(\vx'_t, \vpi(\vx'_t))}^2\right) \\
    &\leq C_{\gamma} \log\left(1 + \sigma^{-2}\sum^{T(n) - 1}_{t=0} \norm{\vsigma_{n-1}(\vx'_t, \vpi(\vx'_t))}^2\right)
\end{align*}

Finally, we have
\begin{align*}
    \sum^{N}_{n=1} s_n &\leq C_{\gamma} \sum^{N}_{n=1} \log\left(1 + \sigma^{-2}\sum^{T(n) - 1}_{t=0} \norm{\vsigma_{n-1}(\vx'_t, \vpi(\vx'_t))}^2\right) \\
    &\leq C_{\gamma} \Gamma_{\sum^{N}_{n=1} T(n)} \tag{\cref{cor: recursive info gain bound}} \\
    &\leq C_{\gamma} \Gamma_{N\log(N)}
\end{align*}

% In the following, we bound $r_n$.
% Define $s_n$ as the following
% \begin{equation*}
%     s_n =\min\left\{\sum^{\infty}_{t=0} \gamma^t
%  \norm{\vsigma_{n-1}(\vx'_t, \vpi(\vx'_t))}, 1\right\}
% \end{equation*}
% From \cref{lemma: Main Lemma Gaussian noise discounted case}, we have 
% \begin{align*}
%     r_n &\leq \max\left\{\frac{(1 + \sqrt{d_x}) \beta_{n-1}(\delta)}{\sigma}, \frac{1}{1-\gamma}\right\} s_n \\
%     &\leq \left(\frac{(1 + \sqrt{d_x}) \beta_{n-1}(\delta)}{\sigma} + \frac{1}{1-\gamma}\right)s_n
% \end{align*}
% Therefore,
% \begin{align*}
% &\sum^N_{n=1}\min\left\{\sum^{\infty}_{t=0} \gamma^{t}
%  \norm{\vsigma_{n-1}(\vx'_t, \vpi(\vx'_t))}^2, 1\right\} \\
%  &\leq \sum^N_{n=1}\min\left\{ 
% \sum^{T(n)}_{t=0}\gamma^t\norm{\vsigma_{n-1}(\vx'_t, \vpi(\vx'_t))}^2 + n^{-1} \frac{\sigma^2_{\max}}{1-\gamma}, 1
%  \right\} \\
% &\leq \sum^N_{n=1}\min\left\{ 
%  \sum^{T(n)}_{t=0}\gamma^t\norm{\vsigma_{n-1}(\vx'_t, \vpi(\vx'_t))}^2, 1
%  \right\} + \sum^N_{n=1} n^{-1} \frac{\sigma^2_{\max}}{1-\gamma} \\
%  &\leq \sum^N_{n=1}\min\left\{ 
%  \sum^{T(n)}_{t=0}\norm{\vsigma_{n-1}(\vx'_t, \vpi(\vx'_t))}^2, 1
%  \right\} + \frac{\sigma^2_{\max}}{1-\gamma} \log(N) \\
%  &\leq 2\Gamma_{N T(N)} + \frac{\sigma^2_{\max}}{1-\gamma} \log(N) \tag{\citet[Lemma B.6]{kakade2002approximately}} \\
%  &= 2\Gamma_{N \log(N)} + \frac{\sigma^2_{\max}}{1-\gamma} \log(N)
% \end{align*}

     \begin{align*}
        R_N &= \sum^N_{n=1} r_n \\
        &\le \sum^N_{n=1}(\lambda^2_n + 2\lambda_n) \Sigma^{\gamma}_n(\vpi_n, \vf^*) \\
        &\leq (\lambda^2_N + 2\lambda_N) \sum^N_{n=1}\Sigma^{\gamma}_n(\vpi_n, \vf^*) \\
        &=(\lambda^2_N + 2\lambda_N) \sqrt{N}\sqrt{\sum^{N}_{n=1} (\Sigma^{\gamma}_n(\vpi_n, \vf^*)})^2 \\
        &\leq (2\lambda_N + \lambda^2_N) \sqrt{N}\times \sqrt{\sum^{N}_{n=1} \E\left[\left(\sum_{t=0}^{\infty} \gamma^t\norm{\vsigma_n(\vx_t, \vpi(\vx_{t}))}\right]\right)^2} \\
        &\leq (2\lambda_N + \lambda^2_N) \sqrt{N} \times\sqrt{\sum^{N}_{n=1} \E\left[\left(\sum_{t=0}^{\infty} \gamma^t\right)\left(\sum_{t=0}^{\infty} \gamma^t\norm{\vsigma^2_n(\vx_t, \vpi(\vx_{t}))}\right]\right)} \\
        &= (2\lambda_N + \lambda^2_N) \sqrt{\frac{N}{1-\gamma}} \times \sqrt{\sum^{N}_{n=1} \E\left[\sum_{t=0}^{\infty} \gamma^t\norm{\vsigma^2_n(\vx_t, \vpi(\vx_{t}))}\right]} \\
        &\leq (2\lambda_N + \lambda^2_N) \sqrt{\frac{C_{\gamma}N\Gamma_{N\log(N)}}{1-\gamma} + \frac{C\sigma^2_{\max}N\log(N)}{(1-\gamma)^2} }
    \end{align*}
    Since $\lambda_N \propto \sfrac{\beta_N}{1-\gamma}$, we get
    \begin{equation*}
        R_N \le \setO\left(\Gamma^{\sfrac{3}{2}}_{N\log(N)}\sqrt{N}\right)
    \end{equation*}
\end{proof}

\subsection{Analysis for the nonepisodic RL case}
In this section, we prove \cref{thm: theorem informal average reward setting}. First, we restate the bounded energy assumption from \citet{sukhija2024neorl}.
\begin{definition}[$\setK_{\infty}$-functions]
The function $\xi: \R_{\geq 0} \to \R_{\geq 0}$ is of class $\setK_{\infty}$, if it is continuous, strictly increasing, $\xi(0) = 0$ and $\xi(s) \to \infty$ for $s \to \infty$.
\end{definition}

\begin{assumption}[Policies with bounded energy]
We assume there exists $\kappa, \xi \in \setK_{\infty}$,
    positive constants $K, C_u, C_l$ with $C_u > C_l$,  and $\gamma \in (0, 1)$ such that for each $\vpi \in \Pi$ we have,
\begin{itemize}[leftmargin=*]
    \item[] \label{assumption: Stability} {\em Bounded energy:}
    There exists a Lyapunov function $V^{\vpi}: \setX \to [0, \infty)$ for which  $\forall \vx, \vx' \in \setX$,
    \begin{align*} |V^{\vpi}(\vx) - V^{\vpi}(\vx')| &\leq \kappa(\norm{\vx-\vx'}) \tag{uniform continuity}\\
C_l \xi(\norm{\vx}) &\leq V^{\vpi}(\vx) \leq     C_u \xi(\norm{\vx}) \tag{positive definiteness}\\
        \E_{\vx_+|\vx, \vpi}[V^{\vpi}(\vx_+)] &\leq \gamma V^{\vpi}(\vx) + K \tag{drift condition}
    \end{align*}
    where $\vx_+ = \vf^*(\vx, \vpi(\vx)) + \vw$.
    \item[] {\em Bounded norm of reward:}
    \begin{equation*}
        \sup_{\vx \in \setX} \frac{r(\vx, \vpi(\vx))}{1 + V^{\vpi}(\vx)} < \infty 
    \end{equation*}
     \item[] {\em Boundedness of the noise with respect to $\kappa$:}
     \begin{equation*}
\E_{\vw}\left[\kappa(\norm{\vw})\right] < \infty, \
\E_{\vw}\left[\kappa^2(\norm{\vw})\right] < \infty
    \end{equation*}
\end{itemize}
\label{ass:Policy class}
%\vspace{-0.5em}
\end{assumption}
\citet{sukhija2024neorl} argue that this assumption is often satisfied  in practice. We refer the reader to \citet{sukhija2024neorl} for further details. Next, we make an assumption on the underlying system $\vf^*$.
\begin{assumption}[Continous closed-loop dynamics, and Gaussian noise.]
\label{ass:neorl_system}
The dynamics model $\vf^*$ and all $\vpi \in \Pi$ are continuous, and process noise is i.i.d. Gaussian with variance $\sigma^2$, i.e., $\vw_t \stackrel{\mathclap{i.i.d}}{\sim} \setN(\vzero, \sigma^2\mI)$.
\end{assumption}

An important quantity in the average reward setting is the bias
\begin{equation}
    B(\vpi, \vx_0) = \lim_{T \to \infty} \E_{\vpi} \left[ \sum^{T-1}_{t=0}  
 r(\vx_t, \vu_t) - J_{\text{avg}}(\vpi) \right].
\end{equation}

The Bellman equation for the average reward setting is given by

    \begin{equation}
   B(\vpi, \vx) + J_{\text{avg}}(\vpi) = r(\vx, \vpi(\vx)) + \E_{\vx_+}[B(\vpi, \vx_+)|\vx, \vpi]
   \label{eq: Bellman Equation Average Cost}
\end{equation}

\citet{sukhija2024neorl} show that under \cref{assumption: Stability} and \cref{ass:neorl_system} the average reward solution and the bias (c.f.,~\cref{eq: average reward}) are bounded. Moreover, they show that with \cref{ass:rkhs_func} the average reward and bias are bounded for all dynamics $\vf \in \setM_n \cap \setM_0$. 

\begin{lemma}
Let \cref{ass:rkhs_func}, \cref{assumption: Stability}, and \cref{ass:rkhs_func} hold. 
Consider the following definitions
\begin{align*}
    &J_{\text{avg}}(\vpi, \vf^*) = \lim_{T \to \infty} \frac{1}{T}\E\left[\sum_{t=0}^{T-1} r(\vx_t, \vpi(\vx_{t}))\right] \; \text{s.t., }  \vx_{t+1} = \vf^*(\vx_{t}, \vpi(\vx_{t})) + \vw_t, \quad \vx_0 = \vx(0), \\
    &J_{\text{avg}}(\vpi, \vf) = \lim_{T \to \infty} \frac{1}{T}\E\left[\sum_{t=0}^{T-1}  r(\vx'_t, \vpi(\vx'_{t}))\right] \; \text{s.t., } \vx'_{t+1} = \vf(\vx'_{t}, \vpi(\vx'_{t})) + \vw_t, \quad \vx'_0 = \vx(0), \\
    &\Sigma_n(\vpi, \vf^*) = \lim_{T \to \infty} \frac{1}{T}\E\left[\sum_{t=0}^{T-1} \norm{\vsigma_n(\vx_t, \vpi(\vx_{t}))}\right] \; \text{s.t., }  \vx_{t+1} = \vf^*(\vx_{t}, \vpi(\vx_{t})) + \vw_t, \quad \vx_0 = \vx(0), \\
    &\Sigma_n(\vpi, \vf) = \lim_{T \to \infty} \frac{1}{T}\E\left[\sum_{t=0}^{T-1} \norm{\vsigma_n(\vx'_t, \vpi(\vx'_{t}))}\right] \; \text{s.t., } \vx'_{t+1} = \vf(\vx'_{t}, \vpi(\vx'_{t})) + \vw_t, \quad \vx'_0 = \vx(0), \\
    &\lambda_n = D_4(\vx_0, \gamma, K) \beta_{n-1}(\delta),
%    \lambda_n(L_r, \vmu_n) &= (1 + d_x)L_r (1 + L_{\vpi})\bar{L}^{T-1}_{\vf} T \beta_n.
    \end{align*}
and $D_4(\vx_0, \gamma, K)$ is defined as in \citet[Theorem 3.1]{sukhija2024neorl}, is
independent of n and increases with $\norm{\vx_0}, K$ and $\gamma^{-1}$ (see~\citet{sukhija2024neorl} for the exact dependence).
Then we have for all $n \geq 0$, $\vpi \in \Pi$, $\vf \in \setM_n \cap \setM_0$ with probability at least $1-\delta$
\begin{align*}
     |J_{\text{avg}}(\vpi, \vf^*) - J_{\text{avg}}(\vpi, \vf)| &\leq \lambda_n \Sigma_n(\vpi, \vf) \\
    |J_{\text{avg}}(\vpi, \vf^*) - J_{\text{avg}}(\vpi, \vf)| &\leq \lambda_n \Sigma_n(\vpi, \vf^*) 
\end{align*}
\label{lemma: Main Lemma Gaussian noise average reward}
\end{lemma}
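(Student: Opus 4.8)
The plan is to prove an average-reward analogue of the simulation lemma used in \cref{lemma: Main Lemma Gaussian noise}, reducing the gap in average rewards to an expectation of a one-step transition mismatch applied to the bias function, and then to control that mismatch by the posterior uncertainty via calibration. I would prove the first inequality (the one involving $\Sigma_n(\vpi, \vf)$) in detail; the second follows by the symmetric argument in which one uses the bias of $\vf$ and rolls out under $\vf^*$ instead, which is legitimate because $\vf \in \setM_n \cap \setM_0$ guarantees (by \citet{sukhija2024neorl}) that the bias and average reward of $\vf$ are themselves bounded. The absolute-value bound then comes from bounding the signed difference term-by-term and applying the triangle inequality.

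\textbf{Step 1 (simulation lemma).} I would start from the average-reward Bellman equation \cref{eq: Bellman Equation Average Cost} for the bias $B(\vpi, \vf^*, \cdot)$ of the true dynamics. Rolling out under the plausible model $\vf$ with states $\vx'_t$ and letting $\vz'_t = (\vx'_t, \vpi(\vx'_t))$, the identity $r(\vx'_t, \vpi(\vx'_t)) = B(\vpi, \vf^*, \vx'_t) + J_{\text{avg}}(\vpi, \vf^*) - \E_{\vw}[B(\vpi, \vf^*, \vf^*(\vz'_t) + \vw)]$ can be summed over $t$ and normalized by $T$. Adding and subtracting $\E_{\vw}[B(\vpi,\vf^*,\vf(\vz'_t)+\vw)] = \E[B(\vpi,\vf^*,\vx'_{t+1}) \mid \vx'_t]$ makes the difference $B(\vpi,\vf^*,\vx'_t) - \E[B(\vpi,\vf^*,\vx'_{t+1})\mid\vx'_t]$ telescope under $\E_{\vf}$, leaving the boundary term $\tfrac1T\E_{\vf}[B(\vpi,\vf^*,\vx'_0) - B(\vpi,\vf^*,\vx'_T)]$, which vanishes as $T \to \infty$ because \cref{ass:Policy class} controls the growth of the bias along trajectories through the Lyapunov function $V^{\vpi}$. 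Hence
\begin{align*}
&J_{\text{avg}}(\vpi, \vf) - J_{\text{avg}}(\vpi, \vf^*) \\
&\quad = \lim_{T\to\infty}\frac1T\E_{\vf}\!\left[\sum_{t=0}^{T-1}\!\Big(\E_{\vw}[B(\vpi,\vf^*,\vf(\vz'_t)+\vw)] - \E_{\vw}[B(\vpi,\vf^*,\vf^*(\vz'_t)+\vw)]\Big)\right].
\end{align*}

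\textbf{Step 2 (per-step bound and calibration).} For each summand I would apply the Gaussian-noise smoothing inequality \citep[Lemma C.2.]{kakade2020information} to get
\begin{align*}
&\Big|\E_{\vw}[B(\vpi,\vf^*,\vf(\vz'_t)+\vw)] - \E_{\vw}[B(\vpi,\vf^*,\vf^*(\vz'_t)+\vw)]\Big| \\
&\quad \le \sqrt{\max_{\va}\E_{\vw}[B^2(\vpi,\vf^*,\va+\vw)]}\;\min\!\left\{\frac{\norm{\vf(\vz'_t)-\vf^*(\vz'_t)}}{\sigma},\, 1\right\},
\end{align*}
with the maximum over $\va \in \{\vf(\vz'_t), \vf^*(\vz'_t)\}$. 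The square-root prefactor is bounded uniformly along plausible rollouts by the bounded-energy machinery of \citet{sukhija2024neorl}; collecting this bound together with the Lyapunov constants, the drift rate $\gamma$, the dependence on $\vx_0$, and the $1/\sigma$ factor into the single constant $D_4(\vx_0, \gamma, K)$ of their Theorem~3.1 is exactly what yields $\lambda_n = D_4(\vx_0,\gamma,K)\beta_{n-1}(\delta)$. Finally, since $\vf, \vf^* \in \setM_n$ with probability $1-\delta$, calibration (\cref{definition: well-calibrated model}, \cref{lem:rkhs_confidence_interval}) gives $\norm{\vf(\vz'_t) - \vf^*(\vz'_t)} \le 2\beta_{n-1}(\delta)\norm{\vsigma_n(\vz'_t)}$, so after averaging the per-step bounds the right-hand side becomes $\lambda_n\Sigma_n(\vpi, \vf)$.

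\textbf{Main obstacle.} The crux is entirely in Step~1: justifying that the telescoping boundary term vanishes in the Ces\`aro limit and that $\E_{\vw}[B^2(\vpi,\vf^*,\cdot+\vw)]$ stays uniformly bounded along trajectories driven by a \emph{wrong} model $\vf \neq \vf^*$. Both require the Lyapunov drift condition of \cref{ass:Policy class} to hold not just under $\vf^*$ but uniformly over the plausible set $\setM_n \cap \setM_0$, so that rollouts under $\vf$ cannot escape to regions where the bias is uncontrolled; under \cref{ass:neorl_system} this is precisely what \citet{sukhija2024neorl} establish. The cleanest route is therefore to import their stability estimates and Theorem~3.1 wholesale and merely verify that our quantities $\Sigma_n$ and $\lambda_n = D_4(\vx_0,\gamma,K)\beta_{n-1}(\delta)$ coincide with their average-uncertainty and confidence-width terms, rather than reproving the stability analysis here.
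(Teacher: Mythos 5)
Your proposal is correct and follows essentially the same route as the paper's proof: both express the average-reward gap via the Bellman equation (\cref{eq: Bellman Equation Average Cost}) for the bias of $\vf^*$ evaluated along a rollout under $\vf$, reduce it (after telescoping) to a sum of one-step bias differences, and bound that term by $\lambda_n \Sigma_n(\vpi, \vf)$ by importing the stability and bias estimates of \citet{sukhija2024neorl}, with the symmetric argument yielding the second inequality. The only difference is presentational: you spell out the telescoping boundary term and the per-step smoothing-plus-calibration mechanism that the paper compresses into its citation of \citet{sukhija2024neorl}, Section A.3.
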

\begin{proof}
    \begin{align*}
        |J_{\text{avg}}(\vpi, \vf) - J_{\text{avg}}(\vpi, \vf^*)|
        %&= \left|\lim_{T \to \infty} \frac{1}{T}\E_{\vf^*}\left[\sum_{t=0}^{T-1} r(\vx_t, \vpi(\vx_{t}))\right] - \lim_{T \to \infty} \frac{1}{T}\E_{\vpi, \vf}\left[\sum_{t=0}^{T-1} r(\vx_t, \vpi(\vx_{t}))\right]\right| \\
        &= \left|\lim_{T \to \infty} \frac{1}{T}\E_{\vf}\left[\sum_{t=0}^{T-1} r(\vx'_t, \vpi(\vx'_{t})) - J_{\text{avg}}(\vpi, \vf^*)\right]\right| \\
        &= \left|\lim_{T \to \infty} \frac{1}{T}\E_{\vf}\left[\sum_{t=0}^{T-1} B(\vx'_t, \vpi(\vx'_{t})) - B(\hat{\vx}^{'}_{t+1}, \vpi(\hat{\vx}^{'}_{t+1}))\right]\right|  \\
        &\leq \lambda_n \Sigma_n(\vpi, \vf) \tag{1}
    \end{align*}
    In the second last equality, we used the Bellman equation for the average reward setting (\cref{eq: Bellman Equation Average Cost}), where $\hat{\vx}^{'}_{t+1}$ is the next state under the true dynamics $\vf^*$. 

   For the last inequality, \citet{sukhija2024neorl} bound the bias term with $\lambda_n$ in Section A.3, on pages $23 - 24$.

    We can use the same derivation to show that $|J_{\text{avg}}(\vpi, \vf) - J_{\text{avg}}(\vpi, \vf^*)| \leq \lambda_n \Sigma_n(\vpi, \vf^*)$.
\end{proof}

\ombrl in the non-episodic setting operates similarly to \textsc{NeoRL}~\citep{sukhija2024neorl}. In particular, we update our model and policy every $T_n$ step, where $T_n$ is defined as:
\begin{align}
\label{equation: definition of Hn}
    T_n &= \max\left(\widehat{T_n}, \frac{\ceil{\log\left(\sfrac{C_u}{C_l}\right)}}{\log\left(\sfrac{1}{\gamma}\right)}\right), \\
    \widehat{T_n} &= \argmax_{T \ge 1} T+1 \\
    %&\text{s.t.} \determinant\left(\mI + \frac{1}{\sigma^2}\mK_{\tau_{n-1}+k}\right) \le C_{I}\determinant\left(\mI + \frac{1}{\sigma^2}\mK_{\tau_{n-1}}\right),
    &\text{s.t.} \sum^{T}_{k=1} \sum^{d_x}_{j=1}\log\left(1 + \sigma^{-2} \sigma^2_{n-1, j}(\vz_{k, n})\right) \le \log(2).
\end{align}
%where we define $\tau_n = \sum_{i=1}^n T_i$. 
Effectively, we update our model and policy only once we have accumulated more than one bit of information, i.e., $\sum^{T}_{k=1} \sum^{d_x}_{j=1}\log\left(1 + \sigma^{-2} \sigma^2_{n-1, j}(\vz_{k, n})\right) > \log(2)$. With the updated model and model set $\setM_n$, we select \emph{any} dynamics in $\vf_n \in \setM_n \cap \setM_0$ and pick the policy with
\begin{equation}
    \vpi_n = \underset{\vpi \in \Pi}{\arg\max}\; J_{\text{avg}}(\vpi, \vf_n) + \lambda_n \Sigma_n(\vpi, \vf_n).
    \label{eq: sampling neorl}
\end{equation}
Note that \citep{sukhija2024neorl} require maximizing over the dynamics in $\setM_n \cap \setM_0$, whereas we do not. Moreover, while this optimization is generally intractable, for \ombrl, we can obtain $\vf$ using the quadratic program described in~\cref{eq: Lipschitz function optimization}. However, in practice, we just pick the mean model $\vmu_n \in \setM_n$. This practical modification is also made in \citet{sukhija2024neorl} where they optimize over dynamics in $\setM_n$ instead of $\setM_n \cap \setM_0$.

\begin{theorem}[Formal Theorem statement for informal \cref{thm: theorem informal average reward setting}]
        Define $R_N = \sum^N_{n=1} \E[ 
    J_{\text{avg}}(\vpi^*) - r(\vx_n, \vpi_n(\vx_n)
    ]$. Let \cref{ass:rkhs_func}, \cref{assumption: Stability}, and \cref{ass:neorl_system} hold. Then we have for all $N \geq 0$ with probability at least $1-\delta$
    \begin{equation*}
        R_N \leq \setO\left(\Gamma^{\sfrac{3}{2}}_N \sqrt{N}\right) 
    \end{equation*}
\end{theorem}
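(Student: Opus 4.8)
The plan is to run the epoch-wise argument of the finite-horizon proof (\cref{lemma: simple regret}, \cref{thm: finite horizon regret}) with the average-reward bias $B(\vpi,\cdot)$ in the role of the cost-to-go, and to borrow the stability machinery of \citet{sukhija2024neorl} to absorb the terms created by never resetting. I index model/policy updates (epochs) by $n$; epoch $n$ has length $T_n$ given by \cref{equation: definition of Hn} and uses the pair $(\vpi_n, \vf_n)$ of \cref{eq: sampling neorl}, with $\vf_n \in \setM_n \cap \setM_0$ arbitrary and $\vpi_n$ maximizing the bonus objective. On the event $\vf^* \in \bigcap_n \setM_n$, which holds with probability $1-\delta$ by \cref{lem:rkhs_confidence_interval}, both $\vf^*$ and $\vf_n$ lie in $\setM_n \cap \setM_0$, so the two inequalities of \cref{lemma: Main Lemma Gaussian noise average reward} sandwich $J_{\text{avg}}(\vpi,\vf_n)$ around $J_{\text{avg}}(\vpi,\vf^*)$; combined with the optimality of $\vpi_n$ for \cref{eq: sampling neorl}, this reproduces, exactly as in \cref{lemma: simple regret}, the per-epoch optimism bound
\begin{equation*}
J_{\text{avg}}(\vpi^*) - J_{\text{avg}}(\vpi_n, \vf^*) \le (2\lambda_n + \lambda_n^2)\,\Sigma_n(\vpi_n, \vf^*),
\end{equation*}
where the quadratic term is produced by a final application of \cref{lemma: Main Lemma Gaussian noise average reward} to the bounded, nonnegative intrinsic reward $\norm{\vsigma_n(\cdot,\cdot)}$.

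Next I would split each per-step gap as $[J_{\text{avg}}(\vpi^*) - J_{\text{avg}}(\vpi_n,\vf^*)] + [J_{\text{avg}}(\vpi_n,\vf^*) - r(\vx,\vpi_n(\vx))]$ on the steps of epoch $n$. For the second bracket I would invoke the average-reward Bellman equation \cref{eq: Bellman Equation Average Cost} for the fixed policy $\vpi_n$ to rewrite it as $\E_{\vx_+}[B(\vpi_n,\vx_+)\mid \vx] - B(\vpi_n,\vx)$; summing over the epoch and taking expectations telescopes it, by the tower property, to $\E[B(\vpi_n,\vx_{\mathrm{end}}) - B(\vpi_n,\vx_{\mathrm{start}})]$, which \cref{assumption: Stability} bounds in expectation by an $n$-independent constant. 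Since the telescoping leaves only the two epoch endpoints, and these fail to cancel across epochs only because the bias function changes with the policy, the whole extrinsic-bias contribution is $\setO(M)$, where $M$ is the number of epochs. The identical Bellman/telescoping device applied to $\norm{\vsigma_n(\cdot,\cdot)}$ lets me replace each $T_n\,\Sigma_n(\vpi_n,\vf^*)$ appearing in the summed optimism bound by the realized sum $\E\!\left[\sum_{k=1}^{T_n}\norm{\vsigma_{n-1}(\vz_{k,n})}\right]$ along the true trajectory, again up to one bounded boundary term per epoch.

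It remains to control $M$ and the realized uncertainties through the information gain, as in the discounted proof. The stopping rule \cref{equation: definition of Hn} closes an epoch only after the batch information $\sum_{k}\sum_{j}\log(1+\sigma^{-2}\sigma^2_{n-1,j}(\vz_{k,n}))$ exceeds $\log 2$ (the quantity bounded in \cref{cor: upper bound on the posterior log det}); since $\log(1+x)\le x$, this forces $\sigma^{-2}\sum_k\norm{\vsigma_{n-1}(\vz_{k,n})}^2 > \log 2$, so by \cref{cor: recursive info gain bound} each epoch raises $\log\det{\mV_n}$ by at least $\log(1+\log 2)$, giving $M = \setO(\Gamma_N)$ since $\log\det{\mV_N} = \setO(\Gamma_N)$ over the horizon. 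The same stopping rule caps each epoch's batch information, so $\sum_k\norm{\vsigma_{n-1}(\vz_{k,n})}^2$ is bounded per epoch; applying the linear bound $x \le \frac{x_{\max}}{\log(1+x_{\max})}\log(1+x)$~\citep{srinivas} together with \cref{cor: recursive info gain bound} then yields $\sum_n\sum_k\norm{\vsigma_{n-1}(\vz_{k,n})}^2 = \setO(\Gamma_N)$, whence Cauchy--Schwarz over the $N$ total steps gives $\sum_n\sum_k\norm{\vsigma_{n-1}(\vz_{k,n})} = \setO(\sqrt{N\Gamma_N})$.

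Assembling the pieces, the summed optimism bound is $(2\lambda_N+\lambda_N^2)\,\setO(\sqrt{N\Gamma_N})$, while the bias and boundary corrections contribute at most $\setO(\lambda_N^2 M) = \setO(\Gamma_N^2)$; with $\lambda_N \propto \beta_{N-1} \propto \sqrt{\Gamma_N}$~\citep{chowdhury2017kernelized} the first is $\setO(\Gamma_N^{\sfrac{3}{2}}\sqrt{N})$ and the second is lower order because $\Gamma_N = o(N)$, which proves the claim. I expect the genuinely new difficulty to be the middle step: because the agent never resets, the stationary quantities $J_{\text{avg}}$ and $\Sigma_n$ must be tied to a single continuing trajectory, and it is exactly this that forces the bias-function telescoping and makes \cref{assumption: Stability} indispensable for keeping the per-epoch boundary terms $\setO(1)$; verifying that the one-bit threshold of \cref{equation: definition of Hn} simultaneously caps $M$ (via \cref{cor: recursive info gain bound}) and keeps each epoch's realized uncertainty summable into $\Gamma_N$ is the other delicate point, since it is what prevents $M$ or the uncertainty sum from scaling with the unbounded horizon.
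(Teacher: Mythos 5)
Your proposal is correct in substance and reaches the same bound, but it takes a more self-contained route than the paper. The paper's proof is short precisely because it delegates: after applying \cref{lemma: Main Lemma Gaussian noise average reward} and the optimality of $\vpi_n$ in \cref{eq: sampling neorl}, it keeps the term $\E\bigl[\sum_n\sum_k J_{\text{avg}}(\vpi_n,\vf_n) - r(\vz^{n}_k)\bigr]$ intact and bounds it by $\setO(\Gamma_N\sqrt{N})$ by citing Theorem 3.1 of \citet{sukhija2024neorl}; it then splits $\lambda_N\E\bigl[\sum_n\sum_k\Sigma_n(\vpi_n,\vf_n)\bigr]$ into the realized uncertainty sum (bounded by $\sqrt{N\Gamma_N}$ via NeoRL's Lemma A.1) plus a stationary-minus-realized difference, bounded by invoking NeoRL's Theorem 3.1 again ``with reward $\vsigma_n$''. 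You instead convert everything to the true dynamics first, reproducing the finite-horizon simple-regret decomposition of \cref{lemma: simple regret} to get $(2\lambda_n+\lambda_n^2)\Sigma_n(\vpi_n,\vf^*)$ per epoch, and then re-derive, rather than cite, the two NeoRL ingredients: the bias/Bellman telescoping (\cref{eq: Bellman Equation Average Cost} plus \cref{assumption: Stability}) that ties stationary averages to the single realized trajectory, and the epoch-counting and uncertainty-summation arguments built from the one-bit stopping rule, \cref{cor: recursive info gain bound}, and Cauchy--Schwarz. Your versions of these are sound: the $\log(1+\log 2)$ per-epoch increment of $\log\det{\mV_n}$ does give $M=\setO(\Gamma_N)$ epochs, and the capped per-epoch information does legitimize the epoch-level linear bound $x \le \frac{x_{\max}}{\log(1+x_{\max})}\log(1+x)$, yielding $\sum_{n,k}\norm{\vsigma_{n-1}(\vz_{k,n})} = \setO(\sqrt{N\Gamma_N})$. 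What your route buys is transparency---it makes explicit where stability enters and why the one-bit rule matters, which the paper's citations obscure; what the paper's route buys is brevity and the guarantee that the delicate stability bookkeeping is handled by an external result that exactly matches its algorithmic setup.

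One point you should not gloss over: bounding the per-epoch boundary terms $\E\bigl[B(\vpi_n,\vx_{\mathrm{end}}) - B(\vpi_n,\vx_{\mathrm{start}})\bigr]$ by an $n$-independent constant requires that $\E\bigl[V^{\vpi_n}(\vx_t)\bigr]$ stay uniformly bounded along the single, never-resetting trajectory even though the Lyapunov function changes at every policy switch, costing a factor $C_u/C_l$ each time. This is exactly what the second argument of the $\max$ in \cref{equation: definition of Hn}---the minimum epoch length $\ceil{\log(\sfrac{C_u}{C_l})}/\log(\sfrac{1}{\gamma})$---is there to guarantee: each epoch is long enough for the geometric drift to offset the switching loss. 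Your proposal invokes ``the stability machinery of \citet{sukhija2024neorl}'' but never uses this component of the stopping rule; without it the expected bias at epoch boundaries could grow with $n$, and your $\setO(M)$ and $\setO(\lambda_N^2 M)$ correction terms would not be valid. This is a gap in detail rather than a wrong step---the paper black-boxes the same point inside its citation---but a complete write-up must account for it.
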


\begin{proof}
Let $E_N$ denote the number of episodes after $N$ interactions in the environment.
    \begin{align*}
        &R_N = \E\left[\sum^{E_N}_{n=1} \sum^{T_n-1}_{k=0}
        J_{\text{avg}}(\vpi^*) - r(\vx^{n}_k, \vpi_n(\vx^{n}_k))\right] \\
        &\leq \E\left[\sum^{E_N}_{n=1} \sum^{T_n-1}_{k=0} J_{\text{avg}}(\vpi^*, \vf_n) + \lambda_n \Sigma_n(\vpi^*, \vf_n) - r(\vz^{n}_k) \right] \tag{\cref{lemma: Main Lemma Gaussian noise average reward}}\\
        &\leq \E\left[\sum^{E_N}_{n=1} \sum^{T_n-1}_{k=0} J_{\text{avg}}(\vpi_n, \vf_n) + \lambda_n \Sigma_n(\vpi_n, \vf_n) - r(\vz^{n}_k)\right] \tag{\cref{eq: sampling neorl}}\\
        &\leq \E\left[\sum^{E_N}_{n=1} \sum^{T_n-1}_{k=0} J_{\text{avg}}(\vpi_n, \vf_n) - r(\vz^{n}_k)\right] \\ 
        &+ \lambda_N \E\left[\sum^{E_N}_{n=1} \sum^{T_n-1}_{k=0} \Sigma_n(\vpi_n, \vf_n)\right] \\
        &\leq \setO\left(\Gamma_N \sqrt{N}\right) + \E\left[\lambda_N \sum^{E_N}_{n=1} \sum^{T_n-1}_{k=0} \Sigma_n(\vpi_n, \vf)\right] \tag{Theorem 3.1~\citet{sukhija2024neorl}} \\
    \end{align*}
    Next, we focus on $\E\left[\lambda_N \sum^{E_N}_{n=1} \sum^{T_n-1}_{k=0} \Sigma_n(\vpi_n, \vf)\right]$
    \begin{align*}
&\E\left[\sum^{E_N}_{n=1} \sum^{T_n-1}_{k=0} \Sigma_n(\vpi_n, \vf)\right] \\
        %&= \E\left[\sum^{E_N}_{n=1} \sum^{T_n-1}_{k=0} \Sigma_n(\vpi_n, \vf) -\norm{\vsigma_n(\vx_t, \vpi(\vx_{t}))} + \norm{\vsigma_n(\vx_t, \vpi(\vx_{t}))}\right] \\
        &= \E\left[\sum^{E_N}_{n=1} \sum^{T_n-1}_{k=0} \norm{\vsigma_n(\vx_t, \vpi(\vx_{t}))}\right] \\
        &+ \E\left[\sum^{E_N}_{n=1} \sum^{T_n-1}_{k=0} \Sigma_n(\vpi_n, \vf) -\norm{\vsigma_n(\vx_t, \vpi(\vx_{t}))}\right] \\
        &\leq C\sqrt{\Gamma_N N} + \E\left[\sum^{E_N}_{n=1} \sum^{T_n-1}_{k=0} \Sigma_n(\vpi_n, \vf) -\norm{\vsigma_n(\vx_t, \vpi(\vx_{t}))}\right] \tag{Lemma A.1~\citet{sukhija2024neorl}}\\
        &\leq \sqrt{N \Gamma_N} +  \setO\left(\Gamma_N \sqrt{N}\right)\tag{\citet[Theorem 3.1]{sukhija2024neorl} with reward $\vsigma_n$}
    \end{align*}
    Therefore 
    \begin{align*}
       \E\left[\lambda_T \sum^{E_N}_{n=1} \sum^{T_n-1}_{k=0} \Sigma_n(\vpi_n, \vf)\right] &\leq \setO\left(\lambda_N  \Gamma_N \sqrt{N}\right) \\
       &\leq \setO\left(\Gamma^{\sfrac{3}{2}}_N \sqrt{N}\right) 
    \end{align*}
    In conclusion, 
    \begin{equation*}
        R_N \leq \setO\left(\Gamma^{\sfrac{3}{2}}_N \sqrt{N}\right) 
    \end{equation*}
\end{proof}
% \begin{lemma}
%     Let $T_n$ be the smallest integer such that  $\sum^{T_n-1}_{t=0} \sum^{d_x}_{j=1}\log\left(1 + \sigma^{-2} \sigma^2_{n-1, j}(\vz_{n, t})\right) > \log(2)$. Then we have
%     \begin{equation}
%         \log\left(\frac{\det{\Sigma_n}}{\det{\Sigma_{n-1}}}\right) \geq \log(1 + \log(2))
%     \end{equation}
% \end{lemma}
% \begin{proof}
%     \begin{equation*}
%         \log\left(\frac{\det{\Sigma_n}}{\det{\Sigma_{n-1}}}\right) \leq \sum^{T_n-1}_{t=0} \sum^{d_x}_{j=1}\log\left(1 + \sigma^{-2} \sigma^2_{n-1, j}(\vz_{n, t})\right) \tag{\cref{cor: upper bound on the posterior log det}}
%     \end{equation*}
%     %Therefore, if $\sum^{T_n-1}_{t=0} \sum^{d_x}_{j=1}\log\left(1 + \sigma^{-2} \sigma^2_{n-1, j}(\vz_{n, t})\right)  \leq B$, we have $\log\left(\frac{\det{\Sigma_n}}{\det{\Sigma_{n-1}}}\right) \leq B$. 
%     We pick $T_n$ such that 
%     \begin{align*}
%     \sum^{T_n-1}_{t=0} \sum^{d_x}_{j=1}\log\left(1 + \sigma^{-2} \sigma^2_{n-1, j}(\vz_{n, t})\right) > \log(2).    
%     \end{align*}
%     This implies that 
%     \begin{align*}
%       \sum^{T_n-1}_{t=0} \sum^{d_x}_{j=1}\sigma^{-2} \sigma^2_{n-1, j}(\vz_{n, t}) > \log(2).
%     \end{align*}
%     Therefore, 
%     \begin{align*}
%            \log\left(\det{\Sigma_{n}}\right) &\geq \log\left(\det{\Sigma_{n-1}}\right) + \log\left(1 + \sigma^{-2}\sum^{T_n-1}_{t=0} \norm{\vsigma_{n-1}(\vz_{n, t})}^2\right) \\
%            &\geq \log\left(\det{\Sigma_{n-1}}\right) + \log(1 + \log(2))
%     \end{align*}
% \end{proof}

\subsection{Analysis for pure intrinsic exploration}
In the following, we derive a sample complexity bound for a pure intrinsic exploration algorithm. Thereby showing convergence for methods such~\citet{buisson2020actively}.
\begin{theorem}
Let \cref{ass:lipschitz_continuity} and \cref{ass:rkhs_func} hold. Consider \ombrl with extrinsic reward $r = 0$, i.e., 
\begin{align*}
\vpi_n &= \underset{\vpi \in \Pi}{\arg\max}\; \E_{\vpi} \left[ \sum^{T-1}_{t=0}\norm{\vsigma_n(\vx'_t, \vu_t)}\right], \; \vx'_{t+1} = \vmu_n(\vx'_t, \vu_t) + \vw_t \notag.
\end{align*}
Then we have $\forall N > 0$, with probability at least $1-\delta$
\begin{equation*}
\max_{\vpi \in \Pi} \E_{\vf^*}\left[\sum_{t=0}^{T-1} \norm{\vsigma_n(\vx_t, \vpi(\vx_{t}))}\right] \leq \setO\left(\sqrt{\frac{\Gamma^{3}_{N}}{N}}\right).
\end{equation*}
\label{thm: intrinsic greedy}
\end{theorem}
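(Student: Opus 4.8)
\textit{Proof proposal.} With $r \equiv 0$ the objective in \cref{eq:optimistic plan} collapses to $\lambda_n\Sigma_n(\vpi,\vmu_n)$, so the agent plays the greedy pure-exploration policy $\vpi_n = \arg\max_{\vpi\in\Pi}\Sigma_n(\vpi,\vmu_n)$ (here $\Sigma_n(\vpi,\vmu_n)=\E_{\vpi}[\sum_t\norm{\vsigma_n(\vx'_t,\vu_t)}]$ with mean-dynamics rollouts, exactly the quantity in the theorem). Write $\vpi^\dagger_n = \arg\max_{\vpi\in\Pi}\Sigma_n(\vpi,\vf^*)$ for the policy attaining the left-hand side, so the target is $\Sigma_n(\vpi^\dagger_n,\vf^*)$. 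The plan is, first, to convert this worst-case residual uncertainty into the uncertainty the agent actually accumulates on the true system, $\Sigma_n(\vpi_n,\vf^*)$, and, second, to control the cumulative sum of the latter by the information gain, exactly as in the proof of \cref{thm: finite horizon regret}.

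For the conversion step I would treat $\norm{\vsigma_n(\cdot,\cdot)}$ as the bounded intrinsic reward it is: for a frozen episode $n$ it is a fixed function of the state-action pair (it depends only on the data through episode $n-1$) and is bounded by $\sqrt{d_\vx\sigma_{\max}}$, so the constant $C_{\max}$ in \cref{lemma: Main Lemma Gaussian noise} already covers it, precisely as argued in the proof of \cref{lemma: simple regret}. Applying \cref{lemma: Main Lemma Gaussian noise} to this reward gives, for every $\vpi$, both $\Sigma_n(\vpi,\vf^*)\le(1+\lambda_n)\Sigma_n(\vpi,\vmu_n)$ and $\Sigma_n(\vpi,\vmu_n)\le(1+\lambda_n)\Sigma_n(\vpi,\vf^*)$. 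Chaining these two calibration transfers with the greedy optimality $\Sigma_n(\vpi^\dagger_n,\vmu_n)\le\Sigma_n(\vpi_n,\vmu_n)$ yields the crux inequality
\begin{equation*}
\Sigma_n(\vpi^\dagger_n,\vf^*)\le(1+\lambda_n)\Sigma_n(\vpi^\dagger_n,\vmu_n)\le(1+\lambda_n)\Sigma_n(\vpi_n,\vmu_n)\le(1+\lambda_n)^2\Sigma_n(\vpi_n,\vf^*),
\end{equation*}
which bounds the maximum uncertainty over all policies by what the agent collects along its own true-system rollout.

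Next I would sum over $n$ and average. Since $\lambda_n$ is monotonically increasing, $(1+\lambda_n)^2\le(1+\lambda_N)^2$, and therefore
\begin{equation*}
\min_{n\le N}\Sigma_n(\vpi^\dagger_n,\vf^*)\le\frac{1}{N}\sum_{n=1}^N\Sigma_n(\vpi^\dagger_n,\vf^*)\le\frac{(1+\lambda_N)^2}{N}\sum_{n=1}^N\Sigma_n(\vpi_n,\vf^*).
\end{equation*}
The remaining sum $\sum_{n=1}^N\Sigma_n(\vpi_n,\vf^*)=\sum_{n=1}^N\E_{\vf^*}[\sum_t\norm{\vsigma_n(\vx_t,\vpi_n(\vx_t))}]$ is handled by the same two moves as in the proof of \cref{thm: finite horizon regret}: Cauchy--Schwarz over the $NT$ steps followed by \citet[Lemma~17]{curi2020efficient}, giving $\sum_{n=1}^N\Sigma_n(\vpi_n,\vf^*)\le\setO(T\sqrt{N\Gamma_{NT}})$. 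Substituting $\lambda_N\propto T\beta_N$ and $\beta_N\propto\sqrt{\Gamma_N}$~\citep{chowdhury2017kernelized} gives $(1+\lambda_N)^2=\setO(\Gamma_N)$ (suppressing the fixed horizon $T$), whence $\min_{n\le N}\Sigma_n(\vpi^\dagger_n,\vf^*)\le\setO(\Gamma_N\sqrt{N\Gamma_N}/N)=\setO(\sqrt{\Gamma_N^3/N})$, the claimed rate read for the best (equivalently, the averaged) iterate $n\le N$.

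The hard part will be bookkeeping rather than conceptual. The delicate points are: (i) making the reduction of \cref{lemma: Main Lemma Gaussian noise} to a pure intrinsic reward airtight, i.e.\ confirming that $\vsigma_n$ is a legitimate fixed, bounded reward for a frozen episode so the lemma's constants apply unchanged; (ii) ensuring the information-gain telescoping of \citet[Lemma~17]{curi2020efficient} is invoked for the \emph{true-dynamics} trajectory, which is exactly the data that generates the next posterior, so the elliptic-potential argument is valid; and (iii) being explicit that the guarantee is for the best/averaged iterate, so the index $n$ on the left-hand side is unambiguous. None of these should require new ideas beyond what is already assembled in the finite-horizon analysis.
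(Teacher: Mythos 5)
Your argument follows the paper's proof almost step for step --- the same per-episode chain
$\Sigma_n(\vpi^\dagger_n,\vf^*)\le(1+\lambda_n)\Sigma_n(\vpi^\dagger_n,\vmu_n)\le(1+\lambda_n)\Sigma_n(\vpi_n,\vmu_n)\le(1+\lambda_n)^2\Sigma_n(\vpi_n,\vf^*)$
obtained by applying \cref{lemma: Main Lemma Gaussian noise} to the intrinsic reward, then averaging, Cauchy--Schwarz, \citet[Lemma~17]{curi2020efficient}, and $\lambda_N\propto\beta_N\propto\sqrt{\Gamma_N}$ --- but it stops one step short and therefore proves a strictly weaker statement. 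You conclude only a \emph{best-iterate} (equivalently averaged-iterate) guarantee, $\min_{n\le N}\Sigma_n(\vpi^\dagger_n,\vf^*)\le\setO(\sqrt{\Gamma_N^3/N})$, and in your point (iii) you resolve the index ambiguity on the left-hand side in exactly that direction. The theorem, as the paper proves it, is a \emph{last-iterate} guarantee: the quantity bounded is $\Sigma^*_N=\max_{\vpi\in\Pi}\Sigma_N(\vpi,\vf^*)$, i.e.\ the residual uncertainty of the final posterior $\vsigma_N$ (the subscript $n$ in the theorem display is a typo for $N$; the paper's proof begins with $\Sigma^*_N$, not with a minimum over iterates).

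The missing idea is the monotonicity of the GP posterior variance, which is the first inequality of the paper's proof. Since conditioning on more data never increases the posterior variance, $\sigma_{N,j}(\vz)\le\sigma_{n,j}(\vz)$ pointwise for all $n\le N$ and all $j$; and because the trajectory distribution in $\Sigma_n(\vpi,\vf^*)$ is generated by the \emph{true} dynamics and $\vpi$ (so changing the model index changes only the integrand, not the measure), this pointwise bound gives $\Sigma_N(\vpi,\vf^*)\le\Sigma_n(\vpi,\vf^*)$ for every fixed $\vpi$, hence
\begin{equation*}
\Sigma^*_N \;=\; \max_{\vpi\in\Pi}\Sigma_N(\vpi,\vf^*)\;\le\;\min_{n\le N}\,\max_{\vpi\in\Pi}\Sigma_n(\vpi,\vf^*)\;\le\;\frac{1}{N}\sum_{n=1}^N\Sigma_n(\vpi^\dagger_n,\vf^*).
\end{equation*}
Prepending this single line to your chain upgrades your best-iterate bound to the claimed last-iterate bound, after which your argument coincides with the paper's. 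Your remaining concerns (i) and (ii) are handled exactly as you suggest and match the paper: the intrinsic reward is a fixed, bounded, nonnegative function for frozen $n$ (this is how \cref{lemma: simple regret} already uses it), and the elliptic-potential step is indeed applied to the true-system rollouts that generate the posterior.
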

\begin{proof}
   Let $\Sigma^*_N = \max_{\vpi} \Sigma_N(\vpi, \vf^*)$ and $\vpi^*_N$ the corresponding policy.
   \begin{align*}
       \Sigma^*_N &\leq \frac{1}{N}\sum^N_{n=1} \Sigma^*_n \tag{monotoncity of the variance} \\
       &\leq \frac{1}{N}\sum^N_{n=1} (1 + \lambda_n) \Sigma_n(\vpi^*_n, \vmu_n) \tag{\cref{lemma: Main Lemma Gaussian noise}} \\
       &\leq \frac{1}{N}\sum^N_{n=1} (1 + \lambda_n) \Sigma_n(\vpi_n, \vmu_n) \tag{$\vpi_n$ is the maximizer for mean dynamics $\vmu_n$} \\
       &\leq \frac{1}{N}\sum^N_{n=1} (1 + \lambda_n)^2 \Sigma_n(\vpi_n, \vf^*) \tag{\cref{lemma: Main Lemma Gaussian noise}} \\
       % &=\frac{1}{N}\sum^N_{n=1} \Sigma_n(\vpi_n, \vf^*) + \Sigma^*_n  -  \Sigma_n(\vpi_n, \vf^*) \\
       % &\leq \frac{1}{N}\sum^N_{n=1} \Sigma_n(\vpi_n, \vf^*) \\
       % &+  \frac{1}{N}\sum^N_{n=1} \Sigma_n(\vpi^*_n, \vmu_n)  + \lambda_n \Sigma_n(\vpi^*_n, \vmu_n)  - \Sigma_n(\vpi_n, \vf^*) \tag{\cref{lemma: Main Lemma Gaussian noise}} \\
       % &\leq \frac{1}{N}\sum^N_{n=1} \Sigma_n(\vpi_n, \vf^*) \\
       % &+  \frac{1}{N}\sum^N_{n=1} \Sigma_n(\vpi_n, \vmu_n)  + \lambda_n \Sigma_n(\vpi_n, \vmu_n)  - \Sigma_n(\vpi_n, \vf^*) \tag{$\vpi_n$ is the maximizer for mean dynamics $\vmu_n$} \\
       % &=\frac{1}{N}\sum^N_{n=1} \Sigma_n(\vpi_n, \vf^*) \\
       % &+  \frac{1}{N}\sum^N_{n=1} \left(\Sigma_n(\vpi_n, \vmu_n)  - \Sigma_n(\vpi_n, \vf^*)\right) + \lambda_n \Sigma_n(\vpi_n, \vmu_n)  \\
       % &\leq \frac{1}{N} \sum^N_{n=1} (1 + \lambda_n) \Sigma_n(\vpi_n, \vf^*) + \lambda_n (1 + \lambda_n) \Sigma_n(\vpi_n, \vf^*)  \tag{\cref{lemma: Main Lemma Gaussian noise}}\\
       % &= \frac{1}{N}\sum^N_{n=1}(1 + \lambda_n)^2 \Sigma_n(\vpi_n, \vf^*) \\
       &\leq (1 + \lambda_N)^2\frac{1}{N}\sum^N_{n=1} \Sigma_n(\vpi_n, \vf^*) \\
       &\leq (1 + \lambda_N)^2\frac{1}{\sqrt{N}}\sum^N_{n=1} \Sigma^2_n(\vpi_n, \vf^*) \\
       &\leq \setO\left(\sqrt{\frac{\Gamma^{3}_{N}}{N}}\right)
   \end{align*}
\end{proof}
Effectively, \cref{thm: intrinsic greedy} shows that pure intrinsic exploration reduces our model epistemic uncertainty with a rate of $\sqrt{\frac{\Gamma^{3}_{N}}{N}}$. To the best of our knowledge, we are the first to show this. Moreover, \citet{sukhija2024optimistic} derive a similar bound but their algorithm performs optimistic exploration from \cref{eq:optimistic plan expensive} in addition to maximizing the intrinsic rewards. Our result shows that the optimistic exploration is not necessary for this setting.

\subsection{Analysis for the finite horizon setting with sub-Gaussia noise}
In the following, we analyse the regret for the setting where the process noise $\vw$ is $\sigma$-sub Gaussian. 
\begin{assumption}
    The dynamics model $\vf^*$, reward $r$, and all $\vpi \in \Pi$ are $L_f$, $L_r$ and $L_{\vpi}$ Lipschitz, respectively. Furthermore, we assume that process noise is i.i.d. $\sigma$-sub Gaussian.
    \label{ass: Lipschitz}
\end{assumption}
We make the same assumptions as other works~\citep{curi2020efficient, sussex2022model} that study this setting. Moreover, Lipschitz continuity is a common assumption for nonlinear dynamics~\citep{khalil2015nonlinear} and is satisfied for many real-world systems.

\citet{curi2020efficient} provide a regret bound that depends exponentially on the horizon $T$, i.e., $R_N \in \setO\left(\sqrt{\Gamma^{T}_N N}\right)$. They obtain an exponential dependence because when planning optimistically, i.e., solving \cref{eq:optimistic plan expensive}, they consider all plausible dynamics, including those that are not Lipschitz continuous for all $n$. Solving \cref{eq:optimistic plan expensive} for only continuous dynamics is intractable. However, for \ombrl, as we do not maximize over the set of dynamics we can overcome this limitation. 

Moreover, since $\vf^*$ has bounded RKHS norm, i.e., $\norm{\vf^*}_{k} \le B$ (~\cref{ass:rkhs_func}).
From \cite{srinivas, chowdhury2017kernelized} follows that with probability $1 - \delta$ we have for every $n$:
\begin{align*}
    \norm{\vf^* - \vmu_n}_{k_n} \le \beta_n.
\end{align*}

For \ombrl, instead of planning with the mean, which in general might not be Lipschitz continuous for all $n$, 
we select a function $\vf_n$ that not only approximates the $\vf^*$ function well, i.e.,  $\norm{\vf^* - \vf_n}_{k_n} \le \beta_n$, but also its RKHS norm does not grow with $n$. To do that we propose to solve the following quadratic optimization problem:
\begin{align}
\label{eq: Lipschitz function optimization}
    \vf_n = &\argmin_{\vf \in \text{span}(k(\vx_1, \cdot), \ldots, k(\vx_n, \cdot))} \norm{\vf - \vmu_n}_{k_n} \\
    &\text{s.t.} \norm{\vf}_{k} \leq B \notag
\end{align}

\begin{figure}[ht]
    \centering
    \includegraphics[width=\linewidth]{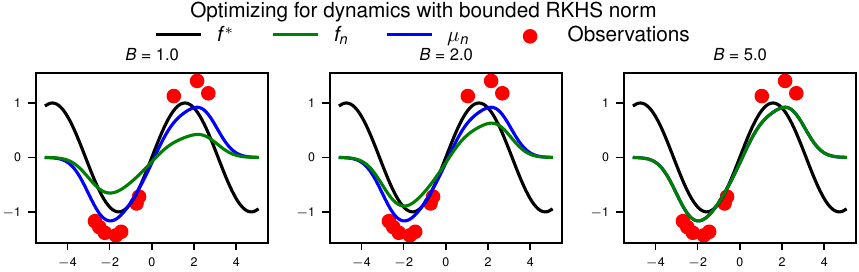}
    \caption{Solution to \cref{eq: Lipschitz function optimization} for different values for $B$. Effectively, for larger values for $B$, $\vmu_n$ and $\vf_n$ coincide.}
    \label{fig:function_optimization}
\end{figure}

\begin{theorem}
    The optimization problem \cref{eq: Lipschitz function optimization} is feasible and we have  $\norm{\vf_n - \vmu_n}_{k_n} \le \beta_n$.
\end{theorem}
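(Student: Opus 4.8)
The plan is to exhibit one explicit feasible point whose objective value is already at most $\beta_n$; since $\vf_n$ is a minimizer over the feasible set, the desired bound on $\vf_n$ then follows for free. The claim separates over the output coordinate $j$ (the coordinates share the same input kernel $k$, and both the constraint $\norm{f_j}_k \le B$ and the fit objective act componentwise), so I would argue a single scalar component and write $f^*, \mu_n, f_n$ for $f^*_j, \mu_{n,j}, f_{n,j}$. Let $S = \spn\{k_{\vz_1}, \dots, k_{\vz_n}\} \subseteq \setH$ be the data subspace spanned by the kernel sections at the observed inputs $\vz_i$, and let $P$ be the orthogonal projection onto $S$ with respect to $\langle\cdot,\cdot\rangle_\setH$. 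The candidate I would use is $P f^*$.

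\emph{Feasibility.} By construction $Pf^* \in S$, so it lies in the feasible span. Orthogonal projections are non-expansive in the RKHS norm, hence $\norm{Pf^*}_k \le \norm{f^*}_k \le B$ by \cref{ass:rkhs_func}, so $Pf^*$ satisfies the norm constraint and the problem is feasible.

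\emph{Objective value.} This is the crux. Recall that $\norm{\cdot}_{k_n}$ is the norm induced by the covariance operator $\mV_n = \mI + \sigma^{-2}\sum_i k_{\vz_i} k_{\vz_i}^\top$, i.e. $\norm{g}_{k_n}^2 = \langle g, \mV_n g\rangle_\setH$, which is exactly the norm in which the self-normalized confidence bound holds. The structural fact I would establish is that $\mV_n$ respects the decomposition $\setH = S \oplus S^\perp$: it maps $S$ into $S$, and it acts as the identity on $S^\perp$, since for $g \in S^\perp$ the reproducing property gives $\langle k_{\vz_i}, g\rangle_\setH = g(\vz_i) = 0$ for all $i$, so every rank-one term annihilates $g$ and $\mV_n g = g$. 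Decomposing $g = g_S + g_\perp$ and checking that the cross terms vanish yields the Pythagorean identity
\[
\norm{g}_{k_n}^2 = \norm{g_S}_{k_n}^2 + \norm{g_\perp}_k^2 \ \ge\ \norm{g_S}_{k_n}^2 .
\]
Applying this to $g = f^* - \mu_n$, and using that the posterior mean $\mu_n$ lies in $S$ by \cref{eq:GPposteriors} so that the $S$-component of $g$ is precisely $Pf^* - \mu_n$, gives
\[
\norm{Pf^* - \mu_n}_{k_n}^2 \ \le\ \norm{f^* - \mu_n}_{k_n}^2 \ \le\ \beta_n^2 ,
\]
where the last step is the confidence bound $\norm{f^* - \mu_n}_{k_n} \le \beta_n$ stated above (valid with probability $1-\delta$). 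Since $\vf_n$ minimizes $\norm{\,\cdot - \mu_n}_{k_n}$ over the feasible set and $Pf^*$ is feasible,
\[
\norm{f_n - \mu_n}_{k_n} \ \le\ \norm{Pf^* - \mu_n}_{k_n} \ \le\ \beta_n ,
\]
which is the claim.

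The main obstacle is the Pythagorean decomposition of the $k_n$-norm across $S$ and $S^\perp$, specifically showing that $\mV_n$ restricts to the identity on the orthogonal complement and therefore that replacing $f^*$ by its in-span projection cannot increase the $k_n$-distance to the (in-span) mean $\mu_n$. Everything else—non-expansiveness of $P$, membership $\mu_n \in S$, and the confidence bound—is either routine or already available in the excerpt.
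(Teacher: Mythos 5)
Your proof is correct, and it takes a genuinely different route from the paper's. The paper exhibits the \emph{noise-free} posterior mean $\bar{\vmu}_n$ (the GP posterior mean computed from the noiseless observations $\vf^*(\vz_i)$) as its feasible point, and establishes its two required properties by citation to \citet{kanagawa2018gaussian}: $\norm{\bar{\vmu}_n}_k \le B$ and $\norm{\vf^* - \bar{\vmu}_n}_{k_n} \le \beta_n$, after which a triangle inequality through $\vf^*$ gives $\norm{\bar{\vmu}_n - \vmu_n}_{k_n} \le 2\beta_n$. Note that $\bar{\vmu}_n$ is precisely the minimum-norm interpolant of $\vf^*$ at the data points, i.e., it coincides with your witness $P\vf^*$ (when the kernel matrix is invertible); so the two proofs exhibit essentially the same feasible point but analyze it differently. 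What your argument buys: first, it is self-contained --- the non-expansiveness of the orthogonal projection replaces the cited norm bound, and your Pythagorean identity for the $\mV_n$-weighted norm (resting on the observation that $\mV_n$ preserves the data span $S$ and acts as the identity on $S^{\perp}$, which you verify correctly via the reproducing property) replaces both the cited interpolation bound and the triangle inequality; second, it is tighter --- you obtain $\norm{\vf_n - \vmu_n}_{k_n} \le \beta_n$, exactly matching the theorem statement, whereas the paper's own proof as written only yields $2\beta_n$, a factor-of-two gap with its claim. What the paper's route buys in exchange is brevity: it avoids verifying the orthogonal-decomposition lemma by outsourcing the key estimates to standard results on GP regression.
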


\begin{proof}
    Consider the noise-free case, i.e., $\vw = 0$, and let $\bar{\vmu}_n$ posterior mean for this setting. For the function $\bar{\vmu}_n$ holds that $\norm{\vf^* - \bar{\vmu}_n}_{k_n} \le \beta_n$~(Corollary 3.11 of \citet{kanagawa2018gaussian}) and $\norm{\bar{\vmu}_n}_k \le B$ (Theorem 3.5 of \citet{kanagawa2018gaussian}). Since $\norm{\bar{\vmu}_n - \vmu_n}_{k_n} \le \norm{\bar{\vmu}_n - \vf^*}_{k_n} + \norm{\vf^* - \vmu_n}_{k_n} \le 2\beta_n$. By representer theorem, it also holds that $\bar{\vmu}_n \in \text{span}(k(\vz_1, \cdot), \ldots, k(\vz_n, \cdot))$.   
\end{proof}

Let $\valpha_n = (\mK + \sigma^2\mI)^{-1}\vy \in \R^n$ and reparametrize $\vf(\vx) = \sum_{i=1}^n\alpha_ik(\vx_i, \vx)$. We have $\norm{\vf}_{k}^2 = \valpha^\top \mK \valpha$. We also have:
\begin{align*}
    \norm{\vf - \vmu_n}_{k_n}^2 = (\valpha - \valpha_n)^\top \mK\left(\mI + \frac{1}{\sigma^2}\mK\right)(\valpha - \valpha_n)
\end{align*}
Hence the optimization problem \cref{eq: Lipschitz function optimization} is equivalent to:
\begin{align*}
    &\min_{\valpha \in \R^n} (\valpha - \valpha_n)^\top \mK\left(\mI + \frac{1}{\sigma^2}\mK\right)(\valpha - \valpha_n) \\
    &\text{s.t. } \valpha^\top \mK \valpha \le B^2
\end{align*}
This is a quadratic program that can be solved using any QP solver. The program finds the closest function to the posterior mean $\vmu_n$ that is Lipschitz continuous (see \cref{fig:function_optimization}). In particular, note that since $\norm{\vf_n}_k \leq B$, $\vf_n$ has a Lipschitz constant $L_{B}$ which is independent of $n$~\citep{berkenkamp2019safe}. From hereon, let $L_{*} = \max\{L_f, L_B\}$. 

For the sub-Gaussian case, \ombrl follows the same strategy as \cref{eq:optimistic plan} but instead of using the mean dynamics $\vmu_n$, we plan with the dynamics $\vf_n$ that are obtained from solving \cref{eq: Lipschitz function optimization}. 
\begin{align}
\vpi_n\! &=\! \arg\max_{\vpi \in \Pi}\;  \!\E_{\vpi} \!\!\left[ \sum^{T-1}_{t=0} r(\vx'_t, \vu_t) \!+\! \lambda_n \!\norm{\vsigma_n(\vx'_t, \vu_t)}\!\right]
  \label{eq:optimistic plan sub Gaussian} \\
  \vx'_{t+1}\!\! &= \!\vf_n(\vx'_t, \vu_t) + \vw_t \notag,
\end{align}

\begin{lemma}
Let \cref{ass: Lipschitz} and \cref{ass:rkhs_func} hold. 
Consider the following definitions
\begin{align*}
    &J(\vpi, \vf^*) = E\left[\sum_{t=0}^{T-1} r(\vx_t, \vpi(\vx_{t}))\right] \; \text{s.t., }  \vx_{t+1} = \vf^*(\vx_{t}, \vpi(\vx_{t})) + \vw_t, \quad \vx_0 = \vx(0), \\
    &J(\vpi, \vf_n) = E\left[\sum_{t=0}^{T-1} r(\vx'_t, \vpi(\vx'_{t}))\right] \; \text{s.t., } \vx'_{t+1} = \vf_n(\vx'_{t}, \vpi(\vx'_{t})) + \vw_t, \quad \vx'_0 = \vx(0), \\
    &\Sigma_n(\vpi, \vf^*) = E\left[\sum_{t=0}^{T-1} \norm{\vsigma_n(\vx_t, \vpi(\vx_{t}))}\right] \; \text{s.t., }  \vx_{t+1} = \vf^*(\vx_{t}, \vpi(\vx_{t})) + \vw_t, \quad \vx_0 = \vx(0), \\
    &\Sigma_n(\vpi, \vf_n) = E\left[\sum_{t=0}^{T-1} \norm{\vsigma_n(\vx'_t, \vpi(\vx'_{t}))}\right] \; \text{s.t., } \vx'_{t+1} = \vf_n(\vx'_{t}, \vpi(\vx'_{t})) + \vw_t, \quad \vx'_0 = \vx(0), \\
    &\lambda_n = (1 + d_x)L_r (1 + L_{\vpi})\bar{L}_{*}^{T-1} T \beta_n.
    \end{align*}
Then we have for all $n \geq 0$, $\vpi \in \Pi$ with probability at least $1-\delta$
\begin{align*}
     |J(\vpi, \vf^*) - J(\vpi, \vf_n)| &\leq \lambda_n \Sigma_n(\vpi, \vf_n) \\
    |J(\vpi, \vf^*) - J(\vpi, \vf_n)| &\leq \lambda_n\Sigma_n(\vpi, \vf^*) 
\end{align*}
\label{lemma: Main Lemma sub Gaussian}
\end{lemma}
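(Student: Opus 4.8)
The plan is to mirror the proof of \cref{lemma: Main Lemma Gaussian noise}, replacing the Gaussian smoothing step (Lemma C.2 of \citet{kakade2020information}) with an argument based purely on the Lipschitz continuity now available from \cref{ass: Lipschitz}. I would first invoke the policy difference lemma of \citet{kakade2002approximately} exactly as in the Gaussian case: fixing $\vpi$ and coupling the noise, $J(\vpi, \vf_n) - J(\vpi, \vf^*)$ equals $\E_{\vf_n}\bigl[\sum_{t=0}^{T-1} J_{t+1}(\vpi, \vf^*, \vx'_{t+1}) - J_{t+1}(\vpi, \vf^*, \hat{\vx}_{t+1})\bigr]$, where $\vx'_{t+1} = \vf_n(\vx'_t, \vpi(\vx'_t)) + \vw_t$ is the rollout under the planning model and $\hat{\vx}_{t+1} = \vf^*(\vx'_t, \vpi(\vx'_t)) + \vw_t$ uses the true dynamics from the same state under the same noise. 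Because both arguments share the realization $\vw_t$, the noise cancels and $\vx'_{t+1} - \hat{\vx}_{t+1} = \vf_n(\vx'_t, \vpi(\vx'_t)) - \vf^*(\vx'_t, \vpi(\vx'_t))$ is just the one-step model error; sub-Gaussianity of $\vw_t$ is then needed only to keep the value functions and expectations finite and for the validity of the calibration radius $\beta_n$.

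Second, I would bound each summand through the Lipschitz constant of the cost-to-go $J_{t+1}(\vpi, \vf^*, \cdot)$. Since $r$ is $L_r$-Lipschitz, $\vpi$ is $L_\vpi$-Lipschitz, and the \emph{closed-loop} true dynamics $\vx \mapsto \vf^*(\vx, \vpi(\vx))$ are Lipschitz with a constant $\bar{L}_*$ built from $L_*$ and $L_\vpi$, a backward induction over the horizon gives that $J_{t+1}(\vpi, \vf^*, \cdot)$ is Lipschitz with constant at most $L_r \sum_{k=0}^{T} \bar{L}_*^{k} \lesssim L_r (1+L_\vpi) T \bar{L}_*^{T-1}$. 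Hence $|J_{t+1}(\vpi, \vf^*, \vx'_{t+1}) - J_{t+1}(\vpi, \vf^*, \hat{\vx}_{t+1})| \lesssim L_r (1+L_\vpi) T \bar{L}_*^{T-1} \norm{\vf_n(\vx'_t, \vpi(\vx'_t)) - \vf^*(\vx'_t, \vpi(\vx'_t))}$. For the one-step error I would combine well-calibration (\cref{lem:rkhs_confidence_interval}) with the feasibility guarantee of the quadratic program \cref{eq: Lipschitz function optimization}: since $\norm{\vf^* - \vmu_n}_{k_n} \le \beta_n$ and $\norm{\vf_n - \vmu_n}_{k_n} \le \beta_n$, the pointwise RKHS bound yields $\norm{\vf_n(\vz) - \vf^*(\vz)} \lesssim \beta_n \norm{\vsigma_n(\vz)}$, with the dimension factor $(1+d_x)$ absorbed into the constant.

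Putting these together, summing over $t$ and taking the expectation recovers $\sum_{t=0}^{T-1}\E\norm{\vsigma_n(\vx'_t, \vpi(\vx'_t))} = \Sigma_n(\vpi, \vf_n)$ scaled by $\lambda_n = (1+d_x) L_r (1+L_\vpi)\bar{L}_*^{T-1} T \beta_n$, giving the first inequality; the second follows from the symmetric application of the policy difference lemma that expands the tail along the $\vf^*$-rollout, so the accumulated uncertainty is collected as $\Sigma_n(\vpi, \vf^*)$ instead. I expect the main obstacle to be the Lipschitz bookkeeping: establishing that the value function is Lipschitz with the stated $\bar{L}_*^{T-1} T$ constant and, most importantly, arguing that $\bar{L}_*$ is \emph{independent of $n$}. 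This last point is exactly what the constraint $\norm{\vf_n}_k \le B$ in \cref{eq: Lipschitz function optimization} buys: it fixes a uniform Lipschitz constant $L_B$ for the planning model, so the exponential-in-$T$ factor does not compound with the growing confidence radius $\beta_n$, which is precisely the improvement over \citet{curi2020efficient}.
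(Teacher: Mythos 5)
Your proposal is correct and yields the stated $\lambda_n$, but it follows a genuinely different route from the paper's own proof of \cref{lemma: Main Lemma sub Gaussian}. The paper does \emph{not} use the policy difference lemma in the sub-Gaussian case: it couples the two rollouts through the shared noise, bounds $|J(\vpi, \vf^*) - J(\vpi, \vf_n)| \le L_r(1+L_\vpi)\,\E\bigl[\sum_{t}\norm{\vx_t - \vx'_t}\bigr]$ directly from the Lipschitzness of $r$ and $\vpi$, and then controls the state deviation by the forward recursion $\norm{\vx_{t+1}-\vx'_{t+1}} \le (1+\sqrt{d_x})\beta_n\norm{\vsigma_n(\vx'_t,\vpi(\vx'_t))} + \bar{L}_{*}\norm{\vx_t - \vx'_t}$, which unrolls into a geometric sum producing the $\bar{L}_{*}^{T-1}T$ factor. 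You instead retain the value-function telescoping of the Gaussian-noise proof (\cref{lemma: Main Lemma Gaussian noise}) and replace the Gaussian smoothing step of \citet[Lemma C.2]{kakade2020information} with Lipschitzness of the cost-to-go $J_{t+1}(\vpi,\vf^*,\cdot)$, established by backward induction; the same geometric factor then shows up as the value function's Lipschitz constant rather than as accumulated trajectory drift. Both arguments rest on the same two ingredients --- the one-step error bound $\norm{\vf_n(\vz)-\vf^*(\vz)} \lesssim \beta_n \norm{\vsigma_n(\vz)}$ obtained from calibration plus feasibility of \cref{eq: Lipschitz function optimization}, and an $n$-independent closed-loop constant $\bar{L}_{*}$ guaranteed by the constraint $\norm*{\vf_n}_k \le B$ --- and your symmetric treatment of the second inequality (value function under $\vf_n$, uncertainty collected along the $\vf^*$-rollout, which is exactly where $L_B$ and hence $L_{*} = \max\{L_f, L_B\}$ enters) correctly fills in what the paper leaves implicit. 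What your route buys is uniformity: the Gaussian and sub-Gaussian lemmas become one proof with a single interchangeable step (smoothing versus Lipschitzness of the value function). What the paper's route buys is that it is more elementary --- no value functions at all, only triangle inequalities and an induction on the trajectory gap --- and it makes fully explicit where the exponential-in-$T$ propagation originates, and in particular that only the Lipschitz constant of $\vf^*$ is needed for the first inequality while that of $\vf_n$ is needed only for the second.
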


\begin{proof}
\begin{align*}
    |J(\vpi, \vf^*) - J(\vpi, \vf_n)| 
    &= \E\left[\sum_{t=0}^{T-1} r(\vx_t, \vpi(\vx_{t})) - r(\vx'_t, \vpi(\vx'_{t}))\right] \leq L_r (1 + L_{\vpi}) \E\left[\sum_{t=0}^{T-1} \norm{\vx_t - \vx'_t}\right]
\end{align*}

Next we analyze $\norm{\vx_t - \vx'_t}$ for any $t$. Without loss of generality, assume $L_{*} \geq 1$ and define $\bar{L}_{*} = L_{*}(1 + L_{\vpi})$.

We show that 
\begin{align*}
    &\norm{\vx_{t+1} - \vx'_{t+1}} \leq (1 + \sqrt{d}_x) \beta_n \left(\sum^{t}_{k=0}\bar{L}^{t-k}_{*}\norm{\vsigma_n(\vx'_{k}, \vpi(\vx'_{k}))}\right).
\end{align*}

Consider $t=1$
\begin{align*}
    \norm{\vx_1 - \vx'_1} &=  \norm{\vf^*(\vx'_{0}, \vpi(\vx'_{0})) - \vf_n(\vx'_{0}, \vpi(\vx'_{0}))} \\
    &\leq
    (1 + \sqrt{d}_x) \beta_n \norm{\vsigma_n(\vx'_{0}, \vpi(\vx'_{0}))}
\end{align*}

Consider any $t > 1$,
\begin{align*}
    &\norm{\vx_{t+1} - \vx'_{t+1}}
    =  \norm{\vf^*(\vx_{t}, \vpi(\vx_{t})) - \vf_n(\vx'_{t}, \vpi(\vx'_{t}))} \\
    &\leq
    \norm{\vf^*(\vx'_{t}, \vpi(\vx'_{t})) - \vf_n(\vx'_{t}, \vpi(\vx'_{t}))} + \norm{\vf^*(\vx_{t}, \vpi(\vx_{t})) -\vf^*(\vx'_{t}, \vpi(\vx'_{t}))} \\
    &\leq (1 + \sqrt{d}_x) \beta_n \norm{\vsigma_n(\vx'_{t}, \vpi(\vx'_{t}))} + \bar{L}_{*} \norm{\vx_{t} - \vx'_{t}} \\
    &\leq (1 + \sqrt{d}_x) \beta_n \left(\norm{\vsigma_n(\vx'_{t}, \vpi(\vx'_{t}))}\right) + (1 + \sqrt{d}_x) \beta_n \left(\bar{L}_{*}\left(\sum^{t-1}_{k=0}\bar{L}^{t-1-k}_{*}\norm{\vsigma_n(\vx'_{k}, \vpi(\vx'_{k}))}\right) \right) \\
    &= (1 + \sqrt{d}_x) \beta_n \left(\sum^{t}_{k=0}\bar{L}^{t -k}_{*}\norm{\vsigma_n(\vx'_{k}, \vpi(\vx'_{k}))}\right)
\end{align*}

In particular, since $\bar{L}_{*} \geq 1$,
we have $\norm{\vx_{t+1} - \vx'_{t+1}} \leq (1 + \sqrt{d}_x) \beta_n \bar{L}^{t}_{*} \left(\sum^{t-1}_{k=0}\norm{\vsigma_n(\vx'_{k}, \vpi(\vx'_{k}))}\right)$.

In summary, we have
\begin{align*}
    &|J(\vpi, \vf^*) - J(\vpi, \vmu_n)|  = \E\left[\sum_{t=0}^{T-1} r(\vx_t, \vpi(\vx_{t})) - r(\vx'_t, \vpi(\vx'_{t}))\right] \\
    &\leq L_r (1 + L_{\vpi}) \E\left[\sum_{t=0}^{T-1} \norm{\vx_t - \vx'_t}\right] \\
    &\leq L_r (1 + L_{\vpi}) (1 + \sqrt{d}_x) \times \E\left[\sum_{t=0}^{T-1} \beta_n \bar{L}^{t-1}_{*} \left(\sum^{t-1}_{k=0}\norm{\vsigma_n(\vx'_{k}, \vpi(\vx'_{k}))}\right)\right] \\
    &\leq (1 + d_x)L_r (1 + L_{\vpi})\bar{L}^{T-1}_{*} T \beta_n \Sigma_n(\vpi, \vmu_n)\\
    &= \lambda_n \Sigma_n(\vpi, \vmu_n).
\end{align*}

\end{proof}

The main difference between our analysis and the analysis from \citet{curi2020efficient} is that for us $\lambda_n \propto \beta_n$ if we plan with $\vf_n$.

\begin{lemma}
    Let \cref{ass: Lipschitz} and \cref{ass:rkhs_func} hold and consider the simple regret at episode $n$, $r_n = J(\vpi^*, \vf^*) - J(\vpi_n, \vf^*)$. The following holds for all $n > 0$ with probability at least $1-\delta$
    \begin{equation*}
        r_n \leq  (2\lambda_n + \lambda^2_n)\Sigma_n(\vpi_n, \vf^*)
    \end{equation*}
    \label{lemma: simple regret sub Gaussian}
\end{lemma}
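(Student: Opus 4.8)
The plan is to mirror the proof of \cref{lemma: simple regret} from the Gaussian case almost verbatim, replacing the mean dynamics $\vmu_n$ throughout by the norm-constrained surrogate $\vf_n$ of \cref{eq: Lipschitz function optimization}, and invoking \cref{lemma: Main Lemma sub Gaussian} and the optimistic objective \cref{eq:optimistic plan sub Gaussian} in place of their Gaussian counterparts. Concretely, I would first use the optimism half of \cref{lemma: Main Lemma sub Gaussian}, namely $J(\vpi^*, \vf^*) \le J(\vpi^*, \vf_n) + \lambda_n \Sigma_n(\vpi^*, \vf_n)$, to pass from the true dynamics to the planning dynamics $\vf_n$. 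Since $\vpi_n$ is by \cref{eq:optimistic plan sub Gaussian} the maximizer over $\Pi$ of the optimistic objective $J(\cdot, \vf_n) + \lambda_n \Sigma_n(\cdot, \vf_n)$, this quantity is at most $J(\vpi_n, \vf_n) + \lambda_n \Sigma_n(\vpi_n, \vf_n)$.

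Subtracting $J(\vpi_n, \vf^*)$ and applying the error half of \cref{lemma: Main Lemma sub Gaussian} to the pair $(\vpi_n, \vf_n)$ gives $J(\vpi_n, \vf_n) - J(\vpi_n, \vf^*) \le \lambda_n \Sigma_n(\vpi_n, \vf^*)$, so that $r_n \le \lambda_n \Sigma_n(\vpi_n, \vf^*) + \lambda_n \Sigma_n(\vpi_n, \vf_n)$. Writing $\Sigma_n(\vpi_n, \vf_n) = \Sigma_n(\vpi_n, \vf^*) + \bigl(\Sigma_n(\vpi_n, \vf_n) - \Sigma_n(\vpi_n, \vf^*)\bigr)$ then reduces the claim to bounding the intrinsic discrepancy $\Sigma_n(\vpi_n, \vf_n) - \Sigma_n(\vpi_n, \vf^*)$ by $\lambda_n \Sigma_n(\vpi_n, \vf^*)$, after which $r_n \le (2\lambda_n + \lambda_n^2)\Sigma_n(\vpi_n, \vf^*)$ follows immediately.

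The one place where the sub-Gaussian argument genuinely differs from the Gaussian one is this last bound on the intrinsic discrepancy, and I expect it to be the main obstacle. In the Gaussian proof the constant $C_{\max} = \max\{R_{\max}, \sigma_{\max}\}$ already absorbs the intrinsic reward $\norm{\vsigma_n}$, so the same $\lambda_n$ applies with no extra work. Here, however, \cref{lemma: Main Lemma sub Gaussian} was stated with the \emph{reward}'s Lipschitz constant $L_r$, and to reuse its simulation-lemma-style estimate with $\norm{\vsigma_n}$ playing the role of an intrinsic reward I must first argue that $\norm{\vsigma_n(\cdot,\cdot)}$ is Lipschitz continuous with a constant $L_\sigma$ that does \emph{not} grow with $n$ — unlike the extrinsic reward, whose regularity is assumed in \cref{ass: Lipschitz}, the posterior standard deviation is data-dependent. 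I would establish uniform Lipschitzness of each $\sigma_{n,j}$ from the closed form \cref{eq:GPposteriors}, using the boundedness $\sigma_{n,j}^2(\vz) \le k(\vz,\vz) \le \sigma_{\max}$, a uniform bound on $\norm{(\mK_n + \sigma^2\mI)^{-1}}$, and the smoothness of the kernel $k$ (the same regularity that gives $\vf_n$ its $n$-independent Lipschitz constant $L_B$ via \citet{berkenkamp2019safe}); this produces an $n$-independent $L_\sigma$ for $\norm{\vsigma_n}$.

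With $\norm{\vsigma_n}$ identified as a bounded, Lipschitz intrinsic reward, the desired inequality $\Sigma_n(\vpi_n, \vf_n) - \Sigma_n(\vpi_n, \vf^*) \le \lambda_n \Sigma_n(\vpi_n, \vf^*)$ follows from the identical telescoping/Lipschitz estimate used to prove \cref{lemma: Main Lemma sub Gaussian} (note that the dynamics-dependent factor $\bar{L}_*^{T-1}T$ is unchanged, since it comes from $\vf_n$ and $\vf^*$ rather than from the reward). The only accounting needed is to define $\lambda_n$ — exactly as $C_{\max}$ is defined in the Gaussian case — with the Lipschitz constant taken to be $\max\{L_r, L_\sigma\}$ rather than $L_r$ alone; this harmless redefinition leaves the rate $\lambda_n \propto \beta_n$ intact. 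Collecting the terms then yields the stated bound $r_n \le (2\lambda_n + \lambda_n^2)\Sigma_n(\vpi_n, \vf^*)$.
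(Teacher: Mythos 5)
Your proof is correct and follows the paper's own proof of \cref{lemma: simple regret sub Gaussian} step for step: the optimism half of \cref{lemma: Main Lemma sub Gaussian}, the maximizer property from \cref{eq:optimistic plan sub Gaussian}, the error half of \cref{lemma: Main Lemma sub Gaussian}, and finally the decomposition $\Sigma_n(\vpi_n, \vf_n) = \Sigma_n(\vpi_n, \vf^*) + \bigl(\Sigma_n(\vpi_n, \vf_n) - \Sigma_n(\vpi_n, \vf^*)\bigr)$. Where you go beyond the paper is in recognizing that the final inequality, $\Sigma_n(\vpi_n, \vf_n) - \Sigma_n(\vpi_n, \vf^*) \le \lambda_n \Sigma_n(\vpi_n, \vf^*)$, needs an argument: the paper asserts it without comment, and its Gaussian counterpart (\cref{lemma: simple regret}) justifies the analogous step by boundedness and positivity of $\norm{\vsigma_n}$ alone, which suffices there because \cref{lemma: Main Lemma Gaussian noise} requires only a bounded reward. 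You are right that in the sub-Gaussian case, where \cref{lemma: Main Lemma sub Gaussian} is proved through Lipschitz continuity of the reward, treating $\norm{\vsigma_n}$ as an intrinsic reward requires it to be Lipschitz with a constant that does not grow with $n$, which \cref{ass: Lipschitz} does not supply; this is a genuine gap in the paper's write-up that your proposal correctly identifies and fills. One caution on how you propose to fill it: deriving $L_\sigma$ from the closed form \cref{eq:GPposteriors} via the uniform bound $\norm{(\mK_n + \sigma^2\mI)^{-1}} \le \sigma^{-2}$ and kernel smoothness is too crude, because $\norm{\vk_n(\vz)}$ grows like $\sqrt{n}$, so that route yields a Lipschitz constant growing with $n$. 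The clean argument is that the posterior standard deviation is $1$-Lipschitz, uniformly in $n$, with respect to the kernel metric $d_k(\vz, \vz') = \sqrt{k(\vz,\vz) - 2k(\vz,\vz') + k(\vz',\vz')}$, which for smooth kernels such as the RBF is dominated by a constant multiple of the Euclidean distance; alternatively, one can simply assume $\vsigma_n$ is $L_\sigma$-Lipschitz, as \citet{curi2020efficient} do. With either repair, your chain of inequalities --- and the paper's --- is complete.
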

\begin{proof}
    \begin{align*}
        r_n &= J(\vpi^*, \vf^*) - J(\vpi_n, \vf^*) \\
        &\le J(\vpi^*, \vf_n) + \lambda_n \Sigma_n(\vpi^*, \vf_n) - J(\vpi_n, \vf^*) \tag{\cref{lemma: Main Lemma sub Gaussian}} \\
        &\leq J(\vpi_n, \vf_n) + \lambda_n \Sigma_n(\vpi_n, \vf_n) - J(\vpi_n, \vf^*) \tag{\cref{eq:optimistic plan sub Gaussian}} \\
        &= J(\vpi_n, \vf_n)  - J(\vpi_n, \vf^*)  + \lambda_n \Sigma_n(\vpi_n, \vf_n) \\
        &\leq \lambda_n \Sigma_n(\vpi_n, \vf^*) + \lambda_n \Sigma_n(\vpi_n, \vf_n) \tag{\cref{lemma: Main Lemma sub Gaussian}} \\
        &= 2\lambda_n \Sigma_n(\vpi_n, \vf^*) + \lambda_n (\Sigma_n(\vpi_n, \vf_n) - \Sigma_n(\vpi_n, \vf^*)) \\
        &\leq (\lambda^2_n + 2\lambda_n) \Sigma_n(\vpi_n, \vf^*).
    \end{align*}
    \end{proof}

\begin{theorem}[Finite horizon setting sub-Gaussian case]
Let \cref{ass: Lipschitz} and \cref{ass:rkhs_func} hold. Then we have $\forall N > 0$ with probability at least $1-\delta$
\begin{equation*}
    R_N \leq \setO\left(\Gamma^{\sfrac{3}{2}}_{N}\sqrt{N}\right).
\end{equation*}
\label{thm: finite horizon regret sub Gaussian}
\end{theorem}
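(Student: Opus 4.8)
The plan is to replay, almost verbatim, the regret decomposition used in the proof of \cref{thm: finite horizon regret}, but now feeding in the sub-Gaussian versions of the two supporting lemmas that were just established. Concretely, I would start from $R_N = \sum_{n=1}^N r_n$ and bound each per-episode simple regret using \cref{lemma: simple regret sub Gaussian}, which gives $R_N \le \sum_{n=1}^N (2\lambda_n + \lambda^2_n)\, \Sigma_n(\vpi_n, \vf^*)$. Since $\beta_n$ is monotonically increasing in $n$ and every other factor in $\lambda_n = (1 + d_x)L_r(1 + L_{\vpi})\bar{L}_*^{T-1} T \beta_n$ is independent of $n$, the sequence $\lambda_n$ is increasing, so I can pull $\lambda_N$ out of the sum to obtain $R_N \le (2\lambda_N + \lambda^2_N)\sum_{n=1}^N \Sigma_n(\vpi_n, \vf^*)$.

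The next step is identical to the Gaussian case: I expand $\Sigma_n(\vpi_n, \vf^*) = \E_{\vf^*}\!\left[\sum_{t=0}^{T-1}\norm{\vsigma_n(\vx_t, \vpi_n(\vx_t))}\right]$ and apply Cauchy--Schwarz across the $NT$ summands to trade the sum of posterior standard deviations for $\sqrt{NT}$ times the square root of the sum of posterior variances. Invoking \citet[Lemma 17]{curi2020efficient} then bounds the accumulated variances by the maximum information gain, yielding $R_N \le C(2\lambda_N + \lambda^2_N)\, T\sqrt{N\,\Gamma_{NT}}$. Finally I substitute the scaling $\lambda_N \propto \beta_N$ together with $\beta_N \propto \sqrt{\Gamma_N}$ from \citet{chowdhury2017kernelized}, so that $\lambda^2_N \propto \Gamma_N$ dominates, and the bound collapses to $R_N \le \setO\!\left(\Gamma^{\sfrac{3}{2}}_N \sqrt{N}\right)$ after absorbing $T$ and $\Gamma_{NT} = \setO(\Gamma_N)$ (for fixed horizon $T$) into the constants.

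The main point---and the only place where this argument genuinely diverges from \citet{curi2020efficient}, who incur an exponential $\Gamma^T_N$ factor---is already discharged by the construction preceding \cref{lemma: Main Lemma sub Gaussian}. Because \ombrl plans with the function $\vf_n$ returned by the quadratic program in \cref{eq: Lipschitz function optimization} rather than optimistically over all of $\setM_n$, the relevant Lipschitz constant is the uniform $L_B$ guaranteed by $\norm{\vf_n}_k \le B$, hence $\bar{L}_*$ does not grow with $n$. Consequently the factor $\bar{L}_*^{T-1}$ sits inside $\lambda_n$ as an $n$-independent constant, which is exactly what makes $\lambda_n \propto \beta_n$ and therefore $\lambda^2_N \propto \Gamma_N$ rather than accumulating exponentially across the horizon.

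Thus the genuine obstacle is not in the proof of this theorem, which is a routine repetition of the Gaussian finite-horizon argument, but in having set up $\vf_n$ so that \cref{lemma: Main Lemma sub Gaussian} delivers a per-episode bound with $\lambda_n$ linear in $\beta_n$ and an $n$-uniform Lipschitz constant. Once that lemma is in hand, the only care needed is to confirm the monotonicity of $\lambda_n$ (so that pulling out $\lambda_N$ is valid) and to track the powers of $\Gamma_N$ through the final substitution; I expect no further technical difficulty.
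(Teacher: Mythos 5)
Your proposal is correct and takes essentially the same route as the paper: the paper's own proof consists of the single observation that \cref{lemma: simple regret sub Gaussian} gives the same per-episode bound $r_n \le (2\lambda_n + \lambda_n^2)\,\Sigma_n(\vpi_n, \vf^*)$ as in the Gaussian case, so the regret decomposition, Cauchy--Schwarz step, information-gain bound via \citet[Lemma 17]{curi2020efficient}, and the substitution $\lambda_N \propto \beta_N \propto \sqrt{\Gamma_N}$ from the proof of \cref{thm: finite horizon regret} carry over verbatim. Your closing remark is also accurate: the avoidance of the exponential $\Gamma_N^T$ factor of \citet{curi2020efficient} is secured by planning with $\vf_n$ from \cref{eq: Lipschitz function optimization} (giving the $n$-uniform Lipschitz constant $\bar{L}_{*}$ inside $\lambda_n$), which the paper likewise discharges in the lemmas preceding this theorem rather than in its proof.
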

\begin{proof}
    The proof is the same as for \cref{thm: finite horizon regret}, since in \cref{lemma: simple regret sub Gaussian} we show that also for the sub-Gaussian case, \ombrl has the same regret dependence w.r.t.~$\lambda_n$ and $\Sigma_n(\vpi_n, \vf^*)$.
\end{proof}
\clearpage
\section{Additional Experiments} \label{appendix: additional exps}
In this section, we provide additional experiments.
\begin{figure}[ht]
    \centering
    \includegraphics[width=\linewidth]{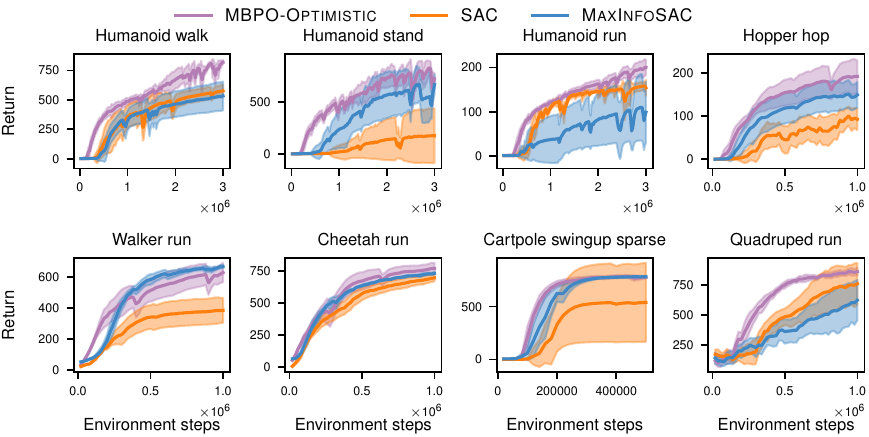}
    \caption{Comparison between \textsc{MBPO-Optimstic} and \textsc{MaxInfoSAC} and SAC. We observe that \textsc{MBPO-Optimstic}, being an MBRL algorithm, performs the best in terms of sample efficiency.}
    \label{fig:mbpo-mf}
\end{figure}

 \begin{figure*}[th]
    \centering
    \includegraphics[width=\linewidth]{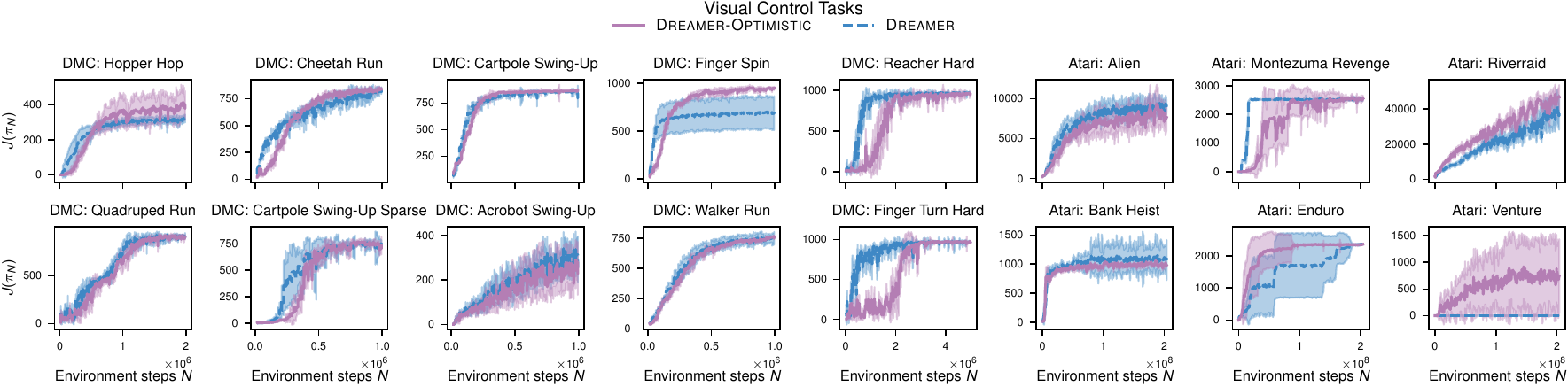}
    \caption{Learning curves for the visual control tasks from DMC and Atari using \textsc{Dreamer} as the base algorithm. \textsc{Dreamer-Optimistic} either performs on-par or better than \textsc{Dreamer} in all our experiments. Particularly, in the Venture task from the Atari benchmark, where \textsc{Dreamer} fails to obtain any rewards.}
            \label{fig:dreamer-visual-appendix}
\end{figure*}

\begin{figure}[ht]
    \centering
    \includegraphics[width=\linewidth]{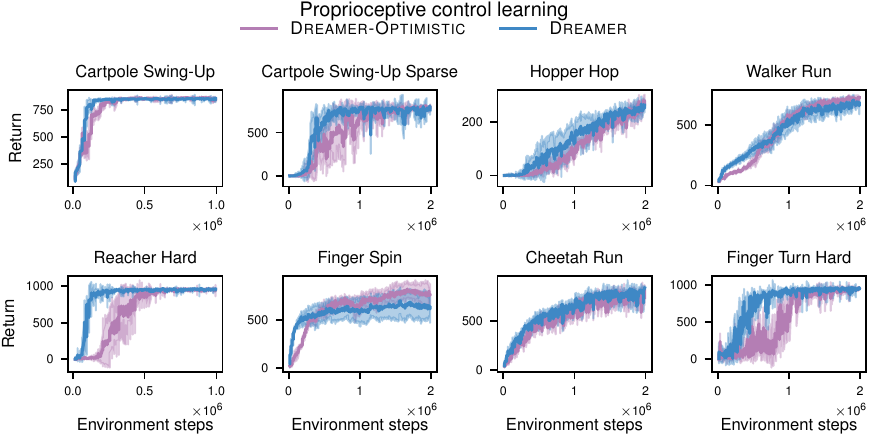}
    \caption{Experiments with \textsc{Dreamer-Optimistic} and \textsc{Dreamer} for proprioceptive tasks. \textsc{Dreamer-Optimistic} performs on par with \textsc{Dreamer}, obtaining slightly better performance on the Finger Spin task.}
    \label{fig:dreamer-proprio}
\end{figure}
\begin{figure}[ht]
    \centering
\includegraphics[width=0.5\textwidth]{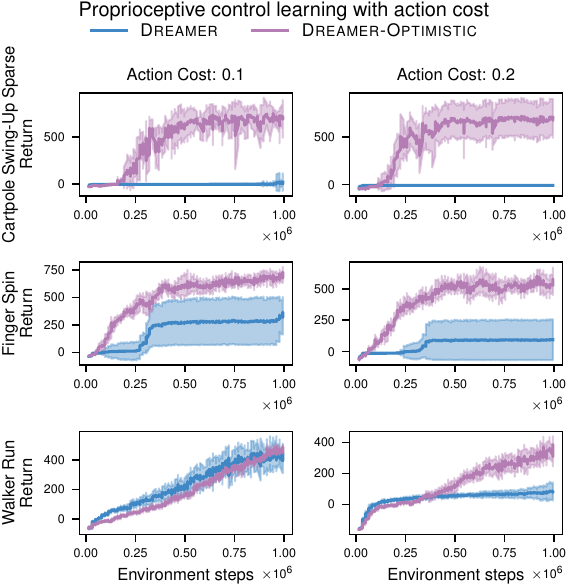}
    \caption{Experiments with \textsc{Dreamer-Optimistic} and \textsc{Dreamer} for proprioceptive tasks with action costs. \textsc{Dreamer} completely fails to solve the task, whereas \textsc{Dreamer-Optimistic} does not.}
    \label{fig:proprio-ac}
\end{figure}
\paragraph{Experiments with MBPO}
In \cref{fig:mbpo-mf} we compare \textsc{MBPO-Optimistic} with off-policy RL algorithms \textsc{MaxInfoSAC}~\citep{sukhija2024maxinforl} and SAC~\citep{haarnoja2018soft}.
From the figure, we conclude that \textsc{MBPO-Optimistic} performs the best in terms of sample-efficiency, particularly for the challenging/high-dimensional humanoid tasks. Moreover, between SAC and \textsc{MaxInfoSAC}, the latter achieves much better performance. We believe this is due to its intrinsic exploration reward. 
\paragraph{Experiments with \textsc{Dreamer}}
   
In \cref{fig:dreamer-visual-appendix} we compare \textsc{Dreamer-Optimistic} with \textsc{Dreamer} on additional environments. Overall, we observe that \textsc{Dreamer-Optimistic} performs either on par or better than \textsc{Dreamer}. However, for certain environments such as Reacher Hard or Finger Turn Hard, \textsc{Dreamer} is more sample-efficient. We believe this is because in these settings smaller values for $\lambda_n$ would suffice for exploration. However, we use a constant value for $\lambda_n$ across all environments and automatically update it using the approach proposed in \citet{sukhija2024maxinforl}. Investigating alternative strategies for $\lambda_n$, would generally benefit \ombrl methods. We think this is a promising direction for future work.

In \cref{fig:dreamer-proprio} and \cref{fig:proprio-ac} we compare \textsc{Dreamer-Optimistic} with  \textsc{Dreamer} on proprioceptive tasks. In most environments, \textsc{Dreamer-Optimistic} performs on par. It performs better in the Finger Spin environment. However, when action costs are introduced (\cref{fig:proprio-ac}), in line with our results in \cref{sec: experiments}, \textsc{Dreamer} fails to obtain any meaningful rewards.

\clearpage
\section{Experiment Details} \label{appendix: experiment_details}
In this section, we provide additional details for our experiments. 
\subsection{\textsc{MBPO-Optimistic}} For \textsc{MBPO-Optimistic}, we train an ensemble of forward dynamics models\footnote{For all tasks we use a $(256, 256)$ neural network architecture with $5$ ensembles, except for the humanoid and quadruped tasks where we use $(512, 512)$.}. We use the disagreement between the ensembles to quantify model epistemic uncertainty, similar to~\citet{pathak2019self, curi2020efficient, sukhija2024optimistic}. For selecting $\lambda_n$, we use the auto-tuning approach from \citet{sukhija2024maxinforl}, where the intrinsic reward weight is optimized by minimizing the following loss with stochastic gradient descent
\begin{equation}
   L(\lambda) =  \underset{\vx \sim \setD_{1:n}, \vu \sim \vpi_n, \bar{\vu} \sim \bar{\vpi}_n}{\E}\log(\lambda) (\vsigma_n(\vx, \vu) - \vsigma_n(\vx, \bar{\vu})).
   \label{eq: lambda opt}
\end{equation}  
Here $\bar{\vpi}_n$ is a target policy, which is updated using polyak updates of $\vpi_n$. This objective increases $\lambda$ when the policy is under exploring compared to the target policy. \citet{sukhija2024maxinforl} show that this strategy works across several model-free off-policy RL algorithms. 

Besides using the model to quantify disagreement, we generate additional data by adding the transitions predicted by our learned model. In particular, for every policy update, we sample a batch of transitions from the data buffer $(\vx, \vu, \vx') \sim \setD_{1:n}$, and add  $(\vx, \vu, \hat{\vx}')$, transitions predicted by our mean model $\vmu_n$, to the batch. This allows us to combine true rollouts with model generated rollouts, as proposed in \citet{janner2019trust}. Since we can generate additional data through our learned model, we can efficiently increase our update-to-data ratio (UTD). For all our experiments with MBPO, with use an UTD of $5$\footnote{We did not tune the UTD and chose $5$ to trade-off between computational cost and sample efficiency.}.

We use the same hyperparameters as \citet{sukhija2024maxinforl} for all our state-based experiments. 

\subsection{\textsc{Dreamer-Optimistic}}
We use \textsc{Dreamerv3} As the base model. For quantifying the model epistemic uncertainty, we use the same approach as \citet{sekar2020planning, mendonca2021discovering} and learn an ensemble of MLPs to model the latent dynamics\footnote{For all tasks we use a $(512, 512)$ neural network architecture with $5$ ensembles.}. The ensemble is only used for quantifying the model uncertainty/intrinsic reward. For the policy optimization, we use the \textsc{Dreamer} backbone, where the agent optimizes the policy using imagined rollouts.  For selecting $\lambda$, we also use the objective in \cref{eq: lambda opt}. We found adding a regularize term $\alpha * \abs{\lambda}$ to the objective worked better with \textsc{Dreamer}. We initialize $\lambda$ with $2$ and pick $\alpha = 0.001$. For the rest, we use the same hyperparemters as \textsc{Dreamer}\footnote{We use the $12$ million size model and the official \textsc{Dreamerv3} implementation (\url{https://github.com/danijar/dreamerv3/tree/main}).}.

\subsection{\textsc{SimFSVGD-Optimistic}}
\begin{figure}[H]
    \centering
\includegraphics[width=0.5\textwidth]{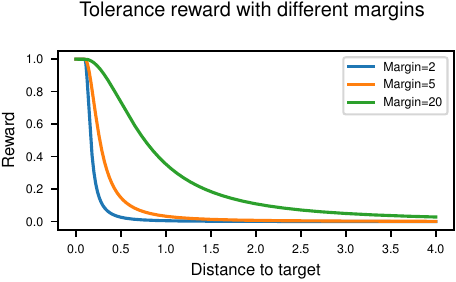}
    \caption{Tolerance reward function for different values of the margin. For larger margins, the agent receives rewards even if its further away from the target.}
    \label{fig:tolerance-reward}
\end{figure}
We use the same experiment setup, simulation prior, and hyperparameters as \citet{rothfuss2024bridging}\footnote{official implementation: \url{https://github.com/lasgroup/simulation_transfer}}. The reward function in \citet{rothfuss2024bridging} is based on the tolerance reward from \citet{tassa2018deepmind}. The tolerance function, gives higher rewards when the agent is close to a desired state, i.e., in case of the RC car the target position. The ``closesness'' is quantified using a margin parameter for the reward function. In \cref{fig:tolerance-reward} we plot the reward for different margin parameters. As we decrease the margin, the reward becomes sparser. \citet{rothfuss2024bridging} use a margin of $20$. In our simulation experiments, we show that \textsc{SimFSVGD} performs worse than \textsc{SimFSVGD-Optimistic} for smaller margins. For our hardware experiment, we use a margin of $5$, for which \textsc{SimFSVGD} fails to learn. For $\lambda_n$ we found that a linearly decaying schedule worked the best. Therefore, we linearly interpolated from $\lambda_0 = 0.5$ and $\lambda_{10} = 0$. After the tenth episode, the agent greedily maximized the extrinsic reward. 

\subsection{GP experiments}
For our GP experiments, we use the RBF kernel. The kernel parameters are updated online using maximum likelihood estimation~\citep{rasmussen2005gp}. For all the experiments, we use $\lambda_n = 10$ and for planning the iCEM optimizer~\citep{pinneri2021sample}. We use the same hyperparameters as \citet{sukhija2024neorl}\footnote{official implementation: \url{https://github.com/lasgroup/opax}}.

\subsection{Computational Costs}
\label{subsec:compute cost}
\begin{adjustbox}{max width=\linewidth}
\begin{threeparttable}
\centering
    \caption{Computation cost comparison for \ombrl with different base algorithms.}
    \label{tab:compute cost}
\begin{tabular}{l|l}
Algorithm & Training time \\ \hline \ \\
HUCRL (GPs)       & 90 +/- 3 min (Pendulum), \\
& 31.5 +/ 2.5 min (MountainCar)                     \\ \\
\ombrl (GPs)   & 30 +/- 0.6 min (Pendulum), \\ 
& 13.8 +/ 0.25 min (MountainCar)                               \\  \\  
\textsc{MBPO-Mean} & 9.6 +/- 0.2 min     \\
 (Time per 100k steps, 1 ensemble, GPU: NVIDIA GeForce RTX 2080 Ti) \\ \\
\textsc{MBPO-Optimistic} & 13.7 +/- 0.35 min                            \\   (Time per 100k steps, 5 ensembles, GPU: NVIDIA GeForce RTX 2080 Ti)   \\ \\  \textsc{Dreamer} & 42.24 +/- 0.95 min     \\ (Time per 100k steps, GPU: NVIDIA GeForce RTX 4090) \\ \\
\textsc{Dreamer-Optimistic} & 46.32 +/- 0.34 min  \\ (Time per 100k steps, 5 ensembles, GPU: NVIDIA GeForce RTX 4090)               
\end{tabular}
\end{threeparttable}
\end{adjustbox}

\newpage
\end{document}